\documentclass[11pt]{article}
\usepackage{fullpage}

\usepackage{xr-hyper}
\RequirePackage{amsthm,amsmath,amsfonts,amssymb}
\RequirePackage[authoryear]{natbib}%% uncomment this for author-year citations
\RequirePackage[colorlinks,citecolor=blue,urlcolor=blue,filecolor=red]{hyperref}
\RequirePackage{graphicx}
\usepackage{enumitem}
\usepackage{soul}
\usepackage{booktabs}
\usepackage{bm}
\usepackage{algorithm,algorithmic}
\usepackage{booktabs,multirow}
\usepackage{adjustbox}

\usepackage{fullpage}
\usepackage[dvipsnames]{xcolor}
\usepackage{booktabs}
\usepackage{array}

\newtheorem{theorem}{Theorem}
\newtheorem{lemma}[theorem]{Lemma}
\newtheorem{prop}[theorem]{Proposition}
\newtheorem{coro}[theorem]{Corollary}

\newtheorem{assumption}{Assumption}

\usepackage{tcolorbox}
\usepackage[toc,page]{appendix}

\title{Demystifying Group Relative Policy Optimization: Its Policy Gradient is a U-Statistic}
\author{
Hongyi Zhou$^{1,*}$ \and
Kai Ye$^{2,*}$ \and
Erhan Xu$^{2}$ \and
Jin Zhu$^{3}$ \and
Ying Yang$^{4}$ \and
Shijin Gong$^{5,\dagger}$ \and
Chengchun Shi$^{2,\dagger}$
}

\date{}

\begin{document}
\maketitle
\begingroup
\renewcommand{\thefootnote}{\fnsymbol{footnote}}
\footnotetext[1]{Equal contribution.}
\footnotetext[2]{Corresponding authors.}
\endgroup

% numeric affiliation footnotes
\begingroup
\renewcommand{\thefootnote}{\arabic{footnote}}
\footnotetext[1]{Department of Mathematics, Tsinghua University; Email: \texttt{zhou-hy21@mails.tsinghua.edu.cn}.}
\footnotetext[2]{Department of Statistics, London School of Economics and Political Science; Emails: \texttt{K.Ye1@lse.ac.uk}, \texttt{E.Xu2@lse.ac.uk}, \texttt{c.shi7@lse.ac.uk}.}
\footnotetext[3]{School of Mathematics, University of Birmingham; Email: \texttt{j.zhu.7@bham.ac.uk}.}
\footnotetext[4]{Department of Statistics and Data Science, Tsinghua University; Email: \texttt{yangying@tsinghua.edu.cn}.}
\footnotetext[5]{School of Management, University of Science and Technology of China; Email: \texttt{shijin49@mail.ustc.edu.cn}.}
\endgroup

\begin{abstract}
Group relative policy optimization (GRPO), a core methodological component of DeepSeekMath and DeepSeek-R1, has emerged as a cornerstone for scaling reasoning capabilities of large language models. Despite its widespread adoption and the proliferation of follow-up works, the theoretical properties of GRPO remain less studied. This paper provides a unified framework to understand GRPO through the lens of classical U-statistics. We demonstrate that the GRPO policy gradient is inherently a U-statistic, allowing us to characterize its mean squared error (MSE), derive the finite-sample error bound and asymptotic distribution of the suboptimality gap for its learned policy. Our findings reveal that GRPO is asymptotically equivalent to an oracle policy gradient algorithm -- one with access to a value function that quantifies the goodness of its learning policy at each training iteration -- and achieves asymptotically optimal performance within a broad class of policy gradient algorithms. Furthermore, we establish a universal scaling law that offers principled guidance for selecting the optimal group size. Empirical experiments further validate our theoretical findings, demonstrating that the optimal group size is universal, and verify the oracle property of GRPO.
\end{abstract}

\section{Introduction}
Since the birth of ChatGPT in 2022, large language models (LLMs) have been increasingly integrated into our work and everyday life. These models are evolving at an extremely fast pace and can now perform a broad range of tasks at or beyond human level, ushering in an era of rapid technological advancement across scientific, industrial, and societal domains. One of the underlying technologies that enhance the capabilities of these LLMs is \textit{reasoning}. The idea is straightforward: because LLMs generate human language, they can be guided to perform multi-step reasoning in a manner similar to humans. Initially, this is often achieved through some simple \textit{magical prompts} \citep{wei2022chain} -- for example, “Let us think step by step” -- which encourage the model to decompose complex questions into intermediate reasoning steps and produce more accurate solutions; see Figure \ref{fig:reasoning_pipeline} for an illustration.

\begin{figure}[t]
    \centering
    \includegraphics[width=\textwidth]{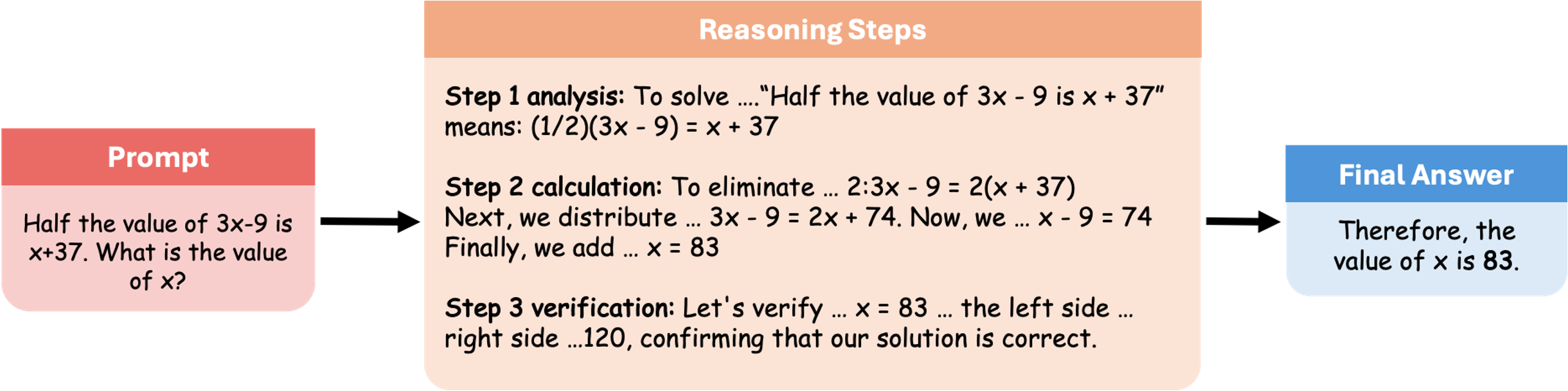}
    \vspace{-2em}
    \caption{An illustration of LLM reasoning. The example shows a prompt and the corresponding model output, consisting of a reasoning trace and a final solution.}
    \label{fig:reasoning_pipeline}
\end{figure} 

Early progress in LLM reasoning was driven largely by inference-time techniques that avoid retraining the language model, focusing on eliciting or searching for more effective intermediate reasoning steps. 
In parallel, \citet{ouyang2022training} established a scalable reinforcement learning with human feedback (RLHF) pipeline for retraining -- more precisely, post-training -- LLMs to align their outputs with human values. Its main idea is to (i) collect multiple responses for each prompt and obtain human preference over these responses; (ii) learn a reward model from the resulting preference data \citep{christiano2017deep}; and (iii) apply reinforcement learning (RL), specifically the proximal policy optimization algorithm \citep[PPO,][]{schulman2017proximal} to fine-tune the LLM parameters so as to maximize the cumulative estimated reward. This work provided a practical demonstration that RL, when combined with human preference feedback, can reliably steer LLM behavior at scale. 

The RLHF pipeline is applicable to LLM reasoning. However, it introduces two challenges. First, reasoning tasks often produce long intermediate trajectories, making human supervision for labeling responses time-consuming. This challenge can be addressed by \textit{reinforcement learning with verifiable rewards} \citep[RLVR,][]{lambert2025tulu}, which similarly employs RL for post-training but replaces the reward function learned from subjective human preference with objective verifiers in applications such as mathematics problems, where solutions are unique, or programming tasks, where the generated code can be executed to verify its correctness. The second challenge is more technical: PPO requires to learn a critic network that quantifies the quality of the model at each step to reduce the variance of the stochastic gradient. Nonetheless, estimating and storing such a network in reasoning tasks is computationally expensive. 

Group relative policy optimization \citep[GRPO,][]{shao2024deepseekmath} represents a major milestone for RL algorithms tailored to LLM reasoning. It provides a highly effective solution to the second challenge by eliminating the critic network entirely and instead sampling multiple outputs for each prompt, using their group average as a proxy for the critic; refer to Figure \ref{fig:grpo_pipeline} for an overview of the GRPO pipeline, and Section \ref{sec:preliminary} for details. This approach leads to the development of DeepSeek-R1, a prominent large reasoning model at the time, whose post-training requires only about 147K H800 GPU-hours -- an order of magnitude lower than many contemporary large reasoning models, including OpenAI’s o1 series \citep{jaech2024openai}. 

The methodology was further formalized in a paper published in Nature \citep{guo2025deepseek}, marking the first peer-reviewed LLM article in the journal, and was subsequently deployed by various open-source LLMs. These efforts established GRPO as a foundational RLVR algorithm for LLM reasoning, with numerous follow-up methods built upon it shortly after its introduction (see Section \ref{sec:RLVR} for a review). Beyond LLM reasoning, GRPO has also been applied as a general-purpose RL engine in other domains, including agent settings, where the goal extends beyond chatbots to enabling the model to use external tools or perform more complex tasks \citep[e.g.,][]{qian2025toolrl,ding2026treegrpo}. 
 
\begin{figure}[t]
    \centering
    \includegraphics[width=\textwidth]{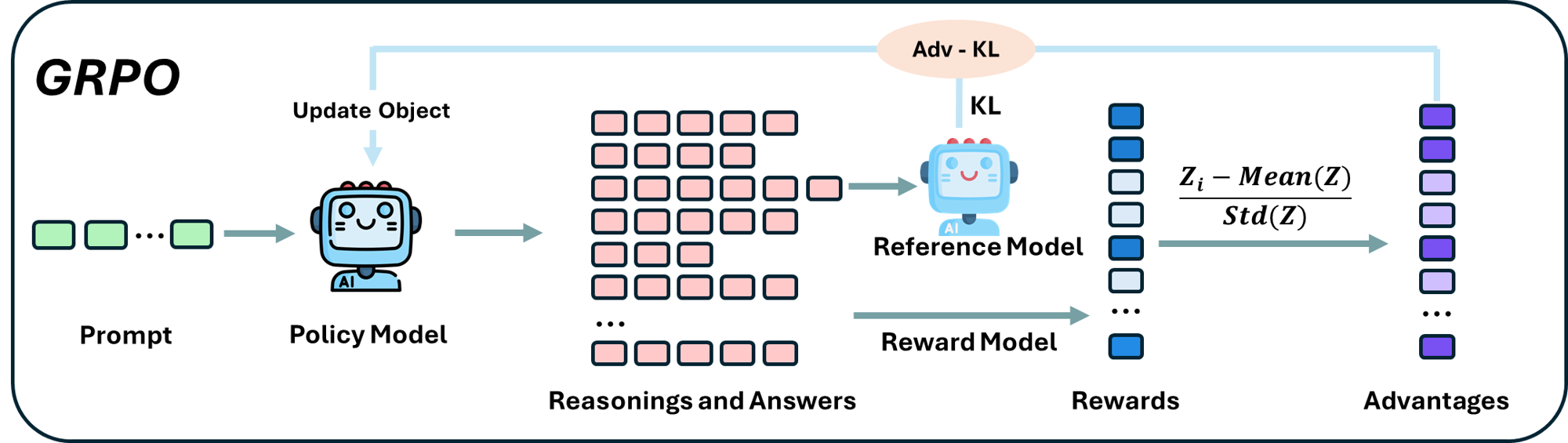}
    \caption{Overview of the GRPO pipeline. For each prompt, GRPO samples multiple reasoning traces to generate outputs, which are then evaluated by a reward model to measure their quality. These rewards are compared against the group mean and standardized to compute the advantage function, which is used to update the policy model, subject to KL regularization with respect to a reference model.}
    \label{fig:grpo_pipeline}
\end{figure}

All of the aforementioned developments highlight the practical appeal of GRPO, but they also reveal a theoretical gap, as formal analyses remain limited in the literature. In particular: 
\begin{enumerate}[leftmargin=*]
    \item[Q1.]\textit{Why is GRPO so effective?}
    \item[Q2.]\textit{What is the rationale for using the group mean to approximate the critic network?}
    \item[Q3.]\textit{Can we provide finite-sample or asymptotic analyses regarding its convergence?}
    \item[Q4.]\textit{How many outputs shall we sample per prompt?}
\end{enumerate}
 This paper addresses these questions by demystifying GRPO from a statistical perspective. Our key observation is that GRPO is deeply connected to \textit{U-statistics} \citep{hoeffding1948class}, a connection that has not been explicitly recognized in prior works. Building on this observation, we conduct a comprehensive analysis of GRPO, covering both \textit{finite-sample} and \textit{asymptotic} properties, examining both the original algorithm and its variants, and analyzing both the \textit{policy gradient estimator} used to update model parameters at each iteration and the \textit{suboptimality gap}, which measures the difference between the learned policy and the optimal policy; see Figure \ref{fig:theory_roadmap} for a visualization of our theoretical framework. 

Our theoretical contributions are as follows:
\begin{enumerate}[leftmargin=*]
    \item We establish the first connection between GRPO and U-statistics by showing that the GRPO policy gradient is inherently a U-statistic (Lemma \ref{lem:gradientUstat}). This addresses Q2 by providing a principled explanation for using the group mean to approximate the critic network through classical U-statistics theory.  
    \item To address Q3, we provide finite-sample analyses that characterize the mean squared error (MSE) of the GRPO policy gradient (Theorem \ref{thm:gradMSE} \& Proposition \ref{prop:MSE}) and derive error bounds on its suboptimality gap (Lemma \ref{lemma:policynonasympotic}). We further establish parameter consistency and derive the asymptotic distribution of the suboptimality gap without requiring parameter identifiability (Theorem \ref{thm:policyasympotic}). This result is novel in two respects: (i) existing literature primarily focuses on error bounds for the suboptimality gap, which characterize its order but are not as accurate as its distribution; and (ii) classical asymptotic analyses rely on parameter identifiability, an assumption clearly violated in overparameterized LLMs.

\item Our finite-sample and asymptotic analyses lead to two desirable properties of GRPO: (i) the \textit{oracle property} (Corollaries \ref{coro:gradoracle} \& \ref{coro:policyoracle}), whereby GRPO is asymptotically equivalent to an oracle algorithm with access to the true critic network; and (ii) \textit{optimality} (Corollaries \ref{coro:gradoptimal} \& \ref{coro:policyoptimal}), whereby GRPO asymptotically minimizes both the MSE and the suboptimality gap among a broad class of RL algorithms. These theoretical findings are further supported by empirical evidence (Figure \ref{fig:toy example}), which addresses Q1 and explains GRPO’s effectiveness.

\item Finally, to address Q4, we derive a \textit{scaling law} that delineates how GRPO’s performance depends on the number of sampled outputs per group, and identifies the optimal group size that maximizes its performance (Theorem \ref{thm:policynonasympotic}). Notably, this optimal choice depends only on the training data and model architecture, and is independent of other factors such as the training budget or number of iterations. This universality makes our scaling law particularly appealing, as it does not require retuning when these factors change. Its universality is further validated empirically (Table \ref{tab:batch_group_transposed_4dp} \& Figure \ref{fig:gsms8k_choice_g_steps_1024}).
\end{enumerate}

\begin{figure}[t]
    \centering
    \includegraphics[width=\textwidth]{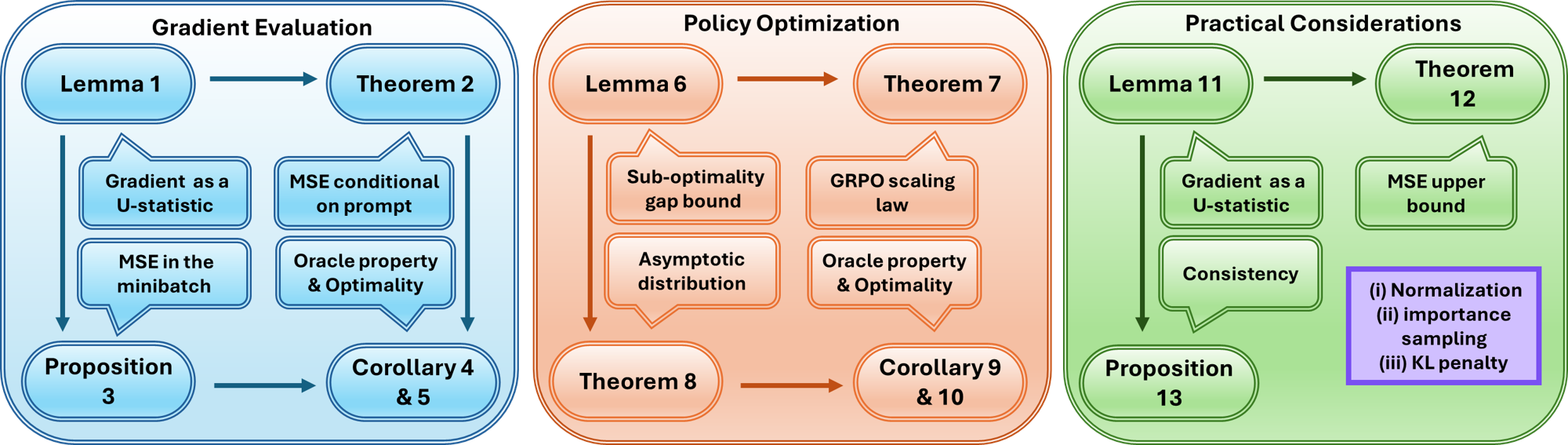}
    \vspace{-2em}
    \caption{Roadmap of our theoretical results.}
    \label{fig:theory_roadmap}
\end{figure}

\section{Related works} Our work connects two modern areas in artificial intelligence -- reinforcement learning (Section \ref{subsec:RL}), and its emerging application to LLM reasoning through RLVR (Section \ref{sec:RLVR}) -- with the classical theory of U-statistics (Section \ref{subsec:Ustat}).
\subsection{Reinforcement learning}\label{subsec:RL}
The literature on RL is vast. Existing algorithms are broadly distinguished as \textit{planning} or \textit{learning}, based on whether the data generating process is known \citep[e.g.,][Chapter 8]{sutton2018reinforcement}. Within learning, \textit{model-based} approaches explicitly estimate the MDP model \citep[e.g.,][]{jiang2024note}, while \textit{model-free} methods do not. The latter can be further divided into \textit{value-based} algorithms that learn a value function to measure the goodness of a policy and derive the optimal policy by maximizing this value, and \textit{policy-based} algorithms that directly searches the optimal policy over a restricted policy class. Over time, RL research has evolved across four phases, summarized below in chronological order:
\begin{enumerate}[leftmargin=*]
    \item \textbf{Classical RL}: Early works studied non-deep-learning algorithms, in the form of tabular methods that store estimates in lookup tables, or using classical ML models for function approximation. Two foundational examples are tabular Q-learning \citep{watkins1992q} and fitted Q-iteration \citep{ernst2005tree}. Among these algorithms, GRPO is closely related to policy-based algorithms such as REINFORCE \citep{williams1992simple} and actor-critic \citep{konda1999actor}. As demonstrated later, GRPO is in essence a policy-based algorithm adapted for LLM reasoning. 
    \item \textbf{Deep RL}: Spurred by the deep learning revolution of the 2010s, a line of advanced deep RL algorithms emerged. This era was largely catalyzed by the success of the deep Q-network in mastering video games \citep{mnih2015human} and the development of AlphaGo for the game of Go \citep{silver2016mastering}. Since then, deep RL has become a cornerstone of modern AI research both methodologically \citep[e.g.,][]{mnih2016asynchronous, van2016deep, dabney2018distributional, zhou2020non, chen2021decision} and theoretically \citep[e.g.,][]{fan2020theoretical, feng2023over, shen2025deep, sun2025intrinsic}. Of particular relevance to GRPO are trust region policy optimization \citep[TRPO,][]{schulman2015trust} and its successor, PPO. The latter has been a prominent policy-based algorithm widely applied across various domains, including robotics \citep{andrychowicz2020learning} and LLM fine-tuning \citep{ouyang2022training}.
    \item \textbf{Offline RL}: In high-stakes applications where safety concerns make online exploration prohibitive, it is more practical to employ offline RL that learns exclusively from static, historical datasets \citep{levine2020offline}. The core principle of offline RL is to apply the pessimistic principle \citep[e.g.,][]{jin2021pessimism, rashidinejad2021bridging} for conservative policy learning. While this principle traces back to the seminal works of \citet{swaminathan2015self, swaminathan2015batch}, the paradigm saw a resurgence in the early 2020s \citep[e.g.,][]{kumar2019stabilizing, wu2019behavior, yu2020mopo, xie2021bellman, ueharapessimistic2022}. Parallel to offline policy learning, off-policy evaluation seeks to evaluate the impact of adopting a target policy based on the historical data generated by a different policy \citep[e.g.,][]{thomas2015high,jiang2016doubly,thomas2016data,liu2018breaking,xie2019towards,kallus2022efficiently}; see \citet{uehara2022review} for a recent review.
    \item \textbf{RLHF}: Following the seminal work by \citet{ouyang2022training}, the field has seen an unprecedented proliferation of literature dedicated to RLHF \citep{christiano2017deep}, with the goal of aligning the output of LLMs with human preferences through RL \citep[e.g.,][]{munos2023nash,rafailov2023direct,wu2024pairwise,wu2024sppo,zeng2024tokenlevel}. Theoretically, existing works have explored both the asymptotic distribution of the parameter estimates \citep{liu2024dual} and finite-sample error bounds for the sub-optimality gap \citep[e.g.,][]{chowdhury2024provablyrobustdpoaligning,zhong2024provable,aminian2025kl,ye2025robust,xu2025doubly} and regret \citep{zhang2025iterativenashpolicyoptimization}. Despite these analyses, GRPO -- a key driver of recent breakthroughs in LLM reasoning -- remains largely under-theorized.
\end{enumerate}
Finally, RL is closely related to two branches of research in statistics, primarily motivated by healthcare applications; see \citet{chakraborty2013statistical, kosorok2019precision, tsiatis2019dynamic, li2023optimal, shi2025statisticalinferencereinforcementlearning,ge2025reviewcausaldecisionmaking,gazi2026statisticalreinforcementlearningreal} for reviews. (i) Early works develop RL algorithms to learn optimal dynamic treatment regimes (DTRs), which are individualized strategies that tailor medical interventions to a patient’s unique characteristics and evolving clinical state. Some representative algorithms include Q-learning \citep{qian2011performance,song2015penalized}, A-learning \citep{murphy2003optimal,robins2004optimal,shi2018high} and policy-based methods \citep{zhang2013robust,zhao2015new}. These studies focus on short-horizon settings, where treatment decisions are made at a single stage or over a small, finite number of stages. (ii) More recently, the literature has expanded to study MDPs under long-horizon settings \citep[e.g.,][]{ertefaie2018constructing,luckett2020estimating,liao2022batch,chen2024reinforcement,li2024settling,shi2024statistically,shi2024value,zhou2024estimating,ma2025sequential,jin2025policy,li2025reinforcement,zhong2025risksensitive} as well as RLHF \citep[e.g.,][]{lee2024lowrankcontextualreinforcementlearning,liu2025uncertaintyquantificationlargelanguage,liu2025statistical,lu2025contextual,xiao2025algorithmic}.

\subsection{Reinforcement learning from verifiable rewards}\label{sec:RLVR}
RLVR, an LLM post-training strategy introduced by \citet{lambert2025tulu}, directly optimizes LLMs against \emph{verifiable} outcomes. Unlike RLHF, which requires to learn a reward function from subjective human preferences across multiple candidate responses, RLVR leverages objective feedback signals -- for example, by verifying whether a model’s answer matches a ground-truth mathematical solution or by executing the model's generated code in programming tasks. The original algorithm in \citet{lambert2025tulu} relied on PPO for policy learning, which learns a separate critic model that evaluates the quality of the learning policy.

GRPO, a major breakthrough in RLVR, drastically scales the reasoning capacities of existing LLMs. Its key ingredient lies in completely eliminating the critic model, sampling multiple reasoning traces, and using their average reward as a proxy for the critic (see Section \ref{sec:preliminary} for details). 
This approach, originally introduced in DeepSeekMath \citep{shao2024deepseekmath}, was largely popularized by DeepSeek-R1 \citep{guo2025deepseek} and later validated by a series of pioneering open-source reasoning models like Qwen2.5 \citep{qwen2025qwen25technicalreport}. Together, these works have sparked a surge of follow-up RLVR algorithms \citep[see, e.g.,][for a recent review]{zhang2025system}, which can be broadly categorized into three types:
\begin{enumerate}[leftmargin=*]
    \item The first line of works refined the GRPO policy gradient estimator by (i) modifying \citep{hao2025policy,xiao2025bnpo,zeng2025shrinking} or replacing \citep{li2023remax,ahmadian2024backtobasics,hu2025reinforce++} the baseline term (the empirical group mean in GRPO); (ii) revising the importance sampling ratio used in GRPO \citep{zheng2025group,pang2025theory} or removing it altogether \citep{chu2025gpg}; and (iii) applying different normalizations to the reward  \citep{xiong2025minimalist,liu2025understanding,xiao2025bnpo}. In summary, these algorithms preserve the critic-free GRPO framework while modifying how policy gradient estimators are computed or normalized.
     
    \item The second line of works considered different optimization objectives, including entropy-regularized objectives that encourage exploration to prevent the model from entropy collapse \citep{zhang2025empo,cheng2025reasoning,chen2025seedgrpo}, length- or difficulty-aware objectives that balance correctness with concise reasoning \citep{zhang2025grpo_lead,dai2025grpolambda}, risk-sensitive targets \citep{ren2026riskpo} and objectives that guide the model to adaptively select reasoning formats \citep{wu2025arm}.
    
    \item The third line of works focused on improving GRPO’s training efficiency, either statistically or computationally \citep{dai2025cde,yu2025dapo,lin2025cppo,xu2025pods,zhang2025srpo}. For example, \citet{yan2025learning} leveraged off-policy data from stronger models to provide informative reasoning trajectories and enhance learning. \citet{li2025repo} and \citet{zhan2026exgrpo} reused previously generated trajectories instead of discarding all old samples after each policy update. \citet{zheng2025parallel} explored multiple reasoning paths concurrently to accelerate reasoning speed, while \citet{xu2026singlestream} generated only a single sampled output per prompt to reduce computational cost.  
\end{enumerate}

Despite the popularity in developing practical RLVR algorithms, their theoretical foundations -- and those of GRPO specifically -- remain largely unexplored. Among those available, \citet{liu2025understanding} and \citet{yang2026your} studied the biases of GRPO's policy gradient and advantage function (see Section \ref{sec:preliminary} for formal definitions). \citet{pang2025theory} upper bounded the expected squared $\ell_2$-norm of the gradient for GRPO's KL-regularized objective. \citet{davis2025objective} and \citet{vojnovic2025alignment} characterized GRPO's objective function. In particular, \citet{davis2025objective} proved GRPO optimizes an $\arcsin$ transformation of the expected reward, rather than the expected reward in its original form. More recently, \citet{yao2026grouprelative} discussed the off-policy nature of GRPO. However, these results do not deliver a unified finite-sample and asymptotic characterization of GRPO, nor do they establish a connection to U-statistics -- both of which are precisely what our theory establishes.

\subsection{U-statistics}\label{subsec:Ustat}
Our work builds upon the classical theory of U-statistics, a class of estimators introduced by \citet{hoeffding1948class} that generalize the sample mean to averages over functions of multiple random variables or vectors. Let $\{X_i\}_{i=1}^n$ denote a sequence of independent and identically distributed (i.i.d.) random vectors. A U-statistic of order $m$ is defined by a symmetric kernel function $h(X_1, \dots, X_m)$. For instance, a second-order U-statistic with a bivariate kernel $h$ is defined as:
\begin{equation}\label{eqn:Ustatistic}
    U=\frac{1}{n(n-1)}\sum_{i\neq j} h(X_i,X_j).
\end{equation}
Its statistical property is largely characterized through the Hoeffding decomposition, which expresses the U-statistic as a sum of several orthogonal components:
\begin{equation}\label{eqn:Udecompose}
    U=h_0+\underbrace{\frac{2}{n}\sum_{i=1}^n [h_1(X_i)-h_0]}_{\textrm{first-order term}}+\underbrace{\frac{1}{n(n-1)}\sum_{i\neq j} [h(X_i,X_j)-h_1(X_i)-h_1(X_j)+h_0]}_{\textrm{second-order term}},
\end{equation}
where $h_0 = \mathbb{E}[h(X_1, X_2)]$ is the expectation of the kernel and $h_1(x) = \mathbb{E}[h(x, X_2)]$ denotes the first-order projection. 

In the non-degenerate case where $\text{Var}(h_1(X_1)) > 0$, the first-order term dominates the variance and fluctuates at an order of $O_p(n^{-1/2})$, whereas the second-order term decays at a faster rate of $O_p(n^{-1})$. In essence, this decomposition establishes the asymptotic equivalence between a U-statistic and a simple average of i.i.d. random variables (captured by the first-order term), allowing classical limit theorems for sample averages to be directly applied to U-statistics. As we will demonstrate in Section \ref{sec:theory}, this decomposition is instrumental in characterizing the behavior of GRPO's policy gradient and its estimated optimal policy.

Owing to their favorable theoretical properties, U-statistics are widely employed across disciplines, including their extension to U-processes in probability theory \citep{Nolan1987}, their use in semiparametric statistics via higher-order influence functions \citep{liu2026semiparametricefficientempiricalhigher}, their role in econometrics through the maximum rank correlation estimator \citep{han1987, sherman1993}, and their application in estimating optimal individualized treatment regimes via concordance-assisted learning \citep{fan2017concordance, liang2018sparse}.

\section{Preliminaries}\label{sec:preliminary}
In this section, we first introduce the sequential decision making problem in RL and formulates LLM reasoning as a sequential decision making problem (Section \ref{sec:setup}). We next present a meta-algorithm (see Algorithm \ref{alg1} for the pseudocode) that unifies a range of policy-based approaches, ranging from the classical REINFORCE algorithm to more advanced advantage actor-critic methods and variants of GRPO (Section~\ref{sec:meta}).
\subsection{Problem setup}\label{sec:setup}
RL is a powerful machine learning framework for solving sequential decision making problems. In this framework, an agent repeatedly interacts with an environment. At each time step, the agent receives an \textit{observation} which represents the environment's current state, and selects an \textit{action} based on this information. In response, the environment provides feedback in the form of a \textit{reward} and transitions to a new state, yielding the \textit{next observation}. The agent's objective is to utilize the collected observation-action-reward tuples to learn an optimal \textit{policy} -- a mapping from its observed data history to the space of actions -- that maximizes the expected cumulative reward in the long run. 

In natural language processing, text is represented as a sequence of discrete units known as tokens. Let $\mathcal{V}$ denote the vocabulary consisting of all such tokens. LLMs produce text via autoregressive next token prediction. Specifically, given an input prompt $X$ (e.g., a user query) represented as a sequence of tokens, the model generates the first token $Y_1 \in \mathcal{V}$ sampled from a conditional distribution $\pi_{\theta}(\bullet|X) = \mathbb{P}(Y_1 = \bullet|X)$, parameterized by $\theta\in \Theta$. Next, the second token $Y_2$ is produced according to $\pi_{\theta}(\bullet|X, Y_1)$. At each time step $t$, the model generates $Y_t$ according to $\pi_{\theta}(\bullet|X, Y_{<t})$, where $Y_{<t} = (Y_1, \dots, Y_{t-1})$ represents the previously generated prefix. This iterative procedure continues until the model generates a complete output $Y = (Y_1, \dots, Y_T)$, terminating with an ``end-of-sequence'' token $Y_T$. To handle conditioning sets of varying lengths, a Transformer architecture \citep{vaswani2017attention} is commonly employed to parameterize $\pi_{\theta}$. For reasoning tasks, the output $Y$ contains both a reasoning trace and a final solution (denoted by $S$); see Figure~\ref{fig:reasoning_pipeline} for an illustration.

The above procedure can be naturally formulated as a sequential decision making problem. Specifically, the observation is fixed as the input prompt $X$, the action at time step $t$ corresponds to the generated token $Y_t$, and the policy is given by $\pi_{\theta}$. Unlike standard RL settings where rewards may be provided at each step, the reward here is sparse and observed only upon completion of the entire sequence. In GRPO, the terminal reward, denoted by $Z$, evaluates both the format consistency of the response (i.e., its adherence to a pre-specified template) and the accuracy of the final solution $S$. All intermediate rewards are set to zero. Our goal is to optimize the policy $\pi_{\theta}$, or equivalently, the parameter $\theta$, to maximize the expected reward of the generated output $\mathbb{E}^{\pi_{\theta}}(Z)$. 

Alternatively, this problem can be viewed from a different perspective by treating the entire output sequence $Y$ as a single action. This effectively collapses the time horizon $T$ to $1$. Consequently, the problem is recast as a bandit problem \citep[e.g.,][]{lai1985asymptotically}, where the data is summarized by the context-action-reward $(X, Y, Z)$ tuples. 

\subsection{A meta-algorithm}\label{sec:meta}
Existing RLVR algorithms are policy-based algorithms, grounded in the policy gradient theorem \citep{sutton1999policy}. This theorem is instrumental as it provides a closed-form expression for the gradient of the expected reward with respect to the policy parameters, which enables the application of gradient-based algorithms for parameter optimization. In our specific formulation, the gradient of the expected reward can be expressed as:
\begin{equation}\label{eqn:policygradient}
    g(\theta):=\nabla_{\theta} [\mathbb{E}^{\pi_{\theta}}(Z)]= \mathbb{E} \Big[\sum_t \nabla_{\theta} \log \pi_{\theta}(Y_t|X,Y_{<t})Z\Big]. 
\end{equation}
Equation \eqref{eqn:policygradient}  gives rise to the classical REINFORCE algorithm \citep{williams1992simple}. At the $i$th iteration, the algorithm (i) samples a prompt $X$ from a reference distribution $f(\bullet)$\footnote{While the true distribution $f$ is unknown, it is typically approximated by the empirical distribution of a finite set of prompts. In such cases, our established theoretical guarantees are understood to be conditional on this specific prompt set.}, (ii) utilizes the current parameter $\theta_i$ to generate an output $Y$, (iii) computes its reward $Z$, (iv) constructs a plug-in estimator $\widehat{g}(\theta_i)=\sum_t \nabla_{\theta} \log\pi_{\theta}(Y_t|X,Y_{<t})Z$ based on these samples, and (v) updates the parameters via stochastic gradient ascent $\theta_{i+1} \leftarrow \theta_i + \eta_i \widehat{g} (\theta_i)$ with learning rate $\eta_i>0$. 

\begin{algorithm}[t]
\caption{A meta-algorithm for LLM reasoning.}\label{alg1}
\begin{algorithmic}[1]
    \STATE \textbf{Input:}  Prompt distribution $f(X)$, initial parameters $\theta_0 \in \Theta$, sequence of learning rates $\{\eta_i\}_{i \in \mathbb{N}}$, batch size $B$, per-prompt group size $G$, and baseline functions $\{C_i^{(b,g)}\}_{\substack{i,b,g}}$.
    \FOR{$i = 0, 1, 2, \dots,n-1$}
        \STATE Sample a batch of prompts $\{X^{(b)}\}_{b=1}^B \stackrel{iid}{\sim} f(\bullet)$.
        \STATE For each $X^{(b)}$, generate a group of $G$ outputs $\{Y^{(b,g)}\}_{g=1}^G \sim \pi_{\theta_i}(\cdot|X^{(b)})$.
        \STATE For each $Y^{(b,g)}$, obtain its reward $Z^{(b,g)}$.
        \STATE Compute the gradient
        $$\widehat{g}(\theta_i)=\frac{1}{BG} \sum_{b=1}^B \sum_{g=1}^G \sum_{t} \nabla_\theta \log \pi_{\theta_i}(Y_{t}^{(b,g)} | X^{(b)}, Y_{<t}^{(b,g)})(Z^{(b,g)}-C_i^{(b,g)})$$
        \STATE Parameter update: $\theta_{i+1} \gets \theta_i + \eta_{i+1} \widehat{g}(\theta_i)$.
    \ENDFOR
    \STATE \textbf{Output:} $\pi_{\theta_n}$. 
\end{algorithmic}
\end{algorithm}

While REINFORCE's gradient estimator is unbiased, it is notoriously susceptible to high variance. A common trick in the literature to reduce its variance is to subtract a baseline term $C_i$ (which can vary across iterations $i$) from the reward $Z$. The rationale lies in that the score function $\sum_{t} \nabla_\theta \log \pi_{\theta}(Y_{t} | X, Y_{<t})$ has an expectation of zero. Consequently, provided that $C_i$ is a function exclusively of the random variable $X$ (being conditionally independent of $Y$ and $Z$ given $X$), the expectation in \eqref{eqn:policygradient} remains invariant when $Z$ is replaced by its ``centered'' version, $Z - C_i$. As such, the resulting gradient estimator remains unbiased. However, a proper choice of $C_i$ can effectively reduce the variance of the estimator. 

The most widely adopted $C_i$ is the value function $V^{\pi_{\theta_i}}(X) = \mathbb{E}^{\pi_{\theta_i}}(Z | X)$. This choice gives rise to advantage actor-critic (A2C) algorithms \citep[e.g.,][]{mnih2016asynchronous}, which, in addition to the policy (the actor), learn a value function (the critic) to estimate the baseline and calculate the advantage function (the difference between the return and the critic) for variance reduction. %As mentioned in the introduction, this critic network is also a core component of PPO. 
However, in reasoning tasks, maintaining and updating a separate critic network is extremely computationally intensive. GRPO addresses this inefficiency by eliminating the critic network entirely. It samples multiple outputs for each prompt and utilizes their group mean as the baseline term (see Equation \eqref{eqn:GRPObaseline}). Because sampling from the policy is drastically more efficient than training a critic model, GRPO offers a highly scalable solution to LLM reasoning. In Section \ref{sec:theory}, we show that this solution is also mathematically elegant. 

To unify these methods, we introduce the meta-algorithm in Algorithm~\ref{alg1}. It incorporates (i) the centering trick for variance reduction, (ii) minibatch sampling of $B$ prompts per iteration, and (iii) $G$ sampled outputs per prompt for evaluating the gradient. Several aforementioned algorithms arise as special cases: 
\begin{itemize}[leftmargin=*]
    \item \textbf{REINFORCE}: Recovered by setting $B=1, G=1$, and $C_i^{(1,1)}=0$.
    \item \textbf{A2C}: Recovered by setting $B=1, G=1$, and $C_i^{(1,1)}$ to the critic network.
    \item \textbf{GRPO-type}: Recovered by allowing $B, G > 1$ and setting $C_i^{(b,g)}$ to the \textit{leave-one-out group mean}\footnote{This is equivalent to using the full group mean $G^{-1}\sum_{k=1}^G Z^{(b,k)}$ as the baseline, up to a rescaling of the learning rate by a factor of $G/(G-1)$.}
    \begin{equation}\label{eqn:GRPObaseline}
        C_i^{(b,g)}= \bar{Z}^{(b,-g)} :=\frac{\sum_{k\neq g} Z^{(b,g)}}{G-1}.
    \end{equation}
\end{itemize}
We conclude this section by noting that while we focus on analyzing Algorithm \ref{alg1} for building intuition, it simplifies the standard production-level implementation in three aspects: (i) it omits the reward normalization used in GRPO; (ii) it does not incorporate the importance sampling strategy used in GRPO to enhance sample efficiency through multiple gradient updates per batch; (iii) it excludes the Kullback–Leibler (KL) divergence penalty, which prevents the policy from deviating excessively from the reference model. These simplifications were implemented in GRPO variants such as Dr-GRPO \citep{liu2025understanding} and GPG \citep{chu2025gpg}, which we analyze rigorously in Sections \ref{sec:grge} and \ref{sec:grpo}. In Section \ref{sec:ext} of the Supplementary Material, we will bridge these gaps to align our theoretical analyses with standard GRPO implementations.

\section{Main results}\label{sec:theory}
This section presents our main theoretical results, where we compare GRPO with two alternatives to demonstrate its effectiveness. To ensure a fair comparison, all algorithms use the same batch size $B$ and group size $G$. As summarized in Table~\ref{tab:4algorithms2compare}, their differences lie solely in the choice of the baseline term. Specifically, we analyze:

\noindent (i) a \textbf{vanilla} algorithm, a minibatch variant of REINFORCE that sets $C_i^{(b,g)} = 0$;

\noindent (ii) a \textbf{GRPO-type} algorithm, which adopts the group mean defined in \eqref{eqn:GRPObaseline};

\noindent (iii) an \textbf{oracle} algorithm, which uses the true value function $V^{\pi_{\theta_i}}(X)$ as the baseline; 

Notice that the oracle algorithm is not practically implementable: the true value function is unknown and estimating it via a separate network is computationally expensive, as discussed earlier. Nevertheless, this algorithm serve as a benchmark whose performance practical algorithms aim to match. We say that an algorithm achieves the \emph{oracle property} if it attains asymptotically equivalent performance to the oracle algorithm.

We next provide a high-level summary of our findings, covering both finite-sample and asymptotic results (see Figure~\ref{fig:theory_roadmap} for a roadmap of our results):  
\begin{itemize}[leftmargin=*]
    \item \textbf{Gradient evaluation}: Section \ref{sec:grge} investigates the properties of policy gradient estimators employed by GRPO-type algorithms. Specifically, Lemma \ref{lem:gradientUstat} formally establishes the connection between the estimator and the U-statistic. Theorem \ref{thm:gradMSE} and Proposition \ref{prop:MSE} derive bounds on its MSE, which in turn demonstrating the estimator's superiority over the vanilla algorithm, yielding its oracle property (Corollary \ref{coro:gradoracle}) and optimality (Corollary \ref{coro:gradoptimal}). 
    \item \textbf{Policy optimization}: Section \ref{sec:grpo} shifts the focus to the sub-optimality gap of the learned policy. In particular, Lemma \ref{lemma:policynonasympotic} presents a finite-sample upper bound on this gap, upon which Theorem \ref{thm:policynonasympotic} establishes a scaling law that offers insights into the optimal choice of the group size. Theorem \ref{thm:policyasympotic} further establishes the consistency of the parameter estimates and derives the asymptotic distribution of the sub-optimality gap as a mixture of $\chi^2$ random variables without assuming parameter identifiability. These results verify the oracle property of the GRPO policy (Corollary \ref{coro:policyoracle}) as well as its optimality (Corollary \ref{coro:policyoptimal}).
    \item \textbf{Practical considerations}: Section \ref{sec:ext} of the Supplementary Material extends the aforementioned analyses to accommodate (i) reward standardization; (ii) importance sampling and (iii) the KL divergence penalty. This section connects the gradient estimator with U-statistics (Lemma \ref{lem:pgobjective}), characterizes the gradient estimator's MSE (Theorem \ref{thm:practicalgradMSE}), and derives the consistency of the parameter estimates (Proposition~\ref{prop:target}).
\end{itemize}

\begin{table}
    \centering
    \caption{Three meta-algorithms compared in Section~\ref{sec:theory}, differing in their choice of baseline term.}\label{tab:4algorithms2compare}
    \begin{tabular}{c|c|c|c}\toprule
      Algorithm & Vanilla algorithm & GRPO-type algorithm & Oracle algorithm \\ \midrule
      Baseline term & $C_i^{(b,g)}=0$  & $C_i^{(b,g)}=\bar{Z}^{(b,-g)}$ & $C_i^{(b,g)}=V^{\pi_{\theta_i}}(X^{(b)})$ \\  \bottomrule
    \end{tabular}
\end{table}

\subsection{Group relative gradient evaluation}\label{sec:grge}
In this section, we  consider the task of evaluating the gradient $g(\theta)$ (see Equation \eqref{eqn:policygradient}). To build intuition, we begin with estimating $g(\theta)$ for a fixed prompt $x$. This corresponds to the setting where the prompt distribution $f$ in Algorithm \ref{alg1} is degenerate at $x$. The observed data consists of $G$ i.i.d. output-reward pairs $\{(Y^{(g)},Z^{(g)})\}_{g=1}^G$ sampled from the model's policy $\pi_{\theta}$. Here, we omit the superscript $(b)$ as the prompt is fixed. We denote the resulting gradient estimators for the vanilla, GRPO-type and oracle algorithms as $\widehat{g}_{\textrm{vanilla}}(x;\theta)$, $\widehat{g}_{\textrm{GRPO}}(x;\theta)$ and $\widehat{g}_{\textrm{oracle}}(x;\theta)$, respectively. These estimators follow the general form,
\begin{equation}\label{eqn:generalgradest}
   \widehat{g}(x;\theta)=\frac{1}{G}\sum_{g=1}^G \nabla_{\theta} \log \pi_{\theta}(Y^{(g)}\mid x) [Z^{(g)}-C^{(g)}],
\end{equation}
with different choices of the baseline term $C^{(g)}$ summarized in Table \ref{tab:4algorithms2compare}. 

We begin by demonstrating that the GRPO's group relative gradient estimator, which contrasts a realized reward $Z$ against its group average, admits a \emph{second-order U-statistic} representation.

\begin{lemma}[Gradient estimator as a U-statistic]\label{lem:gradientUstat}
$\widehat{g}_{\rm GRPO}(x;\theta)$ can be written as a second-order U-statistic:
\begin{equation}\label{eq:grpo_ustat}
\widehat{g}_{\rm GRPO}(x;\theta)
=\binom{G}{2}^{-1}\sum_{1\le i<j\le G} h\big((Y^{(i)},Z^{(i)}),(Y^{(j)},Z^{(j)})\big),
\end{equation}
with a symmetric kernel
\begin{equation*}
h\big((Y^{(i)},Z^{(i)}),(Y^{(j)},Z^{(j)})\big)
:=\frac{1}{2}[\nabla_\theta \log \pi_\theta(Y^{(i)}|x)-\nabla_\theta \log \pi_\theta(Y^{(j)}|x)](Z^{(i)}-Z^{(j)}).
\end{equation*}
\end{lemma}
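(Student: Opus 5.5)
This is a direct algebraic identity, so the plan is to expand both sides and match them term by term. Write $s_g := \nabla_\theta \log \pi_\theta(Y^{(g)}\mid x)$ for the score of the $g$th output. With the leave-one-out baseline \eqref{eqn:GRPObaseline}, read as the intended average $\bar Z^{(-g)} = (G-1)^{-1}\sum_{k\neq g} Z^{(k)}$, we can write $Z^{(g)} - \bar Z^{(-g)} = (G-1)^{-1}\sum_{k\neq g}(Z^{(g)}-Z^{(k)})$. Plugging this into \eqref{eqn:generalgradest} gives
\begin{equation*}
\widehat{g}_{\rm GRPO}(x;\theta)=\frac{1}{G(G-1)}\sum_{g=1}^G\sum_{k\neq g} s_g\,(Z^{(g)}-Z^{(k)}),
\end{equation*}
which is a double sum over ordered pairs $(g,k)$ with $g\neq k$.

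Next, we symmetrize by grouping each unordered pair $\{i,j\}$ with $i<j$. The pair contributes the two ordered terms $s_i(Z^{(i)}-Z^{(j)})$ and $s_j(Z^{(j)}-Z^{(i)})$, whose sum is
\begin{equation*}
(s_i-s_j)(Z^{(i)}-Z^{(j)}) = 2\,h\big((Y^{(i)},Z^{(i)}),(Y^{(j)},Z^{(j)})\big).
\end{equation*}
Summing over the $\binom{G}{2}$ unordered pairs therefore gives
\begin{equation*}
\widehat{g}_{\rm GRPO}(x;\theta)=\frac{2}{G(G-1)}\sum_{i<j}h\big((Y^{(i)},Z^{(i)}),(Y^{(j)},Z^{(j)})\big),
\end{equation*}
and $2/(G(G-1))=\binom{G}{2}^{-1}$, which is exactly \eqref{eq:grpo_ustat}.

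It remains to check that $h$ is symmetric. Swapping $i$ and $j$ flips the sign of both factors $(s_i-s_j)$ and $(Z^{(i)}-Z^{(j)})$, so their product is unchanged. Since the pairs $(Y^{(g)},Z^{(g)})$ are i.i.d. for a fixed prompt, the expression is a genuine second-order U-statistic in the sense of \eqref{eqn:Ustatistic}.

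The main obstacle is bookkeeping rather than anything conceptual. The displayed definition \eqref{eqn:GRPObaseline} contains a typo, $Z^{(b,g)}$ in place of $Z^{(b,k)}$, and the proof must use the intended leave-one-out mean. The $1/2$ in the kernel must also match the normalization: the ordered-pair sum carries $1/(G(G-1))$, while the U-statistic carries $\binom{G}{2}^{-1}$, a factor-of-two difference. I would check this on the case $G=2$. There both sides reduce to $\tfrac12[s_1(Z^{(1)}-Z^{(2)})+s_2(Z^{(2)}-Z^{(1)})]$, which confirms the normalization.
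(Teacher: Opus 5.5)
Your proposal is correct and is the paper's own proof. You rewrite the leave-one-out advantage as an average of pairwise differences, which gives the ordered-pair sum with weight $1/(G(G-1))$, and then symmetrize to reach the kernel $h$; the paper averages that sum with its index-swapped copy, while you group each unordered pair, but this is the same step. Your normalization check is worth keeping, because the paper's final display writes $\frac{1}{2G(G-1)}\sum_{k\neq g} h$ where $\frac{1}{G(G-1)}\sum_{k\neq g} h$ is needed to obtain the stated $\binom{G}{2}^{-1}$ form.
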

The proof of Lemma \ref{lem:gradientUstat} is straightforward. By setting $C^{(g)}$ in \eqref{eqn:generalgradest} to the group mean baseline $\bar{Z}^{(-g)}=(G-1)^{-1} \sum_{k\neq g} Z^{(g)}$, each individual term in the sum satisfies 
\begin{equation*}
    \nabla_{\theta} \log \pi_{\theta}(Y^{(g)}\mid x) [Z^{(g)}-\bar{Z}^{(-g)}]=\frac{1}{G-1}\sum_{k\neq g} \nabla_{\theta} \log \pi_{\theta}(Y^{(g)}\mid x) [Z^{(g)}-Z^{(k)}]. 
\end{equation*}
Averaging over all $g \in \{1, \dots, G\}$ and applying a standard symmetrization argument allows $\widehat{g}_{\textrm{GRPO}}(x;\theta)$ to be reformulated as the average of the symmetric kernel $h$ over all pairs, yielding the $U$-statistic in \eqref{eq:grpo_ustat}. 

Next, applying the Hoeffding decomposition from \eqref{eqn:Udecompose} partitions the group relative gradient estimator into three orthogonal components:

\noindent (i) the expectation of the kernel, which equals the true gradient $$g(x;\theta)=\mathbb{E}^{\pi_{\theta}} [\nabla_{\theta} \log \pi_{\theta}(Y\mid x) \mathbb{E}(Z|X=x)];$$
\noindent (ii) a first-order term, which can be represented as the difference between the \textit{oracle} gradient estimator and $g(x;\theta)$, since the first-order projection $h_1$ satisfies 
\begin{equation*}
2h_1(y,z) = \mathbb{E} \Big[ h\big((y,z), (Y,Z)\big) \Big] = \nabla_\theta \log \pi_\theta(y \mid x) \big[ z - \mathbb{E}^{\pi_{\theta}}(Z \mid X=x) \big],
\end{equation*} 
which is precisely each summand in \eqref{eqn:generalgradest} under the oracle baseline $C^{(g)}=V^{\pi_{\theta}}(x)$; 

\noindent (iii) a second-order degenerate term. 

Since these components are uncorrelated, this decomposition leads to the following MSE bound, as summarized in Theorem \ref{thm:gradMSE}. \begin{assumption}[Bounded reward]\label{assump:boundedreward}
    $Z$ is almost surely bounded.
\end{assumption}

\begin{theorem}[MSE conditional on the prompt]\label{thm:gradMSE}
    Under Assumption \ref{assump:boundedreward}, we have
    \begin{eqnarray}\label{eqn:MSEgradest}
    \begin{split}
        \text{MSE}(\widehat{g}_{\text{GRPO}}(x;\theta)):=\mathbb{E}\|\widehat{g}_{\text{GRPO}}(x;\theta)-g(x;\theta) \|^2 \\=\frac{\text{trace}[\Sigma_{\text{oracle}}(x;\theta)]}{G}+O\Big(\frac{\mathbb{E} \|\nabla_\theta \log \pi_{\theta}(Y|x) \|^2 }{G^2}\Big),
    \end{split}
    \end{eqnarray}
    where $\text{trace}[\bullet]$ denotes the trace of a matrix, $\Sigma_{\text{oracle}}(x;\theta)$ denotes the asymptotic covariance matrix of the oracle gradient estimator $G\text{Cov}(\widehat{g}_{\text{oracle}}(x;\theta))$ and $\|\bullet\|$ denotes the $\ell_2$-norm of a vector. 
\end{theorem}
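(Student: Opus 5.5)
The plan is to start from the U-statistic representation in Lemma \ref{lem:gradientUstat} and apply the Hoeffding decomposition \eqref{eqn:Udecompose} with $n=G$. Write $S(y):=\nabla_\theta\log\pi_\theta(y\mid x)$ and $V:=V^{\pi_\theta}(x)=\mathbb{E}^{\pi_\theta}(Z\mid X=x)$.

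\textbf{Step 1: identify the three components.} First I check that $h_0=\mathbb{E}h=g(x;\theta)$. Expanding $h$ and using independence of the two draws gives $h_0=\mathbb{E}[S(Y)Z]-\mathbb{E}[S(Y)]\,\mathbb{E}[Z]$. The second product vanishes because the score has mean zero, so $h_0=g(x;\theta)$.

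Next, the projection computed in the text gives
\[
\frac{2}{G}\sum_{g=1}^G\bigl[h_1(Y^{(g)},Z^{(g)})-h_0\bigr]
=\widehat g_{\rm oracle}(x;\theta)-g(x;\theta).
\]
Consequently
\[
\widehat g_{\rm GRPO}-g=(\widehat g_{\rm oracle}-g)+R,
\]
where $R$ is the degenerate second-order term. Here $R=\binom{G}{2}^{-1}\sum_{i<j}\tilde h_{ij}$, with $\tilde h_{ij}=h_{ij}-h_1(W_i)-h_1(W_j)+h_0$ and $W_i=(Y^{(i)},Z^{(i)})$.

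\textbf{Step 2: the cross term vanishes.} By degeneracy, $\mathbb{E}[\tilde h(w,W)]=0$ for every fixed $w$. Each summand of the first-order term depends on a single $W_k$, and each summand of $R$ involves a pair $(i,j)$. If $k\notin\{i,j\}$, independence kills the expectation $\mathbb{E}\langle\,\cdot\,,\tilde h_{ij}\rangle$. If $k\in\{i,j\}$, I condition on $W_k$ and use degeneracy. Hence
\[
\mathbb{E}\|\widehat g_{\rm GRPO}-g\|^2=\mathbb{E}\|\widehat g_{\rm oracle}-g\|^2+\mathbb{E}\|R\|^2.
\]
The first piece equals $\operatorname{trace}\operatorname{Cov}(\widehat g_{\rm oracle})=\operatorname{trace}[\Sigma_{\rm oracle}]/G$, since $\widehat g_{\rm oracle}$ is an average of $G$ i.i.d.\ mean-$g$ vectors.

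\textbf{Step 3: bound the remainder.} This is the main step, though it is routine U-statistic bookkeeping. Distinct pairs $\tilde h_{ij}$ and $\tilde h_{kl}$ are uncorrelated whenever $\{i,j\}\neq\{k,l\}$: when the pairs share one index, condition on the shared variable and use degeneracy; when they share none, use independence. Therefore
\[
\mathbb{E}\|R\|^2=\binom{G}{2}^{-1}\mathbb{E}\|\tilde h_{12}\|^2\le\binom{G}{2}^{-1}\mathbb{E}\|h_{12}\|^2,
\]
where the inequality holds because $\tilde h_{12}$ is the residual of an orthogonal projection of $h_{12}$.

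Finally, with $|Z|\le M$ by Assumption \ref{assump:boundedreward}, we have $|Z^{(1)}-Z^{(2)}|\le 2M$. Then
\[
\|h_{12}\|^2\le M^2\,\|S(Y^{(1)})-S(Y^{(2)})\|^2,
\]
so $\mathbb{E}\|h_{12}\|^2\le 4M^2\,\mathbb{E}\|S(Y)\|^2$. This gives $\mathbb{E}\|R\|^2=O\bigl(\mathbb{E}\|\nabla_\theta\log\pi_\theta(Y\mid x)\|^2/G^2\bigr)$, which completes the identity \eqref{eqn:MSEgradest}.

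The only subtlety to watch is that all the moments involved are finite. If $\mathbb{E}\|S\|^2=\infty$, the bound holds trivially; otherwise boundedness of $Z$ ensures every term has a finite second moment.
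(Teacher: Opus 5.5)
Your proof is correct and follows the same route as the paper. Both arguments use the Hoeffding decomposition of the U-statistic from Lemma \ref{lem:gradientUstat}, orthogonality of the first- and second-order pieces, identification of the first-order piece with $\widehat g_{\rm oracle}-g$, and a bound on the degenerate remainder $\binom{G}{2}^{-1}\mathbb{E}\|\tilde h_{12}\|^2$ via the kernel's second moment and bounded rewards. The one difference is cosmetic: you get $\mathbb{E}\|\tilde h_{12}\|^2\le\mathbb{E}\|h_{12}\|^2$ from the orthogonal-projection argument, whereas the paper uses a cruder Jensen-type bound with constant $16$. Your Step 1 identity is also correct, since $h_1(y,z)=\tfrac12[S(y)(z-V)+g(x;\theta)]$, even though the paper's displayed formula for $h_1$ drops this additive constant.
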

We make two remarks. First, Assumption \ref{assump:boundedreward} is generally satisfied. In GRPO, the reward $Z$ is typically the sum of a format reward and an accuracy reward, which evaluate the format consistency and the correctness of the output, respectively. Since both components are bounded within $[0, 1]$, the final reward $Z$ remains bounded within $[0, 2]$. 

Second, the two terms on the second line of \eqref{eqn:MSEgradest} correspond to the expected squared $\ell_2$-norm of the first- and second-order projections from the Hoeffding decomposition. Crucially, the first-order projection serves as the \textit{leading} term that scales at a rate of $G^{-1}$ and coincides with the MSE of the oracle gradient estimator. In contrast, the second-order term decays at a faster rate of $G^{-2}$, confirming its role as a higher-order \textit{residual}. This observation establishes the oracle property of $\widehat{g}_{\text{GRPO}}(\theta)$, which we formalize in Corollary \ref{coro:gradoracle}: as the group size $G \to \infty$, the estimator's MSE becomes equivalent to that of the oracle estimator with access to the true value function. In the official DeepSeekMath implementation, $G$ is set to 64 \citep{shao2024deepseekmath}, which is sufficiently large for the residual term to be negligible relative to the leading term. 

Next, we extend Theorem \ref{thm:gradMSE} to the minibatch setting, where the prompt is no longer fixed at $x$. Instead, we sample $B$ i.i.d. prompts $\{X^{(b)}\}_{b=1}^B$ and $G$ output-reward pairs per prompt to estimate 
\begin{eqnarray}\label{eqn:gtheta}
g(\theta) = \mathbb{E}_{X \sim f(\bullet)} [g(X; \theta)].
\end{eqnarray}
Accordingly, we define the vanilla, oracle, and GRPO-type minibatch gradient estimators as the empirical average of the prompt-specific estimators: 
\begin{equation*}
\widehat{g}(\theta) = \frac{1}{B} \sum_{b=1}^B \widehat{g}(X^{(b)}; \theta).
\end{equation*}
\begin{prop}[MSE in the minibatch setting]\label{prop:MSE}
    Under Assumption \ref{assump:boundedreward}, we have
    \begin{eqnarray}\label{eqn:MSEgradbatch}
    \begin{split}
        & \text{MSE}(\widehat{g}_{\text{GRPO}}(\theta)):=\mathbb{E}\|\widehat{g}_{\text{GRPO}}(\theta)-g(\theta) \|^2 \\=&\frac{\mathbb{E} \| g(X;\theta) - g(\theta)\|^2}{B}+\frac{\text{trace}[\Sigma_{\text{oracle}}(\theta)]}{BG}+O\Big(\frac{\mathbb{E} \|\nabla_\theta \log \pi_{\theta}(Y|X) \|^2 }{BG^2}\Big),
    \end{split}
    \end{eqnarray}
    where $\Sigma_{\text{oracle}}(\theta)=\mathbb{E}[\Sigma_{\text{oracle}}(X;\theta)]$.
\end{prop}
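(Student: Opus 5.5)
The plan is to reduce Proposition \ref{prop:MSE} to Theorem \ref{thm:gradMSE} by conditioning on the prompts and applying a law-of-total-variance decomposition. Write
\begin{equation*}
\widehat{g}_{\text{GRPO}}(\theta)-g(\theta)=\frac{1}{B}\sum_{b=1}^B\big[\widehat{g}_{\text{GRPO}}(X^{(b)};\theta)-g(X^{(b)};\theta)\big]+\frac{1}{B}\sum_{b=1}^B\big[g(X^{(b)};\theta)-g(\theta)\big]=:A+D.
\end{equation*}
By Lemma \ref{lem:gradientUstat}, each prompt-specific estimator is a U-statistic whose kernel has conditional mean $g(X^{(b)};\theta)$. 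Hence $\mathbb{E}[A\mid X^{(1)},\dots,X^{(B)}]=0$, while $D$ is measurable with respect to the prompts. This gives $\mathbb{E}\langle A,D\rangle=0$, so
\begin{equation*}
\text{MSE}=\mathbb{E}\|A\|^2+\mathbb{E}\|D\|^2 .
\end{equation*}

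The term $D$ is an average of $B$ i.i.d.\ mean-zero vectors, since $\mathbb{E}\,g(X;\theta)=g(\theta)$ by \eqref{eqn:gtheta}. Therefore $\mathbb{E}\|D\|^2=\mathbb{E}\|g(X;\theta)-g(\theta)\|^2/B$, which is the first term of \eqref{eqn:MSEgradbatch}.

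For $A$, the groups for different $b$ are drawn independently given the prompts, and each summand is conditionally mean zero. The cross terms therefore vanish conditionally, and
\begin{equation*}
\mathbb{E}[\|A\|^2\mid X^{(1:B)}]=\frac{1}{B^2}\sum_b \text{MSE}\big(\widehat{g}_{\text{GRPO}}(X^{(b)};\theta)\big),
\end{equation*}
where the MSE is taken conditional on the prompt. Applying Theorem \ref{thm:gradMSE} with $x=X^{(b)}$ and taking expectations over the i.i.d.\ prompts gives
\begin{equation*}
\frac{1}{B}\Big\{\frac{\mathbb{E}\,\text{trace}[\Sigma_{\text{oracle}}(X;\theta)]}{G}+\mathbb{E}\,O\Big(\frac{\mathbb{E}[\|\nabla_\theta\log\pi_\theta(Y|X)\|^2\mid X]}{G^2}\Big)\Big\}.
\end{equation*}
Linearity of trace and expectation turns the first piece into $\text{trace}[\Sigma_{\text{oracle}}(\theta)]/(BG)$.

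The main obstacle is exchanging the expectation over $X$ with the $O(\cdot)$ remainder, which needs a constant in Theorem \ref{thm:gradMSE} that is uniform in $x$. I would re-derive that remainder explicitly rather than cite it. The degenerate second-order Hoeffding term has squared norm $\binom{G}{2}^{-1}\mathbb{E}\|h-h_1-h_1+h_0\|^2\le \binom{G}{2}^{-1}\mathbb{E}\|h\|^2$. By Assumption \ref{assump:boundedreward}, $|Z|\le M$, so
\begin{equation*}
\|h\|^2\le M^2\big(\|\nabla\log\pi_\theta(Y^{(i)}|x)\|+\|\nabla\log\pi_\theta(Y^{(j)}|x)\|\big)^2 ,
\end{equation*}
which has expectation at most $4M^2\,\mathbb{E}[\|\nabla\log\pi_\theta(Y|x)\|^2\mid x]$. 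The constant $8M^2$ is thus independent of $x$, and integrating over $f$ yields the final $O\big(\mathbb{E}\|\nabla_\theta\log\pi_\theta(Y|X)\|^2/(BG^2)\big)$ term. The first-order piece is exact, equal to $\text{trace}[\Sigma_{\text{oracle}}(x;\theta)]/G$ by the orthogonality of the Hoeffding components, so no further approximation is incurred.
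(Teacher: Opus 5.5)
Your proposal is correct and follows essentially the same route as the paper. Both arguments split the error into the within-prompt average and the between-prompt average, and both remove the cross term because $g(X^{(b)};\theta)$ is the conditional mean of $\widehat{g}_{\rm GRPO}(X^{(b)};\theta)$ given the prompt. Both then apply Theorem~\ref{thm:gradMSE} prompt by prompt.

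Your extra step goes beyond what the paper writes. You re-derive the second-order remainder with an $x$-independent constant, using $|Z|\le M$ and the fact that the degenerate kernel is an $L^2$-projection residual of $h$. This makes explicit the uniformity in $x$ that the paper relies on without comment when it integrates the $O(\cdot)$ term over the prompt distribution.
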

In the minibatch setting, the MSE is further reduced by a factor of $B^{-1}$. The first term in the second line of \eqref{eqn:MSEgradbatch} represents the inherent variance arising from prompt sampling, which is independent of $G$ and shared by all gradient estimators (e.g., vanilla, oracle). Equation \eqref{eqn:MSEgradbatch} is instrumental in deriving our scaling law detailed in the next section. To elaborate, suppose we fix the sampling budget per prompt as $N = B \times G$. Then $B = N/G$, so the first term scales linearly with $G$, the second term is constant when $N$ is fixed, and the third term scales inversely with $G$. Consequently, there exists an optimal choice of $G$ that is neither too small nor too large, balancing the first and third terms to minimize the MSE. As we will show later, the MSE is closely related to the suboptimality gap of the learned policy, and thus there exists a corresponding $G$ that minimizes this suboptimality gap.

Additionally, as $G \to \infty$, the MSE of the GRPO-type gradient again becomes equivalent to that of the oracle gradient, and we formalize such oracle property below.

\begin{coro}[Oracle property of gradient estimator]\label{coro:gradoracle}
Let $\text{MSE}_A(\bullet)$ denote the asymptotic MSE of a gradient estimator, obtained by removing errors that are high-order in the group size $G$, we have $\text{MSE}_{A}(\widehat{g}_{\rm GRPO}(x;\theta)) = \text{MSE}(\widehat{g}_\text{oracle}(x;\theta))$ and  
    $\text{MSE}_{A}(\widehat{g}_{\rm GRPO}(\theta)) = \text{MSE}(\widehat{g}_\text{oracle}(\theta)).$
\end{coro}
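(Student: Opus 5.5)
The plan is to read both identities directly off Theorem \ref{thm:gradMSE} and Proposition \ref{prop:MSE}. The only extra work is to compute the exact MSE of the oracle estimator in each setting and check that it equals the leading terms there.

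\textbf{Fixed prompt.} With the baseline $V^{\pi_\theta}(x)$, the oracle estimator is
\[
\widehat{g}_{\rm oracle}(x;\theta)=G^{-1}\sum_{g=1}^G \nabla_\theta\log\pi_\theta(Y^{(g)}\mid x)\,[Z^{(g)}-V^{\pi_\theta}(x)],
\]
an average of $G$ i.i.d.\ vectors. Since the score function has conditional mean zero, each summand has mean $g(x;\theta)$, so the estimator is unbiased. Hence
\[
\text{MSE}(\widehat{g}_{\rm oracle}(x;\theta))=\text{trace}[\text{Cov}(\widehat{g}_{\rm oracle}(x;\theta))]=\text{trace}[\Sigma_{\rm oracle}(x;\theta)]/G,
\]
exactly, by the definition of $\Sigma_{\rm oracle}(x;\theta)$ as $G\,\text{Cov}(\widehat{g}_{\rm oracle}(x;\theta))$. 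This quantity does not depend on $G$, because the summands are i.i.d. By Theorem \ref{thm:gradMSE}, the MSE of $\widehat{g}_{\rm GRPO}(x;\theta)$ is this quantity plus a term of order $G^{-2}$. That remainder is the squared norm of the degenerate second-order Hoeffding component. Removing it, as the definition of $\text{MSE}_A$ prescribes, gives the first identity.

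\textbf{Minibatch.} The oracle estimator is $B^{-1}\sum_b \widehat{g}_{\rm oracle}(X^{(b)};\theta)$, an average of i.i.d.\ terms across prompts with mean $g(\theta)$. By the law of total variance, conditioning on $X$, each term has total covariance
\[
\text{Cov}(g(X;\theta))+\mathbb{E}[\Sigma_{\rm oracle}(X;\theta)]/G.
\]
Taking traces and dividing by $B$ gives
\[
\text{MSE}(\widehat{g}_{\rm oracle}(\theta))=\frac{\mathbb{E}\|g(X;\theta)-g(\theta)\|^2}{B}+\frac{\text{trace}[\Sigma_{\rm oracle}(\theta)]}{BG}.
\]
These are exactly the first two terms of Proposition \ref{prop:MSE}. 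The remaining term there is $O(\cdot/(BG^2))$, which is higher order in $G$ relative to the $1/(BG)$ term, so dropping it yields the second identity.

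\textbf{Main obstacle.} The only delicate point is making precise what ``removing errors that are high-order in $G$'' means. I would define $\text{MSE}_A$ as the MSE with every term of order $o(G^{-1})$ (resp.\ $o((BG)^{-1})$) deleted, where the constants are $G$-independent quantities such as $\mathbb{E}\|\nabla_\theta\log\pi_\theta\|^2$. Finiteness of these constants is implicit in the $O(\cdot)$ statements. I would also note that the prompt-sampling term is $G$-free and shared by both estimators, so it is retained, not removed. With this convention both identities hold exactly.
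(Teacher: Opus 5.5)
Your proposal is correct and follows the paper's route: the paper's proof simply reads both identities off Theorem~\ref{thm:gradMSE} and Proposition~\ref{prop:MSE} by discarding the $O(G^{-2})$ terms. Your explicit computation of the oracle MSE, using unbiasedness for a fixed prompt and the law of total variance in the minibatch setting, fills in the identification of the leading terms with $\text{MSE}(\widehat{g}_{\rm oracle})$ that the paper leaves implicit.
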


Finally, we establish the optimality of the GRPO estimator in Corollary \ref{coro:gradoptimal} by demonstrating the following properties: (i) it asymptotically minimizes the MSE within the class of gradient estimators of the form \eqref{eqn:generalgradest} where the baseline term is a function of the prompt $x$ only; and (ii) its asymptotic MSE is strictly smaller than that of the vanilla algorithm. 

\begin{assumption}[Conditional uncorrelation]\label{assump:CU}
    $\|\nabla_{\theta}\log \pi_{\theta}(Y|X)\|^2$ is conditionally uncorrelated of $Z$ given $X$. 
\end{assumption}

\begin{coro}[Optimality of gradient estimator]\label{coro:gradoptimal}
    Suppose Assumptions \ref{assump:boundedreward} and \ref{assump:CU} hold. For any gradient estimator $\widehat{g}(x;\theta)$ of the form \eqref{eqn:generalgradest} whose baseline term is a function of the prompt $x$ only, we have
    \begin{equation*}
        \text{MSE}_{A}(\widehat{g}_{\rm GRPO}(x;\theta))\leq\text{MSE}(\widehat{g}(x;\theta))\quad \text{and} \quad\text{MSE}_{A}(\widehat{g}_{\rm GRPO}(\theta))\leq\text{MSE}(\widehat{g}(\theta)).
    \end{equation*}
    In particular, provided that the score functions $\nabla_{\theta} \log \pi_{\theta}(Y \mid x)$, $\nabla_{\theta} \log \pi_{\theta}(Y \mid X)$ and the value function $V^{\pi_{\theta}}(X)$ are not almost surely zero, and $V^{\pi_{\theta}}(x)\neq 0$,
    \begin{equation*}
        \text{MSE}_{A}(\widehat{g}_{\rm GRPO}(x;\theta))< \text{MSE}(\widehat{g}_{\text{vanilla}}(x;\theta)) \quad \text{and} \quad \text{MSE}_{A}(\widehat{g}_{\rm GRPO}(\theta))< \text{MSE}(\widehat{g}_{\text{vanilla}}(\theta)).
    \end{equation*}
\end{coro}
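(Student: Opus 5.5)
By Corollary \ref{coro:gradoracle}, $\text{MSE}_A(\widehat{g}_{\rm GRPO}(x;\theta))=\text{MSE}(\widehat{g}_{\text{oracle}}(x;\theta))$ and likewise in the minibatch case. Both inequalities therefore reduce to showing that, among fixed baselines $C=c(x)$, the oracle choice $c(x)=V^{\pi_\theta}(x)$ minimizes the exact MSE, and does so strictly relative to $c\equiv 0$. I will handle the fixed-prompt case first and then lift it to the minibatch case.

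\textbf{Fixed prompt.} Write $S=\nabla_\theta\log\pi_\theta(Y\mid x)$ and $V=V^{\pi_\theta}(x)$. For a constant baseline $c$, the estimator is an average of $G$ i.i.d. unbiased terms $S(Z-c)$, since $\mathbb{E}[S]=0$. Its MSE is therefore $G^{-1}\big(\mathbb{E}\|S(Z-c)\|^2-\|g(x;\theta)\|^2\big)$. Expanding $Z-c=(Z-V)+(V-c)$ gives
\[
\mathbb{E}\|S\|^2(Z-c)^2=\mathbb{E}\|S\|^2(Z-V)^2+2(V-c)\,\mathbb{E}\|S\|^2(Z-V)+(V-c)^2\,\mathbb{E}\|S\|^2 .
\]
By Assumption \ref{assump:CU}, $\|S\|^2$ and $Z$ are uncorrelated given $X=x$, so $\mathbb{E}\|S\|^2(Z-V)=\mathbb{E}\|S\|^2\cdot\mathbb{E}(Z-V)=0$. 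The MSE is thus $\text{MSE}(\widehat g_{\text{oracle}}(x;\theta))+G^{-1}(V-c)^2\mathbb{E}\|S\|^2$, which is minimized at $c=V$. For the vanilla choice $c=0$ the excess term is $G^{-1}V^2\mathbb{E}\|S\|^2$. This is strictly positive because $V\neq 0$ and $S$ is not a.s. zero, so $\mathbb{E}\|S\|^2>0$.

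\textbf{Minibatch.} Prompts are i.i.d., so $\text{MSE}(\widehat g(\theta))=B^{-1}\mathbb{E}_X\|\widehat g(X;\theta)-g(\theta)\|^2$. Conditioning on $X$ and using the conditional unbiasedness of $\widehat g(X;\theta)$ for $g(X;\theta)$, this splits as
\[
B^{-1}\Big(\mathbb{E}\|g(X;\theta)-g(\theta)\|^2+\mathbb{E}_X\,\text{MSE}(\widehat g(X;\theta)\mid X)\Big).
\]
The first term does not depend on the baseline. The second, by the pointwise identity above, equals the oracle value plus $(BG)^{-1}\mathbb{E}_X[(V^{\pi_\theta}(X)-c(X))^2\,\mathbb{E}(\|S\|^2\mid X)]\ge 0$. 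For the vanilla estimator this excess is $(BG)^{-1}\mathbb{E}[V^{\pi_\theta}(X)^2\|\nabla_\theta\log\pi_\theta(Y\mid X)\|^2]$.

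\textbf{Main obstacle.} The delicate step is strict positivity in the minibatch case. The hypotheses say only that $V^{\pi_\theta}(X)$ and the score are each not a.s. zero, which by itself does not force the product $V^2\,\mathbb{E}(\|S\|^2\mid X)$ to have positive mean. I would first try to close this gap with Assumption \ref{assump:CU} together with the fact that $\{V\neq0\}$ has positive probability. Failing that, I would interpret the hypothesis as requiring the score to be nonzero on a set of prompts where $V\neq 0$, and state this reading explicitly. Everything else is a direct computation on top of Corollary \ref{coro:gradoracle}.
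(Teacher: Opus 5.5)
Your argument follows the paper's proof essentially step for step. You reduce to the oracle via Corollary~\ref{coro:gradoracle}, expand $Z-c$ around $V^{\pi_\theta}(x)$, remove the cross term with Assumption~\ref{assump:CU}, and lift to the minibatch through the orthogonal split used for Proposition~\ref{prop:MSE}. Your handling of the $\|g(x;\theta)\|^2$ term is cleaner than the paper's, which blurs the raw second moment $\mathbb{E}\|S\|^2(Z-V)^2$ with $\text{trace}[\Sigma_{\text{oracle}}(x;\theta)]$. The resulting identity $\text{MSE}=\text{MSE}(\widehat g_{\text{oracle}}(x;\theta))+G^{-1}(V-c)^2\,\mathbb{E}\|S\|^2$ is the same in both.

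Your worry about strictness in the minibatch case is justified, but the first remedy you suggest cannot work. Assumption~\ref{assump:CU} restricts only the within-prompt relation between $\|S\|^2$ and $Z$. It says nothing about which prompts carry a nonzero score or a nonzero value. In fact the strict minibatch inequality is false under the hypotheses as literally stated. Let $f$ be uniform on $\{x_1,x_2\}$. At $x_1$, let the score vanish a.s.\ (say $\pi_\theta(\cdot\mid x_1)$ does not depend on $\theta$) and set $Z\equiv 1$. At $x_2$, set $Z\equiv 0$ and let the score be not a.s.\ zero. Then $V^{\pi_\theta}(X)$ and $\nabla_\theta\log\pi_\theta(Y\mid X)$ are each not a.s.\ zero, and Assumptions~\ref{assump:boundedreward} and~\ref{assump:CU} hold trivially. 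Yet the vanilla, oracle and GRPO-type summands are all identically zero, so both sides of the claimed strict inequality vanish.

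The paper's proof has the same gap: it asserts strictness from the two separate non-degeneracy conditions without further argument. So drop the first attempt and adopt your second reading. It is cleanest to state it as $\mathbb{E}[V^{\pi_\theta}(X)^2\|\nabla_\theta\log\pi_\theta(Y\mid X)\|^2]>0$, which is exactly positivity of your excess term at $c\equiv 0$. The fixed-prompt strict inequality is fine as you have it.
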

To conclude this section, we remark that Assumption \ref{assump:CU} ensures that the oracle gradient estimator minimizes the MSE within the class of unbiased policy gradient estimators \citep{greensmith2004variance}. This assumption is well-supported by the empirical success of modern RL algorithms such as advantage actor-critic and PPO, which prioritize the value function as the optimal baseline for variance reduction.

\subsection{Group relative policy optimization}\label{sec:grpo}
This section turns to policy optimization and investigates the properties of the policy learned by Algorithm~\ref{alg1}. For a given policy $\pi$, we evaluate its performance via the suboptimality gap, defined as the difference in the expected return between $\pi$ and the optimal policy:
\begin{equation*}
    \Delta(\pi)=\max_{\theta\in \Theta}\mathbb{E}^{\pi_{\theta}} (Z) -  \mathbb{E}^{\pi} (Z)
\end{equation*}
By definition, a smaller gap indicates a policy closer to the optimal one. Additionally, the suboptimality gap measures only the quality of the final policy obtained at the last iteration. In the RL literature, an alternative metric is \emph{regret}, which quantifies the cumulative difference between the optimal policy and the sequence of intermediate policies across iterations. However, we adopt the suboptimality gap because it better reflects practical LLM applications: once training is complete, only the final policy is deployed in practice. This criterion also aligns with the evaluation metric in deployment-efficient RL \citep[e.g.,][]{huangtowards2022}.

We begin with the following set of conditions under which we establish a finite-sample error bound for the suboptimality gap of the meta-algorithm in Lemma \ref{lemma:policynonasympotic}. 
\begin{assumption}[$L$-smoothness]\label{assump:lipcon}
The expected return $\mathbb{E}^{\pi_\theta}(Z)$ is $L$-smooth with respect to $\theta$. That is, its gradient $ g(\bullet)$ is differentiable and Lipschitz continuous with some constant $L > 0$ such that:
\begin{equation*}
\|g(\theta_1) - g(\theta_2)\| \le L \|\theta_1 - \theta_2\|, \quad \forall \theta_1, \theta_2 \in \Theta.
\end{equation*}
Additionally, the score function $\nabla_{\theta}\log \pi_{\theta}$ is uniformly bounded and continuous in $\theta$.
\end{assumption}
\begin{assumption}[Polyak-Lojasiewicz (PL) condition]\label{assump:PL}
There exists some constant $\mu>0$ such that 
\begin{equation}\label{eqn:PLinequality}
    \|g(\theta)\|^2\ge 2\mu \Delta(\pi_{\theta}), \quad \forall \theta\in \Theta. 
\end{equation}
\end{assumption}
\begin{assumption}[Learning rate]\label{assump:LR}
The sequence of learning rates $\{\eta_i\}_{i\ge 1}$ in Algorithm \ref{alg1} satisfies either (a) a \textit{constant} schedule where $\eta_i = \beta$ for some $0< \beta < (2L)^{-1}$, or (b) an $1/i$ schedule where $\eta_i = i^{-1}\beta$ for some constant $\beta > (2\mu)^{-1}$.   
\end{assumption}
The $L$-smoothness condition (Assumption \ref{assump:lipcon}) is standard in the optimization literature \citep[e.g.,][]{nesterov2013introductory}. It requires the gradient of the expected return to be a Lipschitz continuous function of $\theta$. The PL condition (Assumption \ref{assump:PL}) is substantially weaker than the strong concavity condition commonly imposed in the literature \citep[e.g.,][]{boyd2004convex}. Specifically, strong concavity requires a unique global optimizer and a strictly negative definite Hessian matrix $\nabla^2_{\theta} \mathbb{E}^{\pi_\theta}(Z)$ -- both requirements are likely violated in over-parameterized models such as LLMs. 

The PL condition, to the contrary, accommodates landscapes with multiple global optimizers and singular Hessians. 
To see this, consider a hypothetical two-dimensional example where $\theta=(\theta_1, \theta_2)$, $\Theta=\mathbb{R}^2$, and the expected return is $\mathbb{E}^{\pi_\theta}(Z) = -\theta_1^2$. Its Hessian matrix $\begin{pmatrix} -2 & 0 \\ 0 & 0 \end{pmatrix}$ 
has an eigenvalue of $0$ and is therefore not negative definite, violating strong concavity. Nonetheless, its gradient is $g(\theta) = (-2\theta_1, 0)^\top$, which satisfies \eqref{eqn:PLinequality} for any $\mu \le 2$. 

Finally, Assumption \ref{assump:LR} considers two standard learning rate schedules. The constant schedule (a) is common in practical applications \citep{sheng2025hybridflow}. To the contrary, the $1/i$ schedule (b) is motivated by stochastic approximation theory: it ensures that the suboptimality gap theoretically achieves the optimal convergence rate \citep{zhang2016central}.

\begin{lemma}[Finite-sample sub-optimality gap]\label{lemma:policynonasympotic}
     Suppose Assumptions \ref{assump:boundedreward} and \ref{assump:lipcon}--\ref{assump:LR} hold. Suppose each $C_i^{(b,g)}$ in Algorithm \ref{alg1} is either a function of $X^{(b)}$ or a GRPO-type baseline in \eqref{eqn:GRPObaseline}. Let $M = \sup_{\theta} \textrm{MSE}(\widehat{g}(\theta))$ denote the uniform upper bound on the MSE of this algorithm's gradient estimator. 
     Then for any $n\ge 1$, under the constant  schedule (a), the output policy of $\pi_{\theta_n}$ satisfies
     \begin{equation*}
        \mathbb{E}[\Delta(\pi_{\theta_n})]\le (1-2\mu \beta+L\mu \beta^2)^n \mathbb{E}[\Delta(\pi_{\theta_0})]+\frac{L\beta^2 M}{4\mu\beta-2L\mu \beta^2},
    \end{equation*}
    where the expectation is taken with respect to the randomness in the estimator $\theta_n$ and any randomness in the initial parameter $\theta_0$. 
    Under the $1/i$ schedule (b), we have
    \begin{equation*}
        \mathbb{E}[\Delta(\pi_{\theta_n})]\le \max\Big(\frac{(1+\epsilon)L\beta^2 M}{(4\mu\beta-2)n},\frac{c}{n}\Big),
    \end{equation*}
    for any sufficiently small $\epsilon > 0$, where $c$ is a positive constant depending only on $(\beta, \mu, L, \epsilon)$. 
\end{lemma}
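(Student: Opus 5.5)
The plan is the standard descent-lemma argument for SGD under smoothness and the PL inequality, using that the gradient estimator is unbiased with MSE at most $M$. Write $J(\theta)=\mathbb{E}^{\pi_\theta}(Z)$ and $\Delta_i=\Delta(\pi_{\theta_i})=J^*-J(\theta_i)$, and let $\mathcal{F}_i$ be the sigma-field generated by $\theta_0,\ldots,\theta_i$. The first step is to check that, conditionally on $\mathcal{F}_i$, the estimator $\widehat{g}(\theta_i)$ is unbiased for $g(\theta_i)$. For baselines that depend only on $X^{(b)}$ this follows because the score has conditional mean zero. For the GRPO-type baseline it follows from Lemma~\ref{lem:gradientUstat}: the estimator is a U-statistic whose kernel has mean $g(x;\theta)$, and averaging over prompts gives $g(\theta)$. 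It then follows that $\mathbb{E}[\|\widehat{g}(\theta_i)\|^2\mid\mathcal{F}_i]=\|g(\theta_i)\|^2+\textrm{MSE}(\widehat{g}(\theta_i))\le \|g(\theta_i)\|^2+M$. The definition of $M$ as a supremum over $\theta$ is exactly what allows this to be applied at the random iterate $\theta_i$.

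Next, since $J$ is $L$-smooth, the ascent version of the descent lemma gives
$J(\theta_{i+1})\ge J(\theta_i)+\eta_{i+1}\langle g(\theta_i),\widehat{g}(\theta_i)\rangle-\frac{L\eta_{i+1}^2}{2}\|\widehat{g}(\theta_i)\|^2$.
Taking conditional expectations and using the first step yields
$\mathbb{E}[\Delta_{i+1}\mid\mathcal{F}_i]\le \Delta_i-(\eta_{i+1}-\tfrac{L\eta_{i+1}^2}{2})\|g(\theta_i)\|^2+\tfrac{L\eta_{i+1}^2}{2}M$.
When $\eta_{i+1}\le 2/L$ the coefficient of $\|g(\theta_i)\|^2$ is nonpositive, so the PL condition (Assumption~\ref{assump:PL}) can be applied to get
$\mathbb{E}\Delta_{i+1}\le(1-2\mu\eta_{i+1}+L\mu\eta_{i+1}^2)\mathbb{E}\Delta_i+\tfrac{L\eta_{i+1}^2M}{2}$.

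Under the constant schedule (a), set $\rho=1-2\mu\beta+L\mu\beta^2$. Since $\beta<(2L)^{-1}$, we have $\rho\in[0,1)$. Unrolling the recursion gives
$\mathbb{E}\Delta_n\le\rho^n\mathbb{E}\Delta_0+\frac{L\beta^2M}{2}\sum_{k<n}\rho^k\le\rho^n\mathbb{E}\Delta_0+\frac{L\beta^2M}{2(1-\rho)}$.
Because $2(1-\rho)=4\mu\beta-2L\mu\beta^2$, this matches the stated bound exactly.

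Under the $1/i$ schedule (b), the recursion becomes $a_{i+1}\le(1-\frac{2\mu\beta}{i+1}+\frac{L\mu\beta^2}{(i+1)^2})a_i+\frac{L\beta^2M}{2(i+1)^2}$. I would prove $a_n\le \max(K_1/n,\,c/n)$ by induction, where $K_1=\frac{(1+\epsilon)L\beta^2M}{4\mu\beta-2}$. The constant $c$ is chosen to handle two things: the early iterations, where $\eta_i>2/L$ or where the contraction factor is not yet effective, and the initial value, which is bounded because $Z$ is bounded. For the inductive step, suppose $a_i\le K/i$ with $K\ge K_1$. It suffices to check that $(1-\frac{2\mu\beta}{i+1}+O(i^{-2}))\frac{K}{i}+\frac{L\beta^2M}{2(i+1)^2}\le\frac{K}{i+1}$. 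Expanding $\frac{K}{i}=\frac{K}{i+1}(1+\frac1i)$, the coefficient of $(i+1)^{-2}$ reduces to the requirement $(2\mu\beta-1)K\ge\frac{L\beta^2M}{2}$ up to $O(i^{-1})$ corrections. The condition $\beta>(2\mu)^{-1}$ makes $2\mu\beta-1>0$. The slack factor $(1+\epsilon)$ absorbs the $O(i^{-1})$ corrections once $i\ge i_0(\beta,\mu,L,\epsilon)$, and $c$ is enlarged to cover all $i<i_0$.

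I expect this last induction to be the main obstacle. The difficulty is tracking the lower-order terms carefully enough that the leading constant comes out as exactly $(1+\epsilon)L\beta^2M/(4\mu\beta-2)$, and arranging that $c$ depends only on $(\beta,\mu,L,\epsilon)$. The natural choice is $c=i_0\sup\Delta$, but $\sup\Delta$ brings in the bound on $Z$. If this dependence cannot be removed, I would state it as an implicit dependence on that bound.
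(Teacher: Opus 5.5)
Your proposal is correct and follows essentially the same route as the paper. The common steps are: the $L$-smoothness ascent inequality, unbiasedness, and PL give the one-step recursion; the recursion is unrolled for the constant schedule; and an induction handles the $1/i$ schedule. The paper packages that induction as its auxiliary Lemma~\ref{lem:nonasymptotic-iteration}, with $A=2\mu\beta$, $B=\mu L\beta^2$, $C=L\beta^2M/2$, and threshold $n_0=\lceil B(1+\epsilon)/(\epsilon(A-1))\rceil+1$ playing the role of your $i_0$.

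Your caveat about $c$ is justified. The paper's own constant is $c=a n_0$, where $a$ bounds $\mathbb{E}[\Delta(\pi_{\theta_n})]$ via the bounded reward, so it depends on more than $(\beta,\mu,L,\epsilon)$. Restricting to $\eta_{i+1}\le 2/L$ before applying PL is care the paper omits. The one point you leave implicit is $\rho\ge 0$, which needs $\mu\le L$; the paper obtains this from Lemma~\ref{lem:PL-condition}.
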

Lemma \ref{lemma:policynonasympotic} provides a unified analysis for the meta-algorithm, covering vanilla, oracle, and GRPO-type estimators under both learning rate schedules. The bounds reveal that the sub-optimality gap: (i) scales with the smoothness constant $L$; (ii) decreases with the PL constant $\mu$ and the number of iterations $n$; and (iii) is crucially governed by the MSE of the gradient estimator $M$. As this MSE is the dominant factor (for the constant schedule) and the convergence rate constant (for the $1/i$ schedule), the oracle property and optimality we established for the GRPO gradient estimator directly translates into the learned policy's performance. By substituting the MSE bounds derived in Proposition \ref{prop:MSE} into Lemma \ref{lemma:policynonasympotic}, we can explicitly characterize how the group size $G$ and batch size $B$ affects the suboptimality gap, leading to the following scaling law for GRPO:

\begin{theorem}[Scaling law for GRPO]\label{thm:policynonasympotic}
    For GRPO-type algorithms, the sub-optimality upper bounds established in Lemma \ref{lemma:policynonasympotic} depend on $B$ and $G$ through the following quantity
    \begin{equation}\label{eqn:quantity}
        \frac{c_1}{B}+\frac{c_2}{BG}+\frac{c_3}{BG^2},
    \end{equation}
    where the constants are given by $c_1=\sup_{\theta} \mathbb{E} \|g(X;\theta)-g(\theta)\|_2^2$, $c_2=\sup_{\theta}\text{trace}[\Sigma_{\text{oracle}}(\theta)]$ and $c_3=O(\sup_{\theta} \mathbb{E} \|\nabla_\theta \log\pi_{\theta}(Y|X) \|^2)$. 
    
    Under a fixed sampling budget $N = BG$ per iteration or a total sampling budget $\bm{N}=nBG$, the optimal group size $G^*$ that minimizes the sub-optimality upper bounds is:
    \begin{equation}\label{eqn:scalinglaw}
        G^* = \sqrt{\frac{c_3}{c_1}}.
    \end{equation}
\end{theorem}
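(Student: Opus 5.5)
The plan is to substitute the minibatch MSE expansion of Proposition \ref{prop:MSE} into the bounds of Lemma \ref{lemma:policynonasympotic}. Then I optimise the resulting expression in $G$ under the budget constraint.

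\emph{Step 1: bound $M$ uniformly.} The leading error in Proposition \ref{prop:MSE} is $O(\mathbb{E}\|\nabla_\theta\log\pi_\theta(Y|X)\|^2/(BG^2))$. Under Assumption \ref{assump:lipcon} the score is uniformly bounded, and under Assumption \ref{assump:boundedreward} the reward is bounded. So the constant in this $O(\cdot)$ can be chosen uniformly in $\theta$: it comes from the second-order Hoeffding term, whose kernel second moment is controlled by the squared score times the squared range of $Z$. Taking $\sup_\theta$ of each of the three terms separately then gives
\[
M=\sup_\theta \mathrm{MSE}(\widehat g_{\rm GRPO}(\theta))\le \frac{c_1}{B}+\frac{c_2}{BG}+\frac{c_3}{BG^2},
\]
with $c_1,c_2,c_3$ as in the statement. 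All three are finite by boundedness of the score and the reward.

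\emph{Step 2: locate the $(B,G)$-dependence.} In Lemma \ref{lemma:policynonasympotic} the quantities $(\mu,L,\beta,\Delta(\pi_{\theta_0}))$ do not involve $B$ or $G$.
\begin{itemize}
\item Under schedule (a), the transient term $(1-2\mu\beta+L\mu\beta^2)^n\mathbb{E}\Delta(\pi_{\theta_0})$ does not depend on $(B,G)$. The stationary term is $M$ times a constant $L\beta^2/(4\mu\beta-2L\mu\beta^2)$.
\item Under schedule (b), the bound is $\max\{c' M/n,\,c/n\}$. Its $(B,G)$-dependence enters only through $M$, and it is monotone in $M$.
\end{itemize}
Hence in both cases the bounds depend on $(B,G)$ only through \eqref{eqn:quantity}, and minimising the bound over $(B,G)$ reduces to minimising \eqref{eqn:quantity}.

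\emph{Step 3: optimise under the budget.} Under $N=BG$ fixed, substitute $B=N/G$ to get
\[
\frac{1}{N}\Big(c_1G+c_2+\frac{c_3}{G}\Big).
\]
This is strictly convex in $G>0$ when $c_1,c_3>0$. Setting the derivative $c_1-c_3/G^2$ to zero gives $G^*=\sqrt{c_3/c_1}$. Under a total budget $\bm N=nBG$ with $n$ also fixed, $BG=\bm N/n$ is again fixed, so the same minimiser results. The schedule-(a) transient term is unaffected. The schedule-(b) bound depends on $n$ and $M$ separately, and for fixed $n$ is again minimised by minimising $M$. If one also allows $n$ to vary under the total budget, I would observe that the product $n\cdot M \propto (c_1G+c_2+c_3/G)\cdot n/(nBG)$. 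Under schedule (b) with $BG=\bm N/n$, the bound reads $c'\,(c_1G+c_2+c_3/G)/\bm N$, which is independent of how $n$ is split, so the optimal $G$ is unchanged. This independence is what the paper calls the universality of the optimal $G$.

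\emph{Main obstacle.} The delicate point is Step 1: making the $O(\cdot)$ remainder in Proposition \ref{prop:MSE} an explicit uniform constant $c_3$, rather than a pointwise one. If the remainder is only pointwise, the supremum over $\theta$ of a sum may not split into a sum of suprema with the right constants. I would address this by computing the second-order Hoeffding term exactly: its squared norm is $\binom{G}{2}^{-1}\mathbb{E}\|h-h_1-h_1+h_0\|^2 \le 2\,\mathbb{E}\|h\|^2/(G(G-1))$, which is bounded by a constant times $\sup_\theta\mathbb{E}\|\nabla_\theta\log\pi_\theta\|^2$. I would then absorb the difference between $G(G-1)$ and $G^2$ into $c_3$, accepting that $G^*$ is optimal for the stated upper bound rather than for the exact MSE.
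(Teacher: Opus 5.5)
Your proposal is correct and follows the paper's route. You substitute the uniform MSE bound from Proposition~\ref{prop:MSE} for $M$ in Lemma~\ref{lemma:policynonasympotic}, set $B=N/G$ (or $B=\bm{N}/(nG)$), and minimize $c_1G+c_2+c_3/G$; the only difference is that the paper uses $a^2+b^2\ge 2ab$ where you set the derivative to zero. Your extra care, making $c_3$ uniform via $\mathbb{E}\|h-h_1-h_1+h_0\|^2\le\mathbb{E}\|h\|^2$ and checking the $M$-dependence under both schedules, is sound. One slip: under schedule (b) the $n$-free quantity is $M/n$, not $n\cdot M$, and the $c/n$ branch of the max still depends on $n$ (though not on $G$).
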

In GRPO, sampling reasoning traces from the learning policy (Line 4 of Algorithm~\ref{alg1}) is computationally expensive.
With a fixed sampling budget, Equation \eqref{eqn:quantity} reveals a trade-off in the allocation of computational resources. A \textit{small} $G$ allows for a larger batch size $B$ and more iterations $n$, which reduces the first variance term in \eqref{eqn:quantity} and accelerates the convergence rates established in Lemma \ref{lemma:policynonasympotic}. Conversely, a \textit{large} $G$ reduces the higher-order residual term (the third term in \eqref{eqn:quantity}). 

Our scaling law proposed in \eqref{eqn:scalinglaw} explicitly balances this trade-off. Crucially, the optimal group size $G^*$ is \textit{universal}: it is independent of the budget $N$, the number of iterations $n$ and the learning rate schedule. Instead, $G^*$ depends solely on the underlying data generating process and the geometry of the policy space. This makes our result highly practical for implementation, as the optimal group size remains constant for a given task regardless of the total compute available. We will verify this observation empirically in Section \ref{sec: exp}. While the constants $c_i$ are formally defined as suprema over the parameter space, they may yield conservative theoretical bounds. In practice, these suprema can be relaxed by estimating the constants at a few representative parameter values (e.g., those of the reference model).

We remark that the aforementioned results apply only to upper bounds on the suboptimality gap. Although GRPO attains a sharper upper bound, this does not necessarily imply that it achieves a strictly smaller suboptimality gap. To  rigorously establish its superior performance, a more refined characterization of its suboptimality gap -- beyond bounds -- is required. This motivates our study of its asymptotic distribution in Theorem \ref{thm:policyasympotic} below.

Nonetheless, the asymptotic analysis presents substantial theoretical challenges. Most classical asymptotic results rely on the uniqueness of the population-level optimizer (i.e., a unique $\theta^*$ that maximizes $\mathbb{E}^{\pi_{\theta}}(Z)$) and the strict negative definiteness of the Hessian matrix $H(\theta^*)$ at that optimum. In overparameterized models such as LLMs, both assumptions are inherently violated. This leads to two complications: (i) Parameter convergence is not well-defined in the classical sense: the estimators may oscillate within an optimal manifold of population-level maximizers and approach different maximizers across iterations. This lack of identifiability prevents the sequence of estimators from converging to a single fixed point. (ii) The parameter's asymptotic distribution becomes analytically intractable, since the estimator fails to converge to a fixed point in the first place.

To address the first challenge, we shift our focus from point-wise convergence to set-wise convergence. We define $\Theta^*$ as the set of all population-level maximizers,
\begin{equation*}
\Theta^* = \{ \theta \in \Theta : \theta \in \arg\max_{\theta^* \in \Theta} \mathbb{E}^{\pi_{\theta^*}}(Z) \},
\end{equation*}
and characterize parameter convergence by the distance of the estimator $\theta_n$ to the set $\Theta^*$, 
\begin{equation*}
d(\theta_n, \Theta^*) = \inf_{\theta^* \in \Theta^*} \|\theta_n - \theta^*\|.
\end{equation*}
We say the estimator $\theta_n$ is consistent if $d(\theta_n, \Theta^*) \stackrel{P}\to 0$ as $n \to \infty$. To address the second challenge, we recognize that while the parameter estimates are non-identifiable, the suboptimality gap remains identifiable. We thus analyze the asymptotic distribution of the suboptimality gap rather than the parameters themselves. 

Consider the asymptotic regime where the number of iterations $n \to \infty$ and other parameters such as the batch size $B$ and group size $G$ are constant. For this analysis, we adopt the $1/i$ schedule for the learning rate. This is motivated by Lemma \ref{lemma:policynonasympotic}: under the constant learning rate schedule, the sub-optimality gap might not converge to zero when $B$ and $G$ are fixed. While one could theoretically allow $B$ and $G$ to grow with $n$ or explore alternative decay schedules, these configurations introduce substantial technical complexity into the proofs without altering our major findings. We focus on this specific regime to more clearly elucidate the performance of GRPO, and impose the following conditions. 

\begin{assumption}[Compact support]\label{assump:boundedsupport}
    The parameter space $\Theta$ is compact. 
\end{assumption}
\begin{assumption}[Hessian matrices]\label{assump:Hessian}
    $\Theta^*$ is closed and convex, and all Hessian matrices at $\Theta^*$ share the same rank $r$. Moreover, there exists an orthogonal projection matrix $Q$ and a strictly negative definite matrix $H^*$ such that $Q^\top Q = I_r$ and  $Q^\top H(\theta^*)Q = H^*,\forall \theta^*\in \Theta^*$.
\end{assumption}

\begin{assumption}[Convergence of covariance matrix]\label{assum:convergence}
    There exists a (possibly random) matrix $\Gamma$ such that the following holds almost surely as $n\to \infty$,
    \begin{equation*}
        \mathbb{E}\left\{(\widehat{g}(\theta_n) - g(\theta_n))(\widehat{g}(\theta_n) - g(\theta_n))^\top\Big| \theta_n \right\} \to \Gamma.
    \end{equation*} 
\end{assumption}

\begin{assumption}[Weak strong concavity]\label{ass:WSC}
    Let $\Pi_{\Theta^*}$ denote the projection operator such that $\Pi_{\Theta^*}(\theta):= \arg\min_{x\in\Theta^*} d(\theta, x)$, we have
    \begin{equation*}
    \langle g(\theta), \Pi_{\Theta^*}(\theta)-\theta \rangle \geq \mu d^2(\theta, \Theta^*), \quad\forall\theta\in\Theta.
    \end{equation*}
\end{assumption}

Assumption \ref{assump:boundedsupport} is mild and commonly imposed in the literature \citep[e.g.,][]{schmidt2020nonparametric}. Assumption \ref{assump:Hessian} relaxes the standard identifiability condition by allowing the population parameter to form a connected manifold rather than a unique point. It assumes that, after projection onto a suitable $r$-dimensional subspace, the parameter becomes identifiable with a strictly negative definite Hessian $H^*$. This effectively decomposes the parameter space into an identifiable component defined through the transformation matrix $Q$ and a non-identifiable component along its orthogonal complement. Assumption \ref{assum:convergence} is not overly restrictive, as the limiting matrix $\Gamma$ is allowed to be random. The resulting asymptotic distribution can be interpreted as a mixture distribution: first, $\Gamma$ is realized, and then the limiting distribution is obtained conditionally on $\Gamma$. Finally, Assumption \ref{ass:WSC} imposes a weak strong concavity (WSC) condition. It is weaker than strong concavity (which entails a unique optimizer) but stronger than PL (Assumption \ref{assump:PL}). In contrast to strong concavity, WSC does not require the objective function to be globally concave in all directions, but only along directions that point toward the optimal set $\Theta^*$. Thus, the objective function $\mathbb{E}^{\pi_{\theta}} (Z)$ may remain flat along other directions, allowing multiple optimizers to exist. Under the $L$-smoothness condition (Assumption \ref{assump:lipcon}), it can be shown that WSC implies the PL condition.

\begin{theorem}[Consistency \& asymptotic distribution]\label{thm:policyasympotic}
Suppose Assumptions \ref{assump:boundedreward}, \ref{assump:lipcon}, \ref{assump:PL}, \ref{assump:LR}(b) and \ref{assump:boundedsupport} hold. Suppose each $C_i^{(b,g)}$ in Algorithm \ref{alg1} is either a function of $X^{(b)}$ or a GRPO-type baseline in \eqref{eqn:GRPObaseline}. Then the estimator $\theta_n$ is consistent in the sense that $d(\theta_n, \Theta^*)\stackrel{P}\to 0$. 

Suppose Assumptions \ref{assump:Hessian} -- \ref{ass:WSC} hold additionally. Then the output policy satisfies
\begin{equation*}
    n\Delta(\pi_{\theta_n}) \stackrel{d}\to \sum_{k=1}^r w_k \chi^2_{1,k}, 
\end{equation*}
where $r$ is the rank of the Hessian matrix, $\{\chi_{1,k}^2\}_{k=1}^r$ are i.i.d. $\chi_1^2$ random variables, and $\{w_k\}_{k=1}^r$ are positive weights arranged in non-increasing order, determined by the asymptotic covariance matrix of the gradient estimators (see step (iv) of the proof of Theorem \ref{thm:policyasympotic} in the Supplementary Material).
\end{theorem}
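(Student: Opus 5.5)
The proof splits into the two claims of Theorem \ref{thm:policyasympotic}: consistency, then the weighted $\chi^2$ limit for the suboptimality gap.

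\textbf{Consistency.} This should follow almost directly from Lemma \ref{lemma:policynonasympotic} under schedule (b). That lemma gives $\mathbb{E}[\Delta(\pi_{\theta_n})]=O(1/n)$, provided $M=\sup_\theta \mathrm{MSE}(\widehat g(\theta))$ is finite. Finiteness of $M$ holds for every admissible baseline: it follows from Proposition \ref{prop:MSE}, or directly from bounded rewards (Assumption \ref{assump:boundedreward}) together with bounded scores (Assumption \ref{assump:lipcon}). Markov's inequality then gives $\Delta(\pi_{\theta_n})\stackrel{P}\to 0$. To pass from the value gap to the distance $d(\theta_n,\Theta^*)$, I will use compactness (Assumption \ref{assump:boundedsupport}) and continuity of $\theta\mapsto\mathbb{E}^{\pi_\theta}(Z)$. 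For every $\epsilon>0$, the set $\{\theta: d(\theta,\Theta^*)\ge\epsilon\}$ is compact and avoids $\Theta^*$, so $\Delta$ is bounded below on it by some $\delta_\epsilon>0$. Hence $P(d(\theta_n,\Theta^*)\ge\epsilon)\le P(\Delta(\pi_{\theta_n})\ge\delta_\epsilon)\to0$.

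\textbf{Asymptotic distribution.} The plan is a stochastic-approximation analysis of the projected iterate, in four steps.

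(i) Rate of the iterate. WSC (Assumption \ref{ass:WSC}) and the usual recursion for $\mathbb{E}\,d^2(\theta_{n+1},\Theta^*)$ should give $\mathbb{E}\,d^2(\theta_n,\Theta^*)=O(1/n)$. One expands $\|\theta_{n+1}-\Pi_{\Theta^*}(\theta_n)\|^2$, uses the martingale-difference property of $\widehat g-g$ (the GRPO estimator is unbiased by Lemma \ref{lem:gradientUstat}), and notes $\beta>(2\mu)^{-1}$.

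(ii) Linearisation. Write $\theta_n^*=\Pi_{\Theta^*}(\theta_n)$ and set $\xi_n=Q^\top(\theta_n-\theta_n^*)$. Taylor expansion gives $g(\theta_n)=H(\theta_n^*)(\theta_n-\theta_n^*)+O(d^2)$. Assumption \ref{assump:Hessian}, with closedness and convexity of $\Theta^*$, should ensure that $\theta_n-\theta_n^*$ lies (to leading order) in the range of $Q$. This yields the recursion
\[
\xi_{n+1}=\Big(I_r+\tfrac{\beta}{n+1}H^*\Big)\xi_n+\tfrac{\beta}{n+1}Q^\top\varepsilon_n+R_n,\qquad \varepsilon_n=\widehat g(\theta_n)-g(\theta_n).
\]
The remainder $R_n$ collects the curvature error $O(d^2/n)$ and the drift of the projection point $\theta_n^*$. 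The latter is controlled because the component of the step orthogonal to $Q$ does not change the distance to the convex set $\Theta^*$ to first order.

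(iii) CLT. By Assumption \ref{assum:convergence}, the conditional covariance of $\varepsilon_n$ converges to $\Gamma$. The increments are bounded, so the Lindeberg condition holds. The martingale CLT for stochastic approximation with $1/i$ steps (e.g.\ Fabian or Polyak–Juditsky type results, applicable because $\beta H^*+I/2$ is stable given $\beta>(2\mu)^{-1}$) then gives $\sqrt n\,\xi_n\stackrel{d}\to N(0,\Sigma)$. Here $\Sigma$ solves the Lyapunov equation
\[
(\beta H^*+\tfrac12 I)\Sigma+\Sigma(\beta H^*+\tfrac12 I)^\top=-\beta^2Q^\top\Gamma Q,
\]
conditionally on $\Gamma$.

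(iv) Gap. A second-order expansion of the value at $\theta_n^*$, where the gradient vanishes, gives
\[
n\Delta(\pi_{\theta_n})=-\tfrac12(\sqrt n\,\xi_n)^\top H^*(\sqrt n\,\xi_n)+o_p(1).
\]
The continuous mapping theorem then gives the limit $\sum_k w_k\chi^2_{1,k}$, where the $w_k$ are the eigenvalues of $-\tfrac12\Sigma^{1/2}H^*\Sigma^{1/2}$ in non-increasing order.

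\textbf{Main obstacle.} I expect step (ii) to be the hardest: showing that the non-identifiable directions contribute only $o_p(n^{-1/2})$ to $\xi_n$ despite the projection point $\theta_n^*$ moving along $\Theta^*$. I would first try to bound $\|\theta_{n+1}^*-\theta_n^*\|$ by the Lipschitz property of projection onto a convex set. I would then show that the resulting cross terms are summable after weighting by $\prod_{j}(I+\beta H^*/j)$, using the $O(1/n)$ rate from step (i).
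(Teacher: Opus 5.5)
Your consistency argument and your $O(1/n)$ rate for $\mathbb{E}\,d^2(\theta_n,\Theta^*)$ match the paper's. Step (ii), however, has a genuine gap, located exactly where you expect the main obstacle. With $\xi_n=Q^\top(\theta_n-\theta_n^*)$, the increment $\xi_{n+1}-\xi_n$ contains the term $-Q^\top(\theta_{n+1}^*-\theta_n^*)$. Your proposed control cannot make this term negligible. Nonexpansiveness of the projection gives only $\|Q^\top(\theta_{n+1}^*-\theta_n^*)\|\le\|\theta_{n+1}-\theta_n\|=\eta_n\|\widehat g(\theta_n)\|$, and $\|\widehat g(\theta_n)\|$ does not shrink, because the noise keeps covariance $\Gamma$. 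The weights $\prod_{j=k+1}^n(I+\beta H^*/j)$ have norm of order $(k/n)^{a}$, with $a=\beta\lambda_{\min}(-H^*)$. So the resulting bound on the weighted accumulation is only $O(n^{-a}\sum_{k\le n}k^{a-1})=O(1)$. Even if these increments were martingale differences, they would be of order $n^{-1/2}$, the same order as the fluctuation you are trying to identify. The rate $\mathbb{E}\,d^2(\theta_n,\Theta^*)=O(1/n)$ does not help here, since this term is driven by the step size, not by the distance to $\Theta^*$.

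The missing idea is the paper's Lemma \ref{lem:singlepoint}: $Q^\top\Theta^*$ is a single point $v^*$. Take $\theta_1^*,\theta_2^*\in\Theta^*$. Convexity puts the segment between them in $\Theta^*$, where $J$ is constant and $g$ vanishes. A second-order expansion then gives $(\theta_2^*-\theta_1^*)^\top H(\tilde\theta)(\theta_2^*-\theta_1^*)=0$ for some $\tilde\theta\in\Theta^*$. Since $H(\tilde\theta)$ is negative definite on the range of $Q$ and annihilates the range of $\widetilde Q$ (the block form \eqref{eqn:step3normal}), this forces $Q^\top\theta_1^*=Q^\top\theta_2^*$. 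Hence $\xi_n=Q^\top\theta_n-v^*$ and $\xi_{n+1}-\xi_n=\eta_n Q^\top\widehat g(\theta_n)$ exactly. The motion of $\theta_n^*$ never enters the identifiable coordinates.

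Once this is in place, the rest of your plan is a valid route. It differs from the paper's in how the Taylor remainder $r_n=Q^\top[g(\theta_n)-H(\theta_n^*)(\theta_n-\theta_n^*)]$ is handled.
\begin{itemize}
\item The paper uses WSC a second time to show $\|u_n-u_n^*\|=O(\|v_n-v^*\|)$. The remainder is then $o(\|v_n-v^*\|)$ and can be absorbed into a random gain matrix converging to $-\beta H^*$, after which the paper applies the CLT of \citet{zhang2016central}. This is why it first upgrades consistency to almost sure convergence via Robbins--Siegmund.
\item You keep the deterministic matrix $I+\beta H^*/n$ and treat $\eta_n r_n$ as a perturbation. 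Since $\|r_n\|\le 2L\,d(\theta_n,\Theta^*)$ and $\|r_n\|=o(d(\theta_n,\Theta^*))$, consistency, bounded convergence and Cauchy--Schwarz give $\mathbb{E}\|r_n\|=o(n^{-1/2})$. Its weighted accumulation is then $o_P(n^{-1/2})$ whenever $a>1/2$. Your route therefore needs neither almost sure convergence nor the second use of WSC.
\end{itemize}
Only $o(d)$, not your $O(d^2)$, is available, because a Lipschitz Hessian is not assumed; but $o(d)$ suffices.

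Two smaller points. First, your claim that Assumption \ref{assump:Hessian} plus closedness and convexity puts $\theta_n-\theta_n^*$ in the range of $Q$ to leading order is unjustified: at relative-boundary points of $\Theta^*$ the normal cone contains $\widetilde Q$-directions. It is also unnecessary, since $H(\theta_n^*)\widetilde Q=0$ already gives $Q^\top H(\theta_n^*)(\theta_n-\theta_n^*)=H^*\xi_n$. Second, the condition $a>1/2$ requires $\lambda_{\min}(-H^*)\ge\mu$. You assert this without proof; the paper derives it from the PL condition in Lemma \ref{lem:PL-condition}.
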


Theorem~\ref{thm:policyasympotic} is novel in two respects: (i) It establishes the asymptotic distribution of the suboptimality gap for policy gradient estimators as a weighted sum of independent $\chi^2$ random variables, whereas most existing literature focuses primarily on finite-sample error bounds. (ii) It derives this limiting distribution in the overparameterized regime, which goes beyond the classical assumption of a unique optimizer with a non-singular Hessian. 

Nonetheless, establishing such a result in the overparameterized regime is non-trivial. While classical results have established CLTs for parameter estimates in stochastic gradient algorithms \citep[e.g.,][]{zhang2016central}, these results do not directly apply due to overparameterization. Our key idea is to first establish a CLT for the subset of parameters corresponding to directions in which the Hessian is negative definite. A second-order Taylor expansion then shows that the suboptimality gap is driven entirely by the asymptotic behavior of these directions, which facilitates the derivation of its limit. 

As mentioned earlier, the weights $w_k$s are determined by the covariance matrices of the gradient estimators. The oracle and optimality properties of GRPO gradient estimators (Corollaries~\ref{coro:gradoracle} and \ref{coro:gradoptimal}) directly affect these weights, which in turn establishes the oracle and optimality properties of the GRPO policy. We summarize these results in Corollaries~\ref{coro:policyoracle} and \ref{coro:policyoptimal} below.

\begin{coro}[Oracle property of the policy]\label{coro:policyoracle}
    Suppose the assumptions in Theorem \ref{thm:policyasympotic} hold. Suppose the projected covariance matrix $\Omega(\theta^*)=\textrm{Cov}(Q^\top \widehat{g}(\theta^*))$ depends on $\theta^*\in \Theta^*$ only through $Q^\top \theta^*$ and is non-singular. Then GRPO's weights and oracle weights satisfy $w_{k,\rm GRPO}- w_{k,\rm oracle}=O(G^{-2})$. Consequently, the suboptimality gap of GRPO is asymptotically equivalent to that of the oracle algorithm as $G\to \infty$. 
\end{coro}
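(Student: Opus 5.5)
The plan is to make explicit the weights $w_k$ from step (iv) of the proof of Theorem \ref{thm:policyasympotic} and then substitute the covariance expansion implied by the Hoeffding decomposition behind Theorem \ref{thm:gradMSE} and Proposition \ref{prop:MSE}. In that proof, the projected iterate $Q^\top(\theta_n-\Pi_{\Theta^*}(\theta_n))$ satisfies a CLT at rate $\sqrt{n}$, with limiting covariance $\Sigma$ solving the Lyapunov equation of the $1/i$ schedule:
\[
(\beta H^*+\tfrac12 I_r)\Sigma+\Sigma(\beta H^*+\tfrac12 I_r)^\top=-\beta^2\Omega .
\]
Here $\Omega$ is the limit of $Q^\top\Gamma Q$. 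A second-order Taylor expansion gives $n\Delta(\pi_{\theta_n})\approx -\frac{n}{2}(\cdot)^\top H^*(\cdot)$, so the $w_k$ are the eigenvalues of $\frac12(-H^*)^{1/2}\Sigma(-H^*)^{1/2}$. The hypothesis that $\Omega(\theta^*)$ depends on $\theta^*$ only through $Q^\top\theta^*$ is what makes $\Gamma$ projected onto the identifiable directions a deterministic limit $\Omega(\theta^*)$ evaluated at the common projected optimum. It is therefore the same object for the oracle and GRPO algorithms, apart from which gradient estimator enters it. This gives the deterministic map $\Omega\mapsto(w_1,\dots,w_r)$, which is linear in $\Omega$ before the eigenvalue step, with $H^*$ and $\beta$ fixed and shared by both algorithms.

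The next step is to compare $\Omega_{\rm GRPO}$ and $\Omega_{\rm oracle}$. I will redo the Hoeffding decomposition of Lemma \ref{lem:gradientUstat} at the matrix level, not just for traces. Conditional on each prompt $X^{(b)}$, the GRPO estimator equals the oracle estimator plus a degenerate second-order term $R^{(b)}$. This term is uncorrelated with the first-order part and has covariance $O(\mathbb{E}\|\nabla_\theta\log\pi_\theta\|^2/G^2)$, which is uniformly bounded by Assumption \ref{assump:lipcon}. Averaging over the $B$ prompts and adding the common between-prompt covariance of $g(X;\theta)$ gives
\[
\mathrm{Cov}(\widehat g_{\rm GRPO}(\theta^*))=\mathrm{Cov}(\widehat g_{\rm oracle}(\theta^*))+E_G,\qquad \|E_G\|=O(G^{-2}),
\]
and hence $\Omega_{\rm GRPO}-\Omega_{\rm oracle}=Q^\top E_GQ=O(G^{-2})$. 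A small technical point is that the covariance in Assumption \ref{assum:convergence} is taken at $\theta_n$, not at $\theta^*$. I will use consistency (first part of Theorem \ref{thm:policyasympotic}), the continuity of the score in Assumption \ref{assump:lipcon}, and bounded rewards to pass to the limit by dominated convergence. This identifies $\Gamma$ with $\mathrm{Cov}(\widehat g(\theta^*))$ along the projected coordinates.

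Finally, I will propagate the perturbation. The Lyapunov operator $\Sigma\mapsto(\beta H^*+\frac12 I)\Sigma+\Sigma(\beta H^*+\frac12I)^\top$ is invertible, because $\beta H^*+\frac12 I$ is Hurwitz: this uses $\beta>(2\mu)^{-1}$ together with the curvature bound $H^*\preceq-\mu I$ along $Q$, which follows from Assumption \ref{ass:WSC}. Its inverse is a bounded linear map, so $\Sigma_{\rm GRPO}-\Sigma_{\rm oracle}=O(G^{-2})$, and the same holds after conjugating by $(-H^*)^{1/2}$. Weyl's inequality for symmetric matrices then gives $|w_{k,\rm GRPO}-w_{k,\rm oracle}|\le\|\cdot\|_{\rm op}=O(G^{-2})$ for each ordered eigenvalue. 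The limits $\sum_k w_k\chi^2_{1,k}$ therefore converge to each other as $G\to\infty$, which is the claimed asymptotic equivalence. Non-singularity of $\Omega$ keeps all $r$ weights strictly positive, so the limiting mixtures have the same number of components.

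The main obstacle is the first step: extracting from the proof of Theorem \ref{thm:policyasympotic} a clean, algorithm-independent formula for the $w_k$ in terms of $\Omega$. This requires showing that the non-identifiable directions and the random limit $\Gamma$ do not contribute algorithm-specific terms. The hypothesis on $\Omega(\theta^*)$ is exactly what I plan to use to collapse $\Gamma$ to a deterministic quantity there.
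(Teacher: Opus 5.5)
Your proposal is correct and follows the paper's proof. Both arguments identify $Q^\top\Gamma Q$ with the deterministic $\Omega(\theta^*)$ via consistency, dominated convergence and the stated hypothesis, then transfer the $O(G^{-2})$ matrix-level Hoeffding covariance gap to $\Sigma$ through the linear integral/Lyapunov map, and conclude with Weyl's inequality. The only difference is cosmetic: you take the $w_k$ as eigenvalues of $\tfrac12(-H^*)^{1/2}\Sigma(-H^*)^{1/2}$, which has the same spectrum as the paper's $\tfrac12\Sigma^{1/2}(-H^*)\Sigma^{1/2}$ but is linear in $\Sigma$, so you avoid the paper's appeal to local Lipschitz continuity of $\Sigma\mapsto\Sigma^{1/2}$.
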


\begin{coro}[Optimality of the policy]\label{coro:policyoptimal}
    Suppose the assumptions in Corollary \ref{coro:policyoracle} hold. Suppose $Q^\top \nabla_{\theta} \log\pi_{\theta^*}(Y|X) [\nabla_{\theta} \log\pi_{\theta^*}(Y|X)]^\top Q$ is conditionally uncorrelated with $Z$ given $X$, for some $\theta^*\in \Theta^*$. Then for any meta-algorithm whose baseline term is a function of the prompt only, its weights satisfy $w_k\ge w_{k,\rm oracle}+O(G^{-2})$. Consequently, as  $G\to \infty$, its suboptimality gap is asymptotically no smaller than that of GRPO. 
\end{coro}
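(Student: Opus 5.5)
The plan is to reduce Corollary \ref{coro:policyoptimal} to a matrix comparison of projected gradient covariances, and then transfer that comparison to the weights through the explicit map from covariance to weights given in step (iv) of the proof of Theorem \ref{thm:policyasympotic}.

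In that step, $\sqrt{n}\,Q^\top(\theta_n-\Pi_{\Theta^*}(\theta_n))$ converges to a Gaussian whose covariance $V$ solves the Lyapunov equation of the $1/i$ schedule,
\[
(\beta H^*+\tfrac12 I_r)V+V(\beta H^*+\tfrac12 I_r)^\top=-\beta^2\Omega ,
\]
with $\Omega$ the limit of the projected gradient covariance $Q^\top\Gamma Q$. A second-order Taylor expansion of the suboptimality gap then gives $n\Delta(\pi_{\theta_n})\approx -\tfrac12 W^\top H^* W$ with $W\sim N(0,V)$. Hence the $w_k$ are the eigenvalues of $(-H^*/2)^{1/2}V(-H^*/2)^{1/2}$. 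The key structural fact is that $\Omega\mapsto V$ is linear, with
\[
V=\beta^2\int_0^\infty e^{(\beta H^*+I/2)t}\,\Omega\, e^{(\beta H^*+I/2)^\top t}\,dt ,
\]
which converges because $\beta>(2\mu)^{-1}$ and $\mu$ is bounded by the smallest eigenvalue of $-H^*$. This map is therefore monotone in the Loewner order. By Weyl's monotonicity theorem, $\Omega_1\preceq\Omega_2$ then implies $w_k(\Omega_1)\le w_k(\Omega_2)$ for every $k$ once eigenvalues are sorted in non-increasing order.

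The steps, in order:
\begin{enumerate}
\item Compute $\Omega$ at a maximizer $\theta^*$. Under the hypothesis of Corollary \ref{coro:policyoracle}, $\Omega(\theta^*)$ depends only on $Q^\top\theta^*$, so the limit $Q^\top\Gamma Q$ is deterministic and equal to $\Omega(\theta^*)$.
\item For a prompt-only baseline $C(X)$, decompose the conditional covariance given $X$. Write $s=\nabla_\theta\log\pi_{\theta^*}(Y|X)$ and $V(X)=V^{\pi_{\theta^*}}(X)$. Using $\mathbb{E}[s\mid X]=0$,
\[
\mathbb{E}[Q^\top s s^\top Q(Z-C)^2\mid X]=\mathbb{E}[Q^\top s s^\top Q(Z-V)^2\mid X]+(V-C)^2\,\mathbb{E}[Q^\top ss^\top Q\mid X]+2(V-C)\,\mathbb{E}[Q^\top ss^\top Q(Z-V)\mid X].
\]
The cross term vanishes by the conditional uncorrelation hypothesis, since $\mathrm{Cov}(Q^\top ss^\top Q, Z\mid X)=0$ gives $\mathbb{E}[Q^\top ss^\top Q(Z-V)\mid X]=0$. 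The middle term is positive semidefinite. Adding the shared prompt-sampling term $\mathrm{Cov}(Q^\top g(X;\theta^*))/B$ gives $\Omega_C\succeq\Omega_{\rm oracle}$ in the Loewner order. This is the matrix analogue of the scalar argument behind Corollary \ref{coro:gradoptimal}.
\item Apply monotonicity of $\Omega\mapsto V$ and Weyl's theorem to get $w_k\ge w_{k,\rm oracle}$.
\item Invoke Corollary \ref{coro:policyoracle} to replace $w_{k,\rm oracle}$ by $w_{k,\rm GRPO}+O(G^{-2})$.
\end{enumerate}
Together these yield $w_k\ge w_{k,\rm oracle}+O(G^{-2})$.

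For the conclusion, note that $\sum_k w_k\chi^2_{1,k}$ is stochastically increasing in each weight. The weights dominate coordinatewise, and the $\chi^2$ variables can be coupled as the same i.i.d.\ family. So as $G\to\infty$ the limiting gap law of any such algorithm stochastically dominates that of GRPO, i.e.\ it is asymptotically no smaller.

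The main obstacle is step 2 in the matrix setting: the conditional uncorrelation hypothesis must hold in exactly the projected form needed to kill the cross term. A further subtlety is that the hypothesis is assumed at only \emph{some} $\theta^*$. I would handle this by using the assumed dependence of $\Omega$ on $Q^\top\theta^*$ alone to argue that $\Omega$ is constant on the relevant portion of $\Theta^*$. I would also need to verify that the Lyapunov integral representation above matches the form used in step (iv) of the proof of Theorem \ref{thm:policyasympotic}.
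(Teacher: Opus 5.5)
Your proposal is correct and follows the same route as the paper. Both establish $\Omega_{\rm oracle}(\theta^*)\preceq\Omega(\theta^*)$ through the matrix version of the calculation behind Corollary \ref{coro:gradoptimal}, then carry this ordering through to the weights and compare with GRPO via Corollary \ref{coro:policyoracle}. Your explicit transfer step is more precise than the paper's remark that the rest ``follows similarly'' to Corollary \ref{coro:policyoracle}: you use positivity of the Lyapunov integral map together with Weyl's monotonicity theorem applied to $(-H^*/2)^{1/2}V(-H^*/2)^{1/2}$, whereas that corollary's perturbation argument (Lipschitz square root plus Weyl's inequality) by itself yields only closeness of weights, not an ordering.
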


\section{Experiments}\label{sec: exp}
We conduct two sets of experiments in this section to validate our theoretical findings. In Section~\ref{sec:exp-variance}, we empirically compare GRPO with the vanilla and oracle algorithms in terms of gradient evaluation, to verify the oracle property of the GRPO gradient estimator (Corollary~\ref{coro:gradoracle}) and its superiority over the vanilla estimator  (Corollary~\ref{coro:gradoptimal}). In Section~\ref{sec:choice_of_G}, we investigate the optimal group size for policy optimization, to verify the universality of the optimal group size $G^*$ established by our scaling law (Theorem~\ref{thm:policynonasympotic}). 

\subsection{Oracle property in gradient evaluation}\label{sec:exp-variance}
\begin{figure}[t]
    \centering
    \begin{minipage}[t]{0.3\textwidth}
        \centering
        \includegraphics[width=\textwidth]{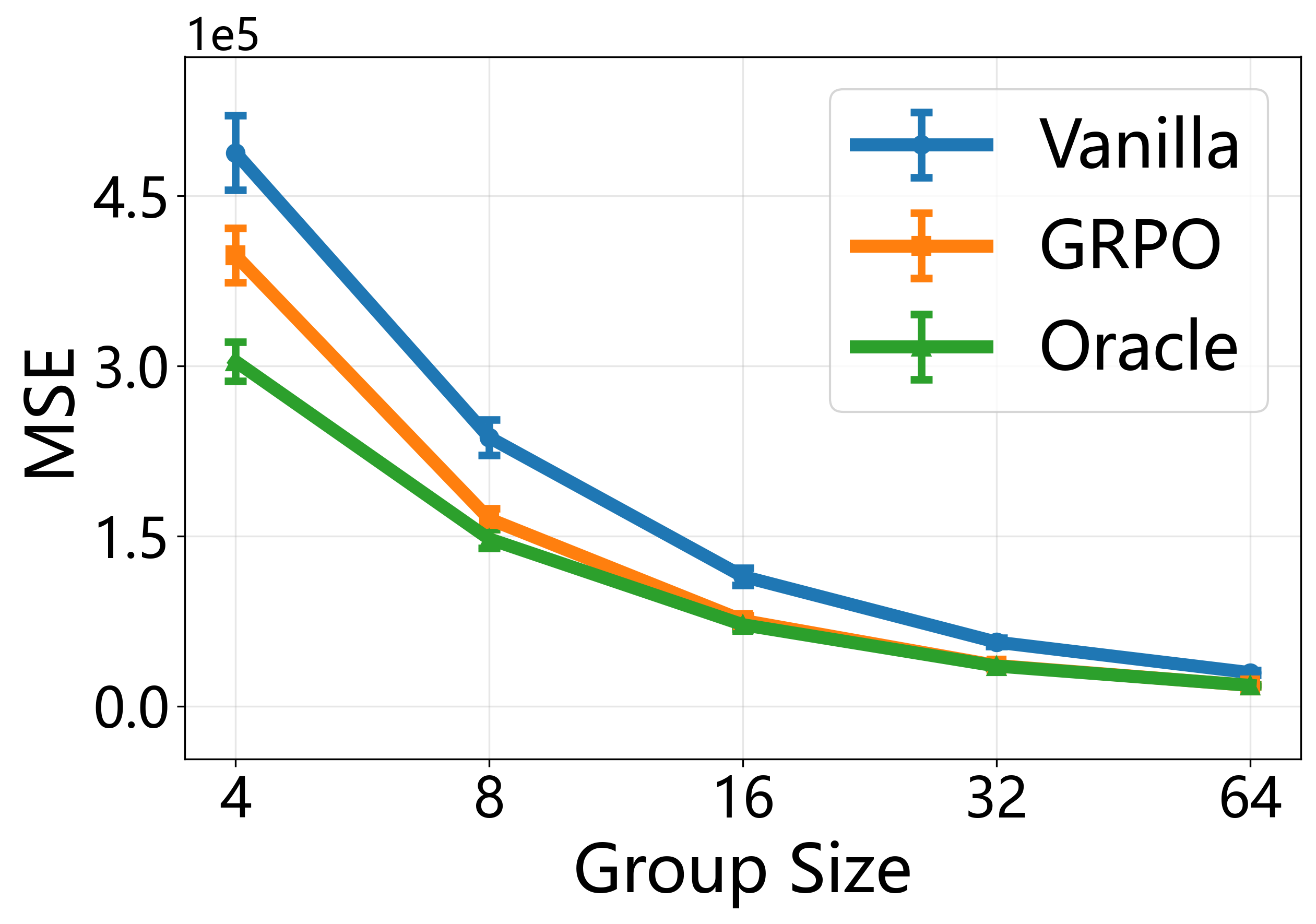}
        
        \centerline{\textit{(a) Base model}}
    \end{minipage}\hfill
    \begin{minipage}[t]{0.3\textwidth}
        \centering
        \includegraphics[width=\textwidth]{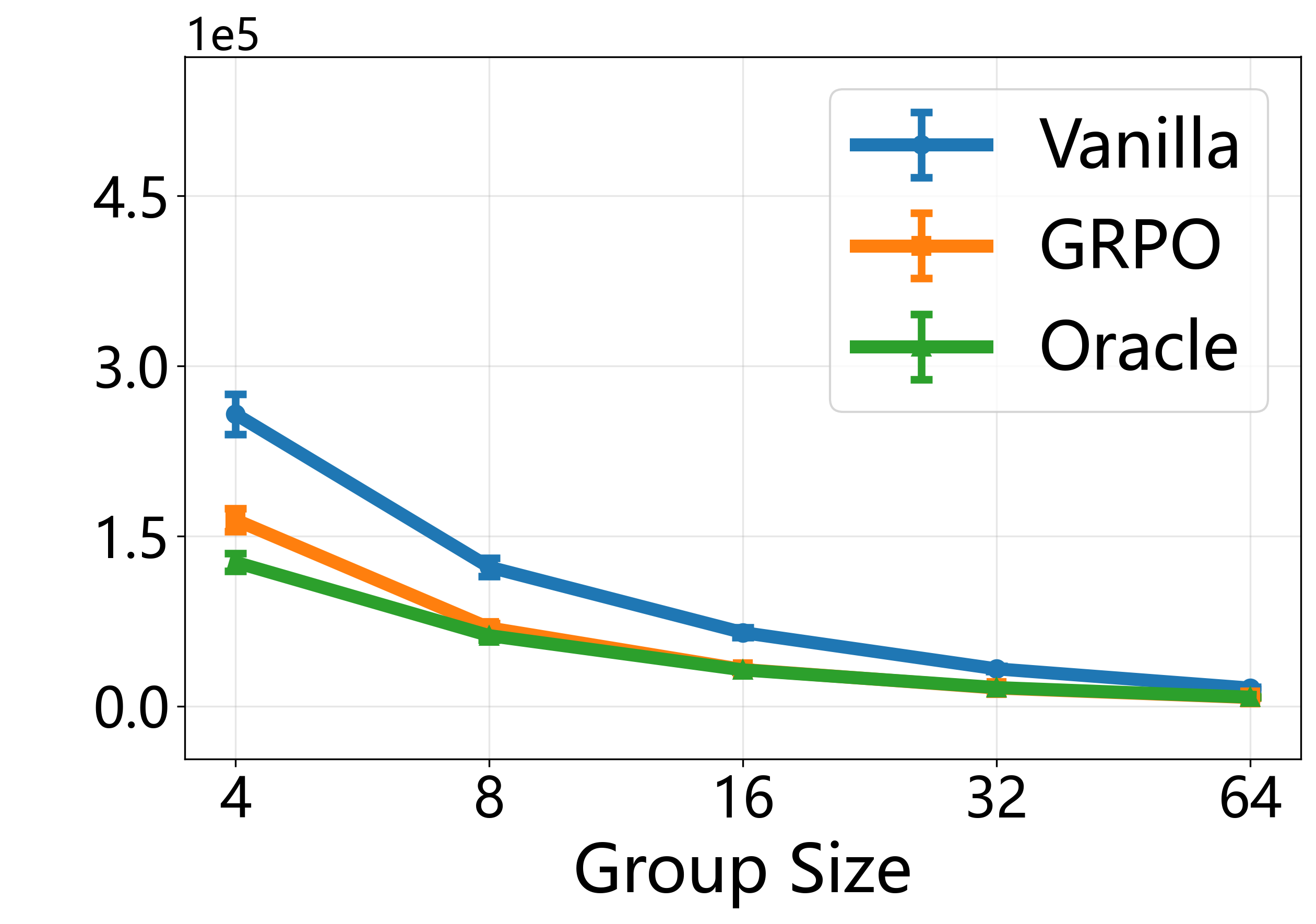}
        
        \centerline{\textit{(b) Instruct model}}
    \end{minipage}\hfill
    \begin{minipage}[t]{0.3\textwidth}
        \centering
        \includegraphics[width=\textwidth]{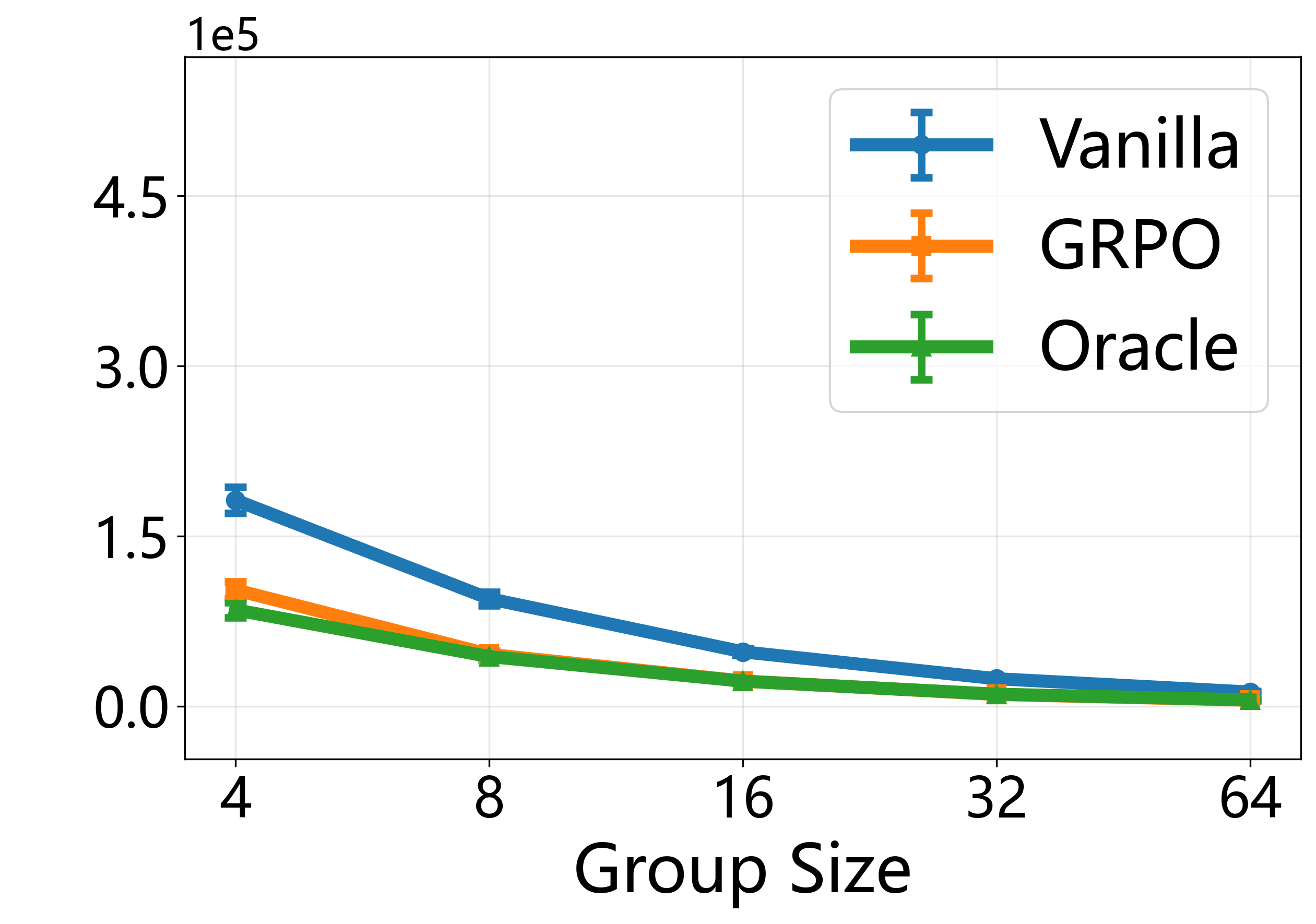}
        
        \centerline{\textit{(c) ICL model}}
    \end{minipage}
    \caption{MSEs of three policy gradient estimators (vanilla, GRPO-type, and oracle) under three model configurations (base, instruct and in-context learning (ICL)) for different group sizes. Error bars represent 95\% confidence intervals of the empirically estimated MSEs.}
    \label{fig:toy example}
\end{figure}

We first evaluate the MSE of the three gradient estimators introduced in Section~\ref{sec:grge}: vanilla, GRPO-type, and oracle. To conduct this comparison, we construct a synthetic arithmetic dataset consisting of 500 questions.  Each question is a medium-difficulty integer arithmetic problem, sampled uniformly from five categories: (i) two-step addition or subtraction, (ii) three-step addition or subtraction, (iii) single-step multiplication, (iv) integer division, and (v) addition or subtraction with parentheses. For each generated problem, we record the ground-truth solution. We then query an LLM (the prompt is provided in Section~\ref{app:prompt} of the Supplementary Material), 
extract the solution from its output, and compute a binary reward indicating whether the solution equals the ground truth. 

We evaluate gradients aggregated over the 500 questions (see Equation \eqref{eqn:gtheta}) with respect to three target policies, derived from three models with progressively stronger reasoning capabilities: (i) \texttt{Qwen/Qwen2.5-0.5B Base} model, (ii) \texttt{Qwen/Qwen2.5-0.5B Instruct} model, and (iii) \texttt{Qwen/Qwen2.5-0.5B Instruct} model with in-context learning \citep[ICL,][]{brown2020language}. For the third model, in addition to the prompt, we provide a few-shot in-context demonstration containing several arithmetic questions with solutions (see Section~\ref{app:prompt} of the Supplementary Material). These models produce target policies of different quality: the base model is expected to achieve the lowest accuracy, while the ICL model attains the highest. We also consider multiple groups sizes $G\in\{4,8,16,32,64\}$ to estimate the gradient and use Monte Carlo simulations to evaluate each gradient estimator's MSE and report their associated $95\%$ confidence interval in Figure \ref{fig:toy example}. 

We make the following observations:
\begin{enumerate}[leftmargin=*]
    \item Across all combinations of group size and target policy, the {\bf \color{blue}{vanilla}} estimator (blue line) exhibits the largest MSE, reflecting the well-known high-variance nature of REINFORCE. In contrast, the {\bf \color{orange}{GRPO-type}} estimator (orange line) achieves significantly smaller MSEs in all cases, which verifies the second assertion of Corollary \ref{coro:gradoptimal} and demonstrates its superiority over the vanilla estimator. 
    \item The {\bf \color{OliveGreen}{oracle}} estimator (green line) achieves the smallest MSE, as expected. Nevertheless, with a moderately large group size ($G=8$), the MSE of the GRPO-type estimator is already close to that of the oracle estimator. As $G$ increases further (e.g., $G=32$ or $64$), the two estimators become nearly indistinguishable, confirming the oracle property in Corollary~\ref{coro:gradoracle}. 
    \item Finally, the MSE of all estimators decreases as the group size $G$ increases. It also decreases with the model’s reasoning capability. This behavior is expected as well: as the model becomes stronger, its outputs are more likely to be correct and thus more deterministic. Conversely, weaker models generate more random outputs, which enlarges the variance of the gradient estimator.
\end{enumerate}

\subsection{Optimal group size for policy optimization}\label{sec:choice_of_G}
As shown in our theoretical analysis and empirical results (Section~\ref{sec:choice_of_G}), increasing the group size $G$ reduces the MSE of the GRPO-type gradient estimator. However, a larger $G$ also increases computational cost, as more outputs must be sampled per prompt. Under a fixed sampling budget (i.e., a fixed total number of sampled outputs), our scaling law in Theorem~\ref{thm:policynonasympotic} shows that the optimal group size $G^*$ depends only on the data and the policy model, and is universal with respect to other parameters such as the number of iterations $n$ and the total budget per prompt $N$. In this section, we verify this universality using two widely adopted math reasoning benchmark datasets, \textit{GSM8K} \citep{cobbe2021training} and \textit{MATH} \citep{hendrycks2measuring}. 

We first utilize GSM8K to examine the universality of $G^*$ across different training iterations $n$. Specifically, we fix the total sampling budget per prompt at $N = BG = 1024$ and evaluate six candidate group sizes: $G \in \{4, 8, 16, 32, 64, 128\}$. For each choice of $G$, we apply GRPO to fine-tune the \texttt{Qwen2.5-1.5B Instruct} model and calculate its test accuracy, defined as the percentage of correctly solved problems in the test dataset. In addition to reporting the final accuracy at the last iteration, we also record accuracy at intermediate checkpoints where $n \in \{200, 300, 400, 600, 800\}$. Given the inherent stochasticity of RLVR -- arising, for example, from randomness in sampled outputs -- which often leads to high variance across runs, we repeat the training five times for each value of $G$. We report the mean accuracy along with its associated confidence interval in Figure~\ref{fig:gsms8k_choice_g_steps_1024}. We note that much of the existing literature reports results from a single training run due to the substantial computational cost of training, which can lead to results that are difficult to reproduce.

\begin{figure}[t]
    \centering
    \begin{minipage}[t]{0.30\textwidth}
        \centering
        \includegraphics[width=\textwidth]{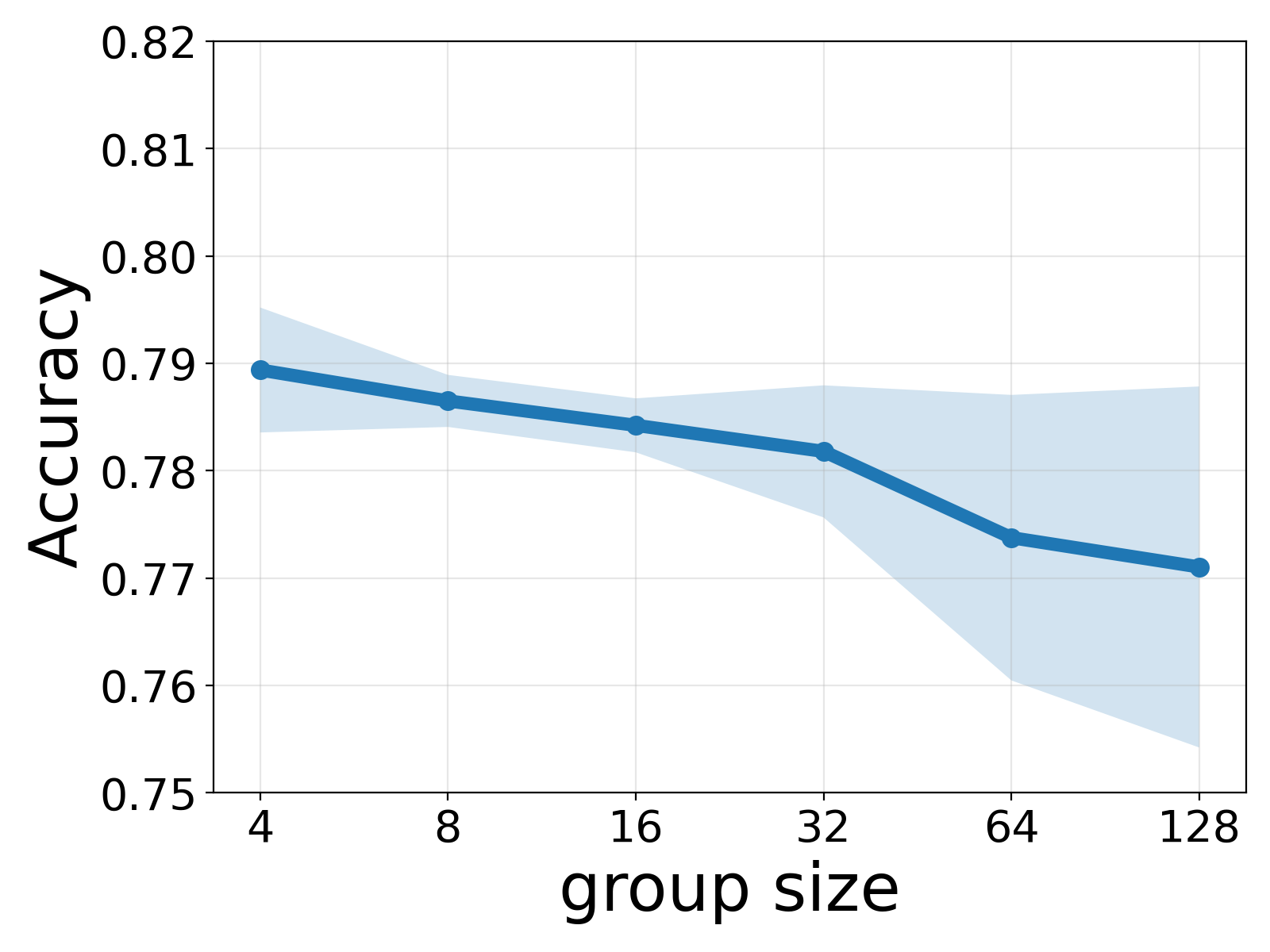}
        \centerline{\scalebox{0.9}{\textit{Step 200}}}
    \end{minipage}\hspace{0.6em}%
    \begin{minipage}[t]{0.30\textwidth}
        \centering
        \includegraphics[width=\textwidth]{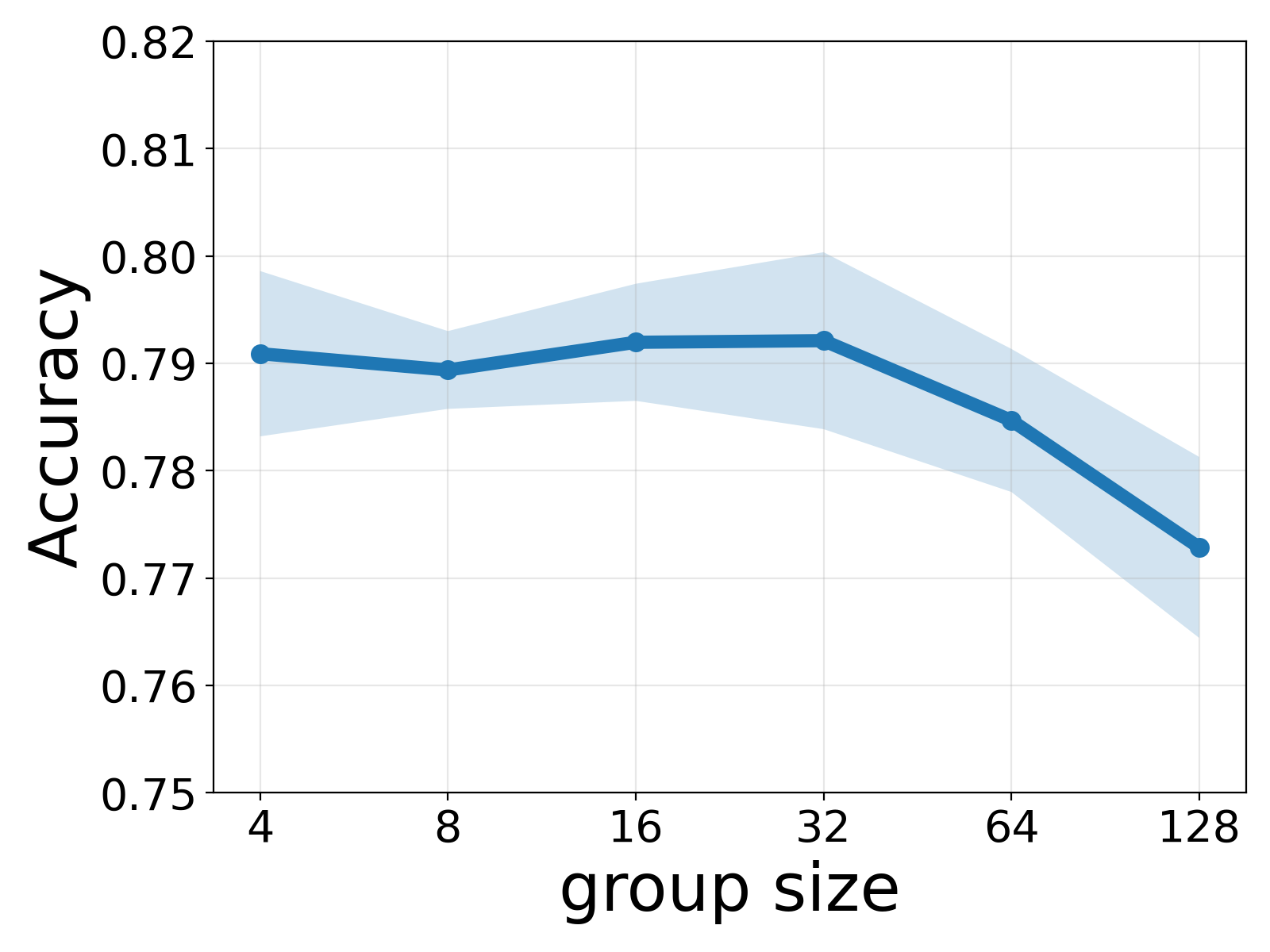}
        \centerline{\scalebox{0.9}{\textit{Step 300}}}
    \end{minipage}\hspace{0.6em}%
    \begin{minipage}[t]{0.30\textwidth}
        \centering
        \includegraphics[width=\textwidth]{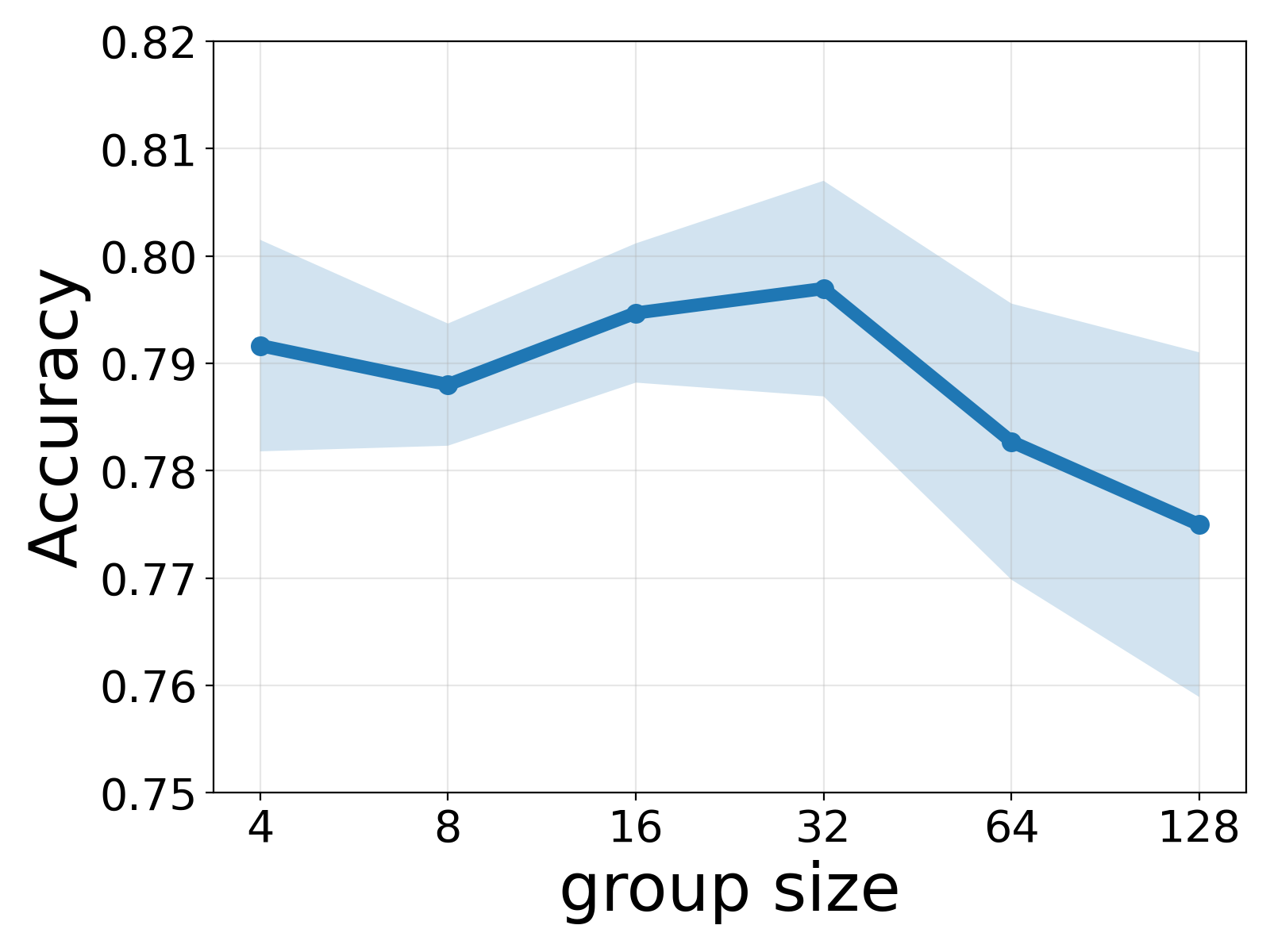}
        \centerline{\scalebox{0.9}{\textit{Step 400}}}
    \end{minipage}

    \vspace{0.6em}

    \begin{minipage}[t]{0.30\textwidth}
        \centering
        \includegraphics[width=\textwidth]{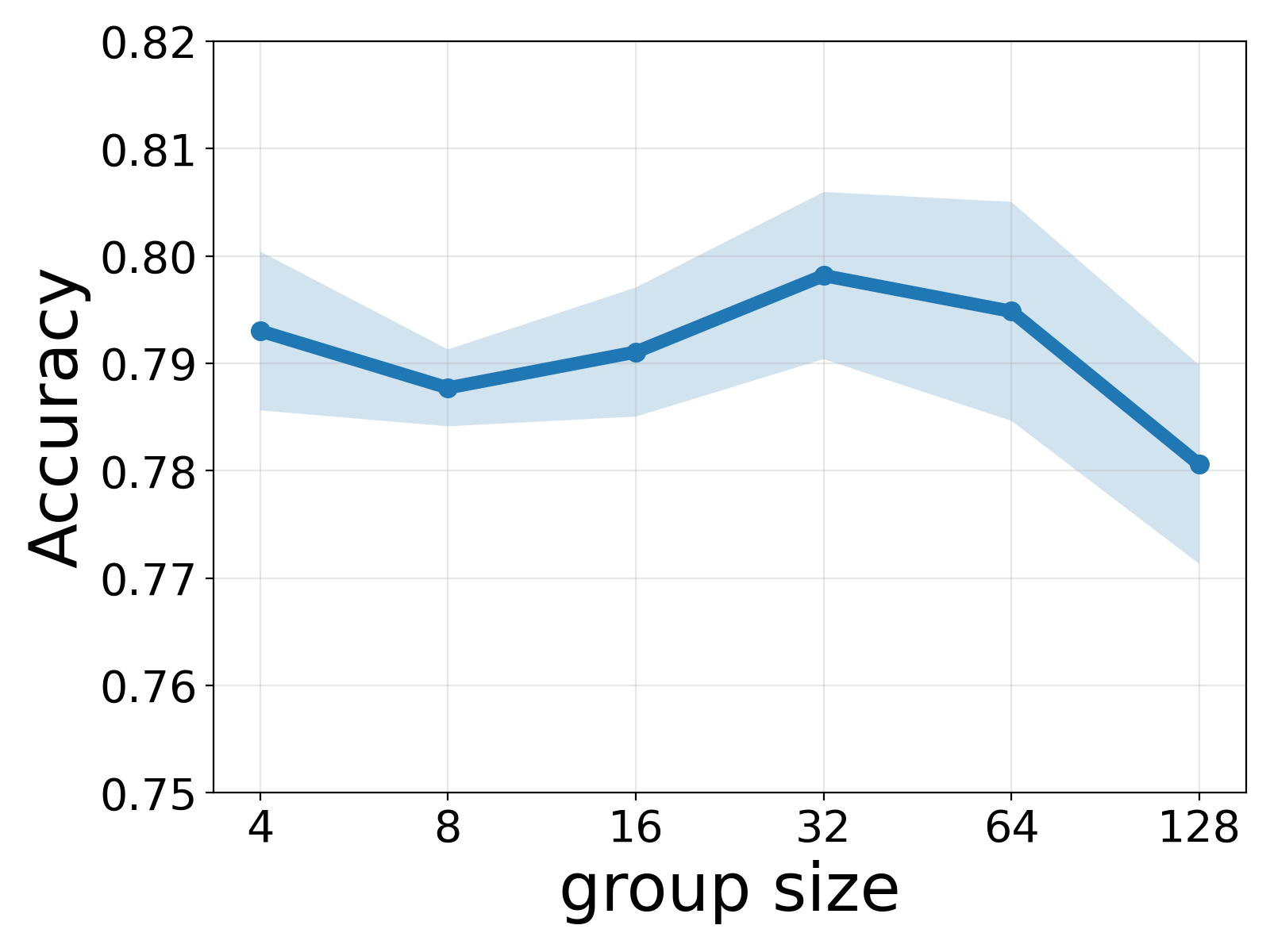}
        \centerline{\scalebox{0.9}{\textit{Step 600}}}
    \end{minipage}\hspace{0.6em}%
    \begin{minipage}[t]{0.30\textwidth}
        \centering
        \includegraphics[width=\textwidth]{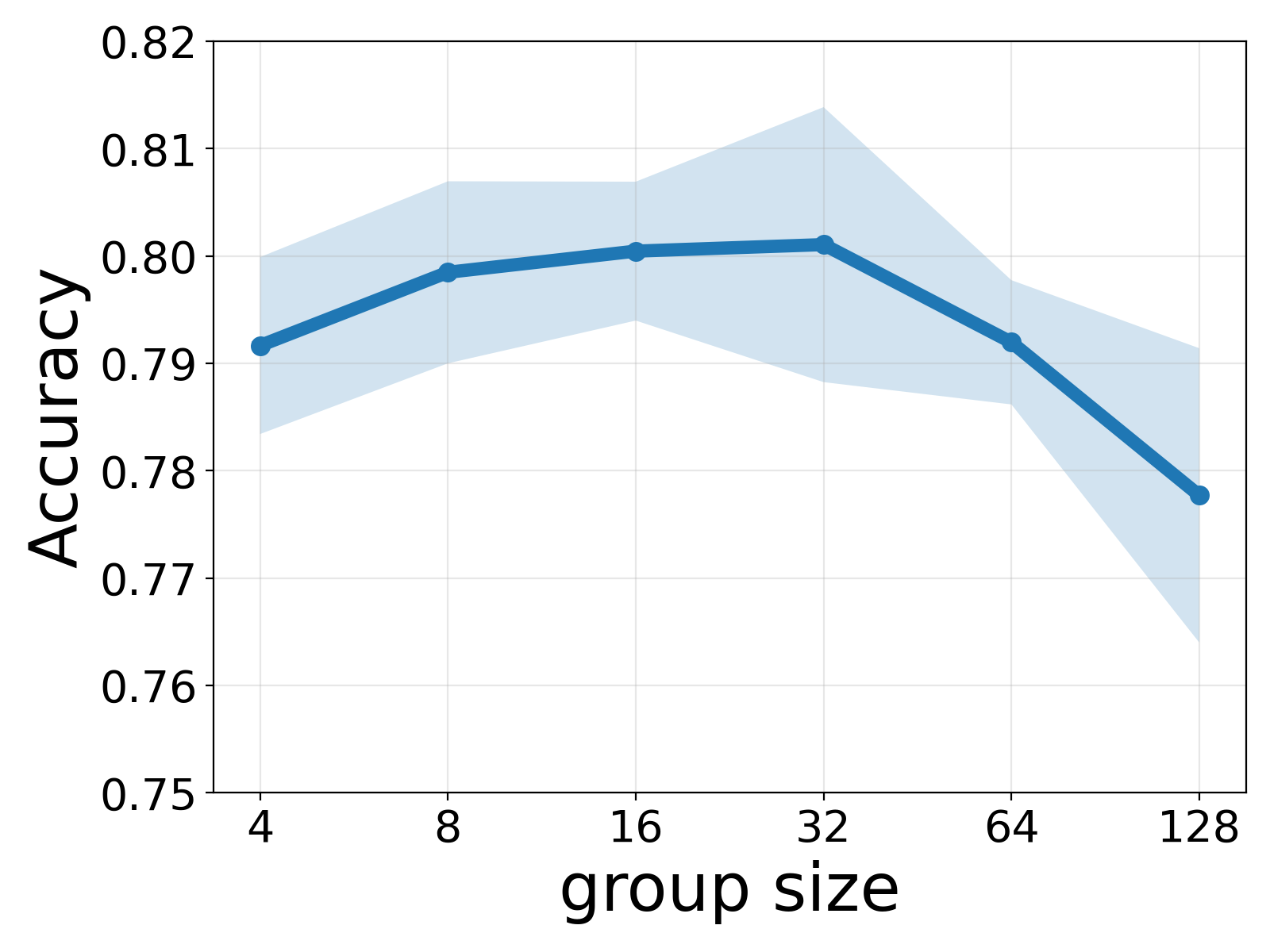}
        \centerline{\scalebox{0.9}{\textit{Step 800}}}
    \end{minipage}\hspace{0.6em}%
    \begin{minipage}[t]{0.30\textwidth}
        \centering
        \includegraphics[width=\textwidth]{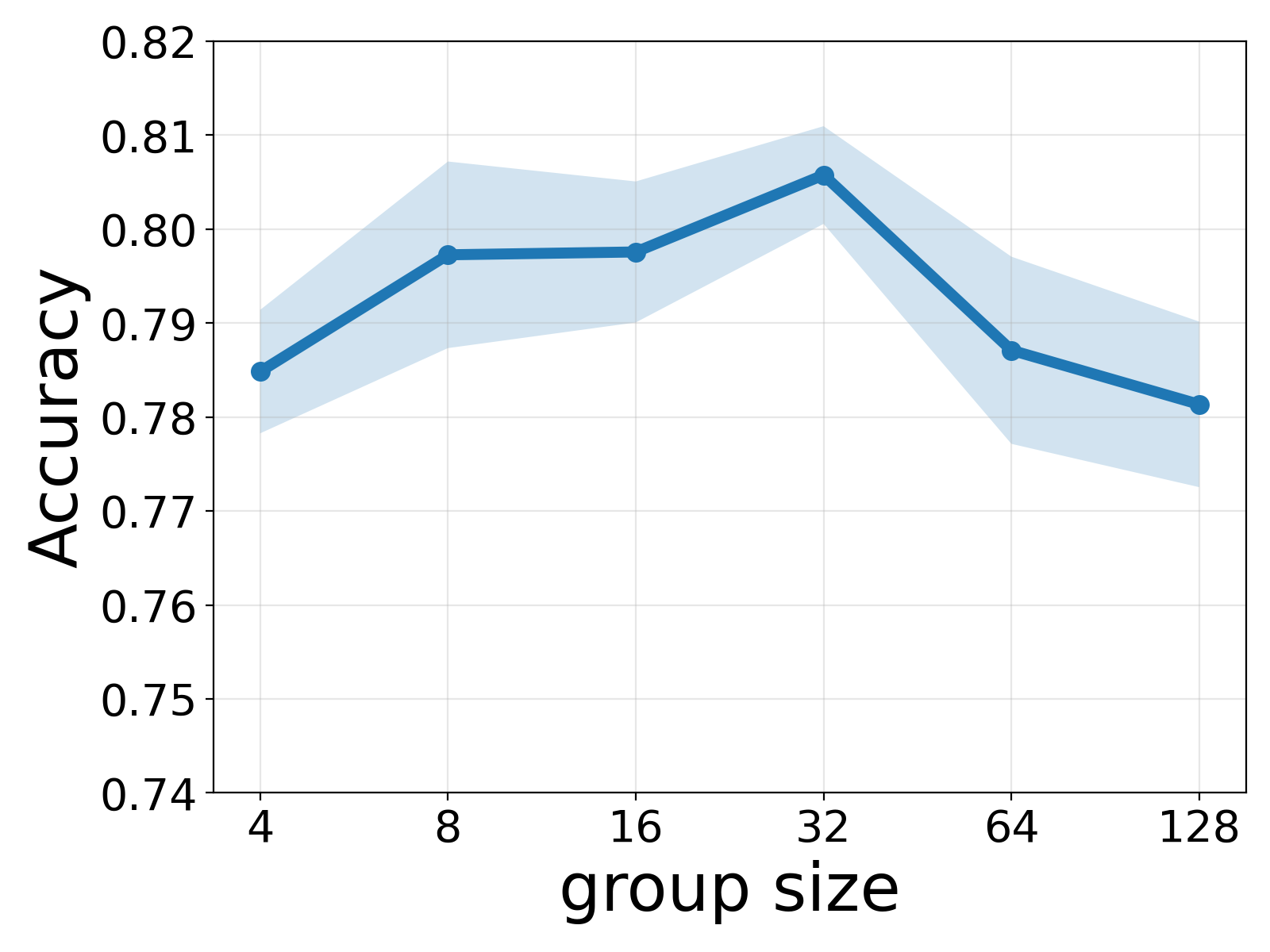}
        \centerline{\scalebox{0.9}{\textit{Final Step}}}
    \end{minipage}

    \caption{Test accuracy of GRPO-fine-tuned models at different training steps with a fixed sampling budget of $N=B \times G = 1024$ per prompt. Both training and evaluation are conducted on GSM8K. Each curve shows accuracy as a function of the group size $G \in \{4, 8, 16, 32, 64, 128\}$. Results are averaged over five independent runs, with shaded regions visualizing 95\% confidence bands.}
    \label{fig:gsms8k_choice_g_steps_1024}
\end{figure}

We make the following observations from Figure \ref{fig:gsms8k_choice_g_steps_1024}:
\begin{enumerate}[leftmargin=*]
    \item Except at training step $n = 200$, the test accuracy generally increases with the group size $G$ and then decreases as $G$ becomes larger. This trend is consistent with the scaling law in Theorem~\ref{thm:policynonasympotic}. When $G$ is small, the second-order residual term can be large, inflating the variance of the gradient estimator. Conversely, when $G$ is large, fixing the total sampling budget forces the batch size $B$ to be small, which enlarges the first variance term in \eqref{eqn:quantity}. As a result, the optimal group size $G^*$ lies between these two extremes.
    \item With the exception of $n = 200$, the optimal group size is consistently $G^* = 32$ across all training steps, demonstrating the universality of $G^*$ with respect to $n$. For $n = 300$ and $n = 800$, the performance with $G = 16$ is very close to that with $G = 32$. However, its accuracy deteriorates more noticeably for other values of $n$.
    \item Finally, the differences in test accuracy across different values of $G$ are mostly statistically insignificant, as only five independent runs are conducted for each setting. This limitation is due to the high computational cost of training. While increasing the number of runs could yield more statistically meaningful conclusions, doing so is not computationally feasible at this stage.
\end{enumerate}

We next use the MATH dataset to assess the universality of the optimal group size $G^*$ across different sampling budgets $N$. The procedure closely follows that used for GSM8K: we consider the same candidate group sizes $G \in \{4, 8, 16, 32, 64, 128\}$ and train the model separately for each choice of $G$. The differences are that, for MATH, we fine-tune a larger and more powerful \texttt{Qwen2.5-Math-7B} model, and we evaluate three different sampling budgets, $N \in \{1024, 2048, 4096\}$. Test accuracies of the resulting trained models are reported in Table~\ref{tab:batch_group_transposed_4dp}. 

It can be seen that the optimal group size is mostly 64, but increases to 128 as the sampling budget grows. We suspect this shift is due to the finite number of prompts, which remains constant with respect to the batch size $B$ and group size $G$, and is not accounted for in our theoretical analysis for simplicity. Meanwhile, when the sampling budget is fixed at 1024, the optimal $G^*$ is larger than that for GSM8K. This shift is expected, as the constants in \eqref{eqn:quantity} depend on the data and the model, and thus the optimal $G^*$ varies accordingly. These results suggest that larger models may benefit from a larger group size during training.

\begin{table}[t]
\centering
\caption{Test accuracy of GRPO-fine-tuned models at the final training step. Each row reports accuracy as a function of the group size $G \in {4, 8, 16, 32, 64, 128}$ with a fixed sampling budget per prompt; the sampling budget varies across rows. Both training and evaluation are conducted on MATH. Due to the high computational cost of training a 7B model, results are reported from a single run, with the highest accuracy highlighted in bold.}
\begin{adjustbox}{width=\textwidth}
\setlength{\tabcolsep}{15pt}
\renewcommand{\arraystretch}{1.15}
\begin{tabular}{lcccccc}
\toprule
\multirow{2}{*}{Sampling budget} & \multicolumn{6}{c}{Group size $G$} \\
 \cmidrule(lr){2-7}
 & 4 & 8 & 16 & 32 & 64 & 128 \\
\midrule
1024 & 0.7677 & 0.7491 & 0.7753 & 0.7627 & \textbf{0.7817} & 0.7743 \\
2048 & 0.7703 & 0.7679 & 0.7697 & 0.7793 & \textbf{0.7819} & 0.7665 \\
4096 & 0.7691 & 0.7713 & 0.7701 & 0.7673 & 0.7703 & \textbf{0.7757} \\
\bottomrule
\end{tabular}
\label{tab:batch_group_transposed_4dp}
\end{adjustbox}
\end{table}

\section{Conclusion}
This paper provides a rigorous theoretical analysis of GRPO, a cornerstone algorithm for enhancing the reasoning capabilities of LLMs. We show that GRPO is a statistically principled policy gradient algorithm whose gradient estimator naturally forms a U-statistic (Lemma \ref{lem:gradientUstat}). Leveraging Hoeffding's decomposition, we characterize its MSE (Theorem \ref{thm:gradMSE}, Proposition \ref{prop:MSE}) and establish both oracle (Corollary \ref{coro:gradoracle}) and optimality (Corollary \ref{coro:gradoptimal}) properties. For policy optimization, we derive explicit bounds on GRPO’s suboptimality gap (Lemma \ref{lemma:policynonasympotic}), leading to a scaling law that informs the optimal choice of group size (Theorem \ref{thm:policynonasympotic}). We characterize the asymptotic distribution of the suboptimality gap (Theorem \ref{thm:policyasympotic}), confirming GRPO’s oracle (Corollary \ref{coro:policyoracle}) and optimality (Corollary \ref{coro:policyoptimal}) properties in policy learning. We further extend these results to practical settings by accommodating reward normalization, importance sampling, and KL-divergence penalties (Lemma \ref{lem:pgobjective}, Theorem \ref{thm:practicalgradMSE}, Proposition \ref{prop:target}). Finally, we empirically validate GRPO’s gradient estimator's oracle and optimality properties (Figure \ref{fig:toy example}), and verify our scaling law with respect to group size $G$, demonstrating its universality across training iterations (Figure \ref{fig:gsms8k_choice_g_steps_1024}) and sampling budgets (Table \ref{tab:batch_group_transposed_4dp}). 

\bibliographystyle{plainnat}
\bibliography{reference}

\begin{appendices}

\section{Practical considerations}\label{sec:ext}
As commented in the main paper, the GRPO-type algorithm analyzed in Section \ref{sec:theory} differs from the implementation proposed in the original DeepSeekMath paper. To bridge this gap, we present a more consistent formulation in Algorithm \ref{alg2}. 
There are three major differences compared to Algorithm \ref{alg1}: (i) reward normalization, (ii) importance sampling (IS), and (iii) KL penalty. Specifically:
\begin{enumerate}[leftmargin=*]
    \item[(i)] \textbf{Reward normalization.} According to \eqref{eqn:advantage}, the advantage function $A^{(b,g)}$ is not simply the difference between the reward $Z^{(b,g)}$ and the baseline $\bar{Z}^{(b,-g)}$. Instead, this difference is normalized by the standard error of the within-group rewards (see \eqref{eqn:se}). Consequently, the algorithm assigns larger weight to prompts whose rewards are more uncertain. Intuitively, when a prompt is either too easy or too difficult, most responses receive a score of zero or achieve the maximum score, and the normalized advantage is small. In contrast, when the level of difficulty lies in the middle — so that some responses are correct while others are wrong — the prompt receives large weight.

    \item[(ii)] \textbf{Importance sampling.} The practical implementation includes a minibatch parameter. Whenever the minibatch size is smaller than the full batch size $B$, the sampled prompts are divided into $m$ minibatches, and the parameter is updated multiple times within each iteration (see Lines 8--10 of Algorithm \ref{alg2}). This is intended to improve learning efficiency: smaller minibatches lead to more frequent parameter updates, although each update is based on less data and is therefore noisier. Since the outputs are generated under the initial parameter $\theta_{\mathrm{old}}$ (see Line~4), whereas the gradient is evaluated at the current parameter $\theta$, a distributional shift arises. To correct for this mismatch,  
    the advantage function is multiplied by an IS ratio $\mu^{(b,g)}_t=\pi_{\theta,t}^{(b,g)}/\pi_{\theta_{\mathrm{old}},t}^{(b,g)}$ in \eqref{eqn:gradient}. Ideally, one would instead use the sequential IS ratio up to time $t$, $\Pi_{k=1}^t \mu^{(b,g)}_k$, 
    which yields an unbiased correction. However, it is well known that such sequential IS ratio suffers from the curse of horizon \citep{liu2018breaking}, in the sense that their variance grows exponentially fast with $t$. As such, the practical algorithm uses only the token-level ratio, which substantially reduces variance at the cost of introducing bias, as we show formally in Theorem~\ref{thm:practicalgradMSE}. The use of the token-level ratio is also in line with the PPO algorithm. %Meanwhile, the use of minibatches ; by contrast, using the full batch produces a more accurate gradient estimate, but only one update is performed per iteration, which may slow down learning.

    \item[(iii)] \textbf{KL penalty.} The second and third terms inside the square brackets in \eqref{eqn:gradient} arise from imposing a KL penalty 
    \begin{align*}
    \kappa \textrm{KL}(\pi_{\theta}\|\pi_{\theta_{\text{ref}}}) =\kappa\,\mathbb{E}_{X\sim f, Y\sim \pi_{\theta}(\bullet|X)}\log \frac{\pi_{\theta}(Y|X)}{\pi_{\theta_{\textrm{ref}}}(Y|X)}
    \end{align*}
    on the objective function, for some regularization parameter $\kappa>0$. The purpose is to discourage the learning policy to departure too far away from a reference policy, an idea that shares similar spirits with the pessimistic principle in offline RL.  
    In particular, GRPO uses Schulman's K3 estimator\footnote{\url{http://joschu.net/blog/kl-approx.html}} 
    \begin{eqnarray}\label{eqn:K3}
        \frac{1}{G}\sum_{g=1}^G\Big[\frac{\pi_{\theta_{\text{ref}}}(Y^{(b,g)}|X)}{\pi_{\theta}(Y^{(b,g)}|X)}-\log \frac{\pi_{\theta_{\text{ref}}}(Y^{(b,g)}|X)}{\pi_{\theta}(Y^{(b,g)}|X)}-1\Big]
    \end{eqnarray}
    for the KL divergence, which leads to the gradient estimator $G^{-1}\sum_t \nabla_\theta \log \pi_{\theta,t}^{(b,g)} (1-\pi_{\theta,t}^{(b,g)}/\pi_{\theta_{\text{ref}},t}^{(b,g)})$ in \eqref{eqn:gradient}. When $Y^{(g)}$ is sampled from $\pi_{\theta}(\bullet|X)$, it is immediate that the first term inside the square brackets of \eqref{eqn:K3} has mean one, so the expectation of \eqref{eqn:K3} reduces to the second term alone and is thus unbiased for the KL divergence. The advantage of using the K3 estimator over solely the second term (known as the K1 estimator) is that K3 achieves lower variance and is guaranteed to be non-negative, both of which help stabilize training \citep{shao2024deepseekmath}. 
\end{enumerate}

\begin{algorithm}[t]
\caption{Group relative policy optimization}\label{alg2}
\begin{algorithmic}[1]
    \STATE \textbf{Input:} prompt distribution $f(X)$, initial parameter $\theta \in \Theta$, learning rates $\{\eta_i\}_{i\in\mathbb{N}}$, batch size $B$, per-prompt group size $G$, number of minibatches $m$, KL regularization parameter $\kappa>0$ and reference model $\theta_{\textrm{ref}}$. 
    \FOR{$i=0,1,2,\dots,n-1$}
        \STATE Set $\theta_{\text{old}}\gets \theta$.
        
        \STATE Sample prompts $\{X^{(b)}\}_{b=1}^B \stackrel{\mathrm{iid}}{\sim} f(\cdot)$.
        
        \STATE For each prompt $X^{(b)}$, generate a group of $G$ outputs
        \[
        \{Y^{(b,g)}\}_{g=1}^G \sim \pi_{\theta_{\mathrm{old}}}(\cdot \mid X^{(b)}).
        \]
        \STATE For each output $Y^{(b,g)}$, obtain reward $Z^{(b,g)}$.
        \STATE Partition $\{1,\dots,B\}$ into $m$ disjoint minibatches $\mathcal{B}_1,\dots,\mathcal{B}_m$ of equal size.
        \FOR{$j=1,\dots,m$}
            \STATE Compute the minibatch gradient estimator $\widehat{g}(\theta)=|\mathcal{B}_j|^{-1}\sum_{b\in \mathcal{B}_j}\widehat{g}(X^{(b)};\theta)$ where
            \begin{align}\label{eqn:gradient}
            \widehat g(X^{(b)};\theta)
            :=
            \frac{1}{G}
            \sum_{g=1}^G\sum_{t}
            \nabla_\theta \log \pi_{\theta,t}^{(b,g)}
            \Bigg[
            \frac{\pi_{\theta,t}^{(b,g)}}{\pi_{\theta_{\mathrm{old}},t}^{(b,g)}}
            A^{(b,g)}
            +\kappa\frac{\pi_{\theta_{\mathrm{ref}},t}^{(b,g)}}{\pi_{\theta,t}^{(b,g)}}
            -\kappa
            \Bigg],
            \end{align}
            where
            \[
            \pi_{\theta,t}^{(b,g)}
            :=
            \pi_\theta\!\left(
            Y_t^{(b,g)} \mid X^{(b)}, Y_{<t}^{(b,g)}
            \right),
            \]
            and
            \begin{eqnarray}\label{eqn:advantage}
                A^{(b,g)}:=
            \frac{Z^{(b,g)}-\bar Z^{(b,-g)}}{\operatorname{se}(Z^{(b,\bullet)})},
            \end{eqnarray}
            where $Z^{(b,\bullet)}=\{Z^{(b,g)}\}_{g=1}^G$ and
            \begin{eqnarray}\label{eqn:se}
                \operatorname{se}(Z^{(b,\bullet)})=\sqrt{\frac{1}{G-1}\sum_g [Z^{(b,g)}-\bar{Z}^{(b)}]^2},
            \end{eqnarray}
            denotes its standard error and $\bar{Z}^{(b)}$ denotes the empirical group mean $G^{-1}\sum_{g=1}^G Z^{(b,g)}$.  
            \STATE Update the parameter:
            \[
            \theta \gets \theta + \eta_{i+1}\,\widehat g(\theta).
            \]
        \ENDFOR
    \ENDFOR
    \STATE \textbf{Output:} policy $\pi_\theta$.
\end{algorithmic}
\end{algorithm}
We begin by analyzing the gradient estimator $\widehat{g}(x;\theta)$ in \eqref{eqn:gradient}. The following lemma shows that, due to reward normalization, this gradient estimator is not exactly a U-statistic -- as the kernel function depends on $\operatorname{se}(Z^{(b,\bullet)})$, which is data-dependent -- but admits a U-statistic representation asymptotically as $G\to\infty$.
\begin{assumption}[Coverage]\label{assump:coverage}
    $\pi_{\theta}$ is lower bounded by $\epsilon$ for some $\epsilon>0$ and any $\theta\in \Theta$.
\end{assumption}
Conditions similar to \ref{assump:coverage} are frequently imposed in offline RL \citep[e.g.,][]{uehara2022review}. 
\begin{lemma}[Gradient estimator asymptotically equivalent to a U-statistic]\label{lem:pgobjective}
    Suppose Assumptions \ref{assump:boundedreward}, \ref{assump:lipcon} and \ref{assump:coverage} hold. Suppose $\text{std}_X(Z):=\sqrt{\text{Var}_{Z\sim \pi_{\theta_{\text{old}}(\bullet|X)}}(Z|X)}$ is bounded away from zero. Then $\widehat{g}(x;\theta) $ converges in probability to a second-order U-statistic
    \begin{equation*}
        \widehat{g}(x;\theta)
        =\binom{G}{2}^{-1}\sum_{1\le i<j\le G} h\big((Y^{(i)},Z^{(i)}),(Y^{(j)},Z^{(j)})\big),
    \end{equation*}
    as $G\to \infty$, with symmetric kernel
    \begin{align*}
    &h\big((Y^{(i)}, Z^{(i)}), (Y^{(j)}, Z^{(j)})\big)\\
    =& \frac{1}{2}
    \Big[
    \sum_t \mu_t^{(i)} W_t^{(i)}
    -
    \sum_t \mu_t^{(j)} W_t^{(j)}
    \Big] 
    \frac{Z^{(i)}-Z^{(j)}}{\mathrm{std}_x(Z)}
    +\frac{\kappa}{2}
    \Big[
    \sum_t (\mu_t^{(i)}-1) W_t^{(i)}
    +
    \sum_t (\mu_t^{(j)}-1) W_t^{(j)}
    \Big],
    \end{align*}
    where $W^{(g)}_t=\nabla_{\theta}\log \pi_{\theta}(Y^{(g)}_t|x,Y^{(g)}_{<t})$ and the superscript $(b)$ on $Y$ and $Z$ is removed for simplicity. 
\end{lemma}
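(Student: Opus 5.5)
The plan is to split $\widehat g(x;\theta)$ from \eqref{eqn:gradient} into an advantage part and a KL part, handle each separately, and then recombine them into the stated symmetric kernel.

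\textbf{Step 1: the advantage part.} With $W^{(g)}_t$ as in the statement and $\mu_t^{(g)}=\pi^{(g)}_{\theta,t}/\pi^{(g)}_{\theta_{\rm old},t}$, this part equals
\[
\frac{1}{\operatorname{se}(Z^{(\bullet)})}\cdot\frac1G\sum_{g}\Big(\sum_t\mu_t^{(g)}W_t^{(g)}\Big)\big(Z^{(g)}-\bar Z^{(-g)}\big).
\]
The expression multiplying $1/\operatorname{se}$ has the same form as $\widehat g_{\rm GRPO}$, with the score $\nabla_\theta\log\pi_\theta(Y^{(g)}|x)$ replaced by $S^{(g)}:=\sum_t\mu_t^{(g)}W_t^{(g)}$. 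The purely algebraic argument of Lemma \ref{lem:gradientUstat} therefore applies verbatim. It rewrites this expression as the U-statistic $U_G:=\binom G2^{-1}\sum_{i<j}\tfrac12(S^{(i)}-S^{(j)})(Z^{(i)}-Z^{(j)})$. Boundedness of $S^{(g)}$ follows from two facts. First, Assumption \ref{assump:lipcon} gives a uniformly bounded score. Second, Assumption \ref{assump:coverage} gives $\pi_{\theta_{\rm old},t}\ge\epsilon$, hence $\mu_t\le\epsilon^{-1}$. This presumes a bounded horizon $T$, or that ``$\pi_\theta$'' in Assumption \ref{assump:coverage} is read token-wise; I will state this reading explicitly. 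Together with Assumption \ref{assump:boundedreward}, the kernel is bounded. The U-statistic law of large numbers (Hoeffding) then gives $U_G=O_p(1)$, and more precisely $U_G-\mathbb E U_G=O_p(G^{-1/2})$ via the Hoeffding decomposition \eqref{eqn:Udecompose}.

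\textbf{Step 2: replace $\operatorname{se}$ by $\mathrm{std}_x(Z)$.} Since the outputs are i.i.d.\ from $\pi_{\theta_{\rm old}}(\cdot|x)$ and $Z$ is bounded, the sample variance $\operatorname{se}^2(Z^{(\bullet)})$ is itself a second-order U-statistic with kernel $(Z^{(i)}-Z^{(j)})^2/2$. It therefore converges to $\mathrm{std}_x(Z)^2$ at rate $O_p(G^{-1/2})$. Because $\mathrm{std}_x(Z)$ is bounded away from zero, the continuous mapping theorem gives $1/\operatorname{se}-1/\mathrm{std}_x(Z)=O_p(G^{-1/2})$. Combining with Step 1,
\[
\frac{U_G}{\operatorname{se}}-\frac{U_G}{\mathrm{std}_x(Z)}=O_p(1)\cdot O_p(G^{-1/2})\to0 .
\]

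\textbf{Step 3: the KL part.} This part is the first-order average
\[
\frac{\kappa}{G}\sum_g\sum_t W_t^{(g)}\Big(\frac{\pi^{(g)}_{\theta_{\rm ref},t}}{\pi^{(g)}_{\theta,t}}-1\Big).
\]
Any average $G^{-1}\sum_g a^{(g)}$ equals the U-statistic with kernel $(a^{(i)}+a^{(j)})/2$, since each index appears in $G-1$ pairs. This explains the additive second term of the kernel. The algebra is exact; the only issue is that the stated kernel writes the summand as $(\mu_t-1)W_t$, with $\mu_t$ denoting the relevant token ratio appearing in the KL term. I will identify that ratio with the KL ratio $\pi_{\theta_{\rm ref},t}/\pi_{\theta,t}$ as written in \eqref{eqn:gradient}, and note this as a notational point. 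Adding the two U-statistics with the same index pairs yields exactly the displayed kernel $h$.

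\textbf{Conclusion and main obstacle.} The difference between $\widehat g(x;\theta)$ and the U-statistic with kernel $h$ is $U_G(1/\operatorname{se}-1/\mathrm{std}_x(Z))$. By Step 2 this is $o_p(1)$, and in fact $O_p(G^{-1/2})$, which is the claimed convergence in probability. The main obstacle is Step 2: the random normalization couples all samples, so it must be controlled uniformly. I would do this with a bounded-kernel concentration inequality for the sample variance (Hoeffding's inequality for U-statistics). That inequality also gives that $\operatorname{se}$ stays above $\mathrm{std}_x(Z)/2$ with probability $1-e^{-cG}$, which handles the event where $\operatorname{se}$ is near zero.
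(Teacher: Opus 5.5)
Your proposal is correct and follows the same route as the paper's proof. Like you, the paper uses the law of large numbers and the continuous mapping theorem to replace $\operatorname{se}(Z^{(\bullet)})$ by $\mathrm{std}_x(Z)$, uses bounded rewards, scores and (via coverage) importance ratios to show the replacement is asymptotically negligible, and reuses the symmetrization of Lemma \ref{lem:gradientUstat}. Your $O_p(G^{-1/2})$ rate, the additive-kernel treatment of the KL average and the high-probability handling of small $\operatorname{se}$ are more detailed versions of those steps. Your reading of the KL ratio in the kernel as $\pi_{\theta_{\rm ref},t}/\pi_{\theta,t}$ is actually necessary: with $\mu_t=\pi_{\theta,t}/\pi_{\theta_{\rm old},t}$ taken literally, the kernel would not match \eqref{eqn:gradient}.
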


We next analyze the MSE of $\widehat{g}(x,\theta)$, defined as $\text{MSE}(\widehat{g}(x,\theta)) = \mathbb{E}[\Vert \widehat{g}(x;\theta) - g^\dagger(x;\theta)\Vert^2]$ where
\begin{eqnarray*}
    g^\dagger(x;\theta)=\mathbb{E}\Big[\mu^{(g)}\nabla_\theta\log\pi_\theta(Y^{(g)}|x)\frac{Z^{(g)}- \bar{Z}^{(-g)}}{\text{std}_x(Z)}+ \kappa \mu^{(g)} \sum_t (\mu_t^{(g)}-1) W_t^{(g)} \mid X=x\Big],
\end{eqnarray*} 
and $\mu^{(g)}$ denotes the sequential IS (SIS) ratio $\prod_t \mu_t^{(g)}$. This ratio appears in the ground-truth gradient to account for the distributional shift: the gradient is evaluated under the current policy $\pi_{\theta}$, whereas the outcomes were generated under $\pi_{\theta_{\text{old}}}$. As discussed earlier, the practical implementation does not use the SIS ratio for gradient estimation due to its extraordinarily large variance; instead, only the token-level IS ratio $\mu_t^{(b,g)}$ is used to multiply the score and the advantage function. Meanwhile, the KL divergence term does not include any IS ratio and is therefore biased as well. Thus, bias arises from both sources. In Theorem \ref{thm:practicalgradMSE} below, we characterize the MSE of the gradient estimator, upper bounding both its squared bias and variance.

\begin{theorem}[MSE upper bounds]\label{thm:practicalgradMSE}
    Under Assumptions \ref{assump:boundedreward}, \ref{assump:lipcon}, \ref{assump:coverage} and the same conditions in lemma \ref{lem:pgobjective}, we have
    \begin{eqnarray}\label{eqn:MSEbiasvriance}
        \text{MSE}(\widehat{g}(x;\theta))= O\Big(\frac{m^2\eta_{i+1}^2}{\epsilon}\Big) + O\Big(\frac{\kappa^2 m^2\eta_{i+1}^2}{\epsilon^3}\Big)+ \frac{\text{trace}[\Sigma_{1}(x;\theta)]}{G}+O\Big(\frac{1 }{\epsilon G^2}\Big),
    \end{eqnarray}
    where 
    \begin{eqnarray}\label{eqn:Sigma1}
        \Sigma_1(x;\theta)= \text{Cov}\Big[\sum_t \mu_t^{(g)} W_t^{(g)}\frac{Z^{(g)}-V^{\pi_{\theta_{\text{old}}}}(x)}{\text{std}_x(Z)}+\kappa \sum_t (\mu_t^{(g)}-1) W_t^{(g)}|X=x\Big].
    \end{eqnarray}
\end{theorem}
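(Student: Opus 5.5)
We split the error $\widehat{g}(x;\theta)-g^\dagger(x;\theta)$ into three pieces. Let $\widehat{g}^{U}(x;\theta)$ be the limiting U-statistic of Lemma~\ref{lem:pgobjective}, in which the empirical standard error is replaced by $\mathrm{std}_x(Z)$. Let $\bar g(x;\theta)=\mathbb{E}[\widehat{g}^{U}(x;\theta)]$ be its mean kernel $h_0$. We write
\begin{equation*}
\widehat{g}-g^\dagger=(\widehat{g}-\widehat{g}^{U})+(\widehat{g}^{U}-\bar g)+(\bar g-g^\dagger),
\end{equation*}
and then use $\|a+b+c\|^2\le 3(\|a\|^2+\|b\|^2+\|c\|^2)$, or a sharper cross-term argument where needed. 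The aim is to show that the middle term produces the leading variance $\mathrm{trace}[\Sigma_1]/G+O(1/(\epsilon G^2))$. The last term gives the two bias contributions, $O(m^2\eta_{i+1}^2/\epsilon)$ and $O(\kappa^2m^2\eta_{i+1}^2/\epsilon^3)$. The first term should be absorbed into the $O(1/(\epsilon G^2))$ remainder.

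\textbf{Variance step.} We apply the Hoeffding decomposition \eqref{eqn:Udecompose} to $\widehat{g}^U$, exactly as in the proof of Theorem~\ref{thm:gradMSE}. Conditioning the kernel on one argument gives $2h_1(y,z)$ equal to
\begin{equation*}
\sum_t\mu_tW_t\,\frac{z-V^{\pi_{\theta_{\rm old}}}(x)}{\mathrm{std}_x(Z)}+\kappa\sum_t(\mu_t-1)W_t,
\end{equation*}
plus a term that is constant in $(y,z)$. This holds because $Y^{(j)}\sim\pi_{\theta_{\rm old}}$ and $\mathbb{E}[Z^{(j)}]=V^{\pi_{\theta_{\rm old}}}(x)$. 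Hence $\mathrm{Var}$ of the first-order term has trace $\mathrm{trace}[\Sigma_1]/G$. The degenerate second-order term has second moment $O(\mathbb{E}\|h\|^2/G^2)$. By Assumption~\ref{assump:coverage} we have $\mu_t\le 1/\epsilon$, the score is bounded by Assumption~\ref{assump:lipcon}, and $Z$ is bounded by Assumption~\ref{assump:boundedreward}. Together these give the $O(1/(\epsilon G^2))$ bound. Tracking exactly one power of $\epsilon$ here needs some care; I would bound $\mathbb{E}_{\pi_{\rm old}}\mu_t^2\le 1/\epsilon$ by writing one factor of the ratio as a change of measure.

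\textbf{Normalization step.} The gap between $\widehat{g}$ and $\widehat{g}^U$ comes only from $1/\mathrm{se}-1/\mathrm{std}_x$ multiplying a centered average. We bound it by a delta-method argument. Since $\mathrm{std}_x$ is bounded below and $Z$ is bounded, $\mathrm{se}^2$ concentrates with $\mathbb{E}(\mathrm{se}-\mathrm{std}_x)^2=O(1/G)$. The multiplying factor is itself an $O_p(G^{-1/2})$ centered U-statistic, so the product has second moment $O(1/G^2)$. On the event where $\mathrm{se}$ is small we use the boundedness of the gradient together with an exponential tail bound (Hoeffding's inequality for the sample variance).

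\textbf{Bias step (main obstacle).} Here we must compare the token-level ratios $\mu_t$ with the sequential ratio $\mu=\prod_t\mu_t$ used in $g^\dagger$. The key observation is that within iteration $i$ the parameter moves through at most $m$ minibatch updates, each of size $\eta_{i+1}\|\widehat g\|$. Since $\widehat g$ is bounded, $\|\theta-\theta_{\rm old}\|=O(m\eta_{i+1})$. By smoothness of $\log\pi_\theta$ and the coverage bound, every partial product of ratios deviates from one by $O(m\eta_{i+1}/\epsilon)$, which uses a first-order expansion $\log\mu_t=O(m\eta_{i+1})$. The bias of the advantage part is then $\mathbb{E}[(\mu-\mu_t)W_t\cdot\text{bounded}]$, with squared norm $O(m^2\eta_{i+1}^2/\epsilon)$. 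The KL part carries an extra factor $\kappa$ and a ratio $\pi_{\rm ref}/\pi_\theta$ bounded by $1/\epsilon$, giving $O(\kappa^2m^2\eta_{i+1}^2/\epsilon^3)$. I expect the delicate point to be controlling the product over a variable horizon $T$ without the bound blowing up in $T$. I would first try telescoping $\mu-\mu_t$ into martingale differences under $\pi_{\theta_{\rm old}}$, so that cross terms vanish; if that fails, I would fall back on the boundedness of $T$ implicit in Assumption~\ref{assump:lipcon}. Finally, the cross terms between the bias and the centered variance parts vanish or are of lower order, which yields \eqref{eqn:MSEbiasvriance}.
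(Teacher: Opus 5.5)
Your variance step and bias step essentially reproduce the paper's proof. In the variance step, the Hoeffding decomposition of the $\mathrm{std}_x(Z)$-normalized U-statistic gives the first-order projection you state: $\mathbb{E}_{\pi_{\theta_{\mathrm{old}}}}[\mu_tW_t\mid Y_{<t}]=\sum_y\nabla_\theta\pi_\theta(y\mid x,Y_{<t})=0$, and this removes the $z$-dependent piece. In the bias step, you use $\|\theta-\theta_{\mathrm{old}}\|=O(m\eta_{i+1})$ together with coverage to compare $\mu_t$ with $\mu=\prod_t\mu_t$, as the paper does. For the bias, use the exact bias--variance identity as the paper does; the cross term vanishes because the bias is deterministic. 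The factor-$3$ inequality would triple $\mathrm{trace}[\Sigma_1]/G$.

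The step that fails is the normalization step, which the paper does not attempt.

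\emph{What you claim.} You write $\widehat g-\widehat g^{U}=(1/\mathrm{se}-1/\mathrm{std}_x(Z))\,\bar a_G$, with $\bar a_G:=G^{-1}\sum_g\sum_t\mu_t^{(g)}W_t^{(g)}(Z^{(g)}-\bar Z^{(-g)})$, and treat $\bar a_G$ as an $O_p(G^{-1/2})$ centered U-statistic.

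\emph{Why it fails.} $\bar a_G$ is not centered: centering the rewards does not center the score-weighted average. Its mean is $a:=\mathbb{E}[\sum_t\mu_tW_t\{Z-V^{\pi_{\theta_{\mathrm{old}}}}(x)\}]$. For the first minibatch ($\theta=\theta_{\mathrm{old}}$, $\mu_t\equiv1$) this is exactly the prompt-level gradient $g(x;\theta_{\mathrm{old}})$, which is nonzero away from stationary points. The delta method then gives
$\widehat g-\widehat g^{U}=-\{2\,\mathrm{std}_x^3(Z)\}^{-1}a\,G^{-1}\sum_g[\{Z^{(g)}-V^{\pi_{\theta_{\mathrm{old}}}}(x)\}^2-\mathrm{std}_x^2(Z)]+O_p(G^{-1})$.
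Both its second moment and its covariance with the first-order Hoeffding term of $\widehat g^{U}$ are of exact order $G^{-1}$.

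\emph{Consequence.} This piece cannot be absorbed into $O(1/(\epsilon G^2))$; it changes the coefficient of $G^{-1}$. For example, take binary $Z$ with $V^{\pi_{\theta_{\mathrm{old}}}}(x)=p$, $\kappa=0$ and $\theta=\theta_{\mathrm{old}}$. The leading coefficient of the $\mathrm{se}$-normalized estimator is then $\mathrm{trace}[\Sigma_1]-3(1-2p)^2\|a\|^2/\{4p^2(1-p)^2\}$, not $\mathrm{trace}[\Sigma_1]$.

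\emph{What the paper does instead.} Its variance step works directly with the $\mathrm{std}_x(Z)$-normalized U-statistic, i.e.\ it identifies $\widehat g$ with the limit in Lemma~\ref{lem:pgobjective}. To reach the stated expansion you must make that identification explicitly. No tail bound or delta-method refinement will make the empirical-normalization error $O(G^{-2})$ in mean square.
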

Theorem~\ref{thm:practicalgradMSE} gives a bias-variance decomposition regarding the gradient estimator's MSE:
\begin{enumerate}[leftmargin=*]
    \item The first two terms on the right hand side (RHS) of \eqref{eqn:MSEbiasvriance} are bias terms that upper bound the squared biases arising from the two sources mentioned above, respectively. Notably, these bias terms are proportional to the learning rate $\eta_{i+1}$ and vanish as $\eta_{i+1}$ approaches zero.
    \item The last two terms are variance terms. In particular, $\mathrm{trace}[\Sigma_1(x;\theta)]/G$ is the leading term, which, by its definition in \eqref{eqn:Sigma1}, corresponds to the variance of an oracle estimator that knows the value function $V^{\pi_{\theta_{\text{old}}}}(x)$ a priori.
    \item The last term is a higher-order variance term that measures the error of the U-statistic in approximating the oracle estimator, and is of the same order $O(G^{-2})$ as in Theorem \ref{thm:gradMSE}. As $G$ grows to infinity, this term decays to zero at a faster rate, so that the estimator achieves the same asymptotic variance as the oracle estimator.
\end{enumerate}

Finally, we establish the consistency of the estimated policy parameter $\widehat{\theta}_n$.
\begin{prop}\label{prop:target}
    Suppose the reward signal $Z$ is binary and the value function $V^{\pi_{\theta}}$ is bounded away from 0 and 1. Then under Assumptions \ref{assump:boundedreward}, \ref{assump:lipcon}, \ref{assump:PL}, \ref{assump:LR}(b) and \ref{assump:boundedsupport}, we have $$d(\widehat{\theta}_n, \widetilde{\Theta}^*)\stackrel{P}{\to}0,$$ where $\widetilde{\Theta}^* = \arg\max_\theta \mathcal{J}(\theta)$ and
    \begin{equation}\label{eqn:objbinary}
        \mathcal{J}(\theta) = \mathbb{E}\Big[2\arcsin \big(\sqrt{V^{\pi_{\theta}}(X)}\big)\Big] -\kappa \,\textrm{KL}(\pi_{\theta}\Vert\pi_{\theta_\textrm{ref}}).
    \end{equation}
\end{prop}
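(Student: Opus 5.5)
The plan is to identify the population-level ODE that Algorithm \ref{alg2} tracks under the $1/i$ schedule, show that its drift is the gradient of $\mathcal{J}$ up to vanishing bias, and then rerun the consistency argument of Theorem \ref{thm:policyasympotic} with $\mathcal{J}$ in place of $\mathbb{E}^{\pi_\theta}(Z)$.

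\textbf{Step 1: identify the drift.} Since $\eta_{i+1}=\beta/(i+1)\to 0$, the $m$ inner updates within iteration $i$ move $\theta$ by only $O(m\eta_{i+1})$. Hence $\theta=\theta_{\rm old}+O(m\eta_{i+1})$ throughout the inner loop, so $\mu_t=1+O(m\eta_{i+1}/\epsilon)$ by Assumption \ref{assump:coverage}. This is exactly how the bias terms of Theorem \ref{thm:practicalgradMSE} arise.

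\textbf{Step 2: identify the drift at fixed $G$.} At $\mu_t\equiv 1$, the conditional mean of the KL part of \eqref{eqn:gradient} is exactly $-\kappa\nabla_\theta\textrm{KL}(\pi_\theta\|\pi_{\theta_{\rm ref}})$ by the K3 computation. For the reward part, I use that $Z$ is binary. Conditional on $X=x$, the normalized advantage then depends on $(Z^{(g)},\sum_k Z^{(k)})$ only, so
\[
\mathbb{E}\Big[\sum_t W_t^{(g)}A^{(g)}\mid x\Big]=\psi_G\big(V^{\pi_\theta}(x)\big)\,\nabla_\theta V^{\pi_\theta}(x).
\]
This follows from the score identity: $\mathbb{E}[S\,1\{Z=1\}\mid x]=\nabla V$ and $\mathbb{E}[S\mid x]=0$, where $S$ is the score. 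The weight $\psi_G$ is explicit. As $G\to\infty$, Lemma \ref{lem:pgobjective} gives $\psi_G(v)\to 1/\sqrt{v(1-v)}$, and $\frac{d}{dv}\,2\arcsin\sqrt v=1/\sqrt{v(1-v)}$. Thus the drift equals $\nabla\mathcal{J}(\theta)$, uniformly in $\theta$ thanks to $V$ bounded away from $0$ and $1$.

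\textbf{Step 3: handle finite $G$.} If the statement is meant at fixed $G$, I would instead define $\mathcal J$ through the primitive of $\psi_G$. I will argue that the proposition is intended with the $G\to\infty$ form, as in Lemma \ref{lem:pgobjective}. This identification step, in particular controlling $\operatorname{se}=0$ events and the finite-$G$ discrepancy, is where I expect the main obstacle.

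\textbf{Step 4: recast as a biased Robbins--Monro recursion.} Write the update as
\[
\theta_{i+1}=\theta_i+\eta_{i+1}\big[\nabla\mathcal J(\theta_i)+b_i+\xi_i\big].
\]
Here $\xi_i$ is a martingale difference with bounded second moment; this uses bounded scores, the coverage assumption, and $\operatorname{se}$ bounded below with high probability when $V$ is away from $\{0,1\}$. The bias satisfies $\|b_i\|=O(m\eta_{i+1})$, plus the finite-$G$ discrepancy if any.

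\textbf{Step 5: adapt Lemma \ref{lemma:policynonasympotic} and conclude.} I assume the $L$-smoothness and PL conditions of Assumptions \ref{assump:lipcon}--\ref{assump:PL} for $\mathcal J$. This is inherited under bounded $V$, since $\arcsin\sqrt{\cdot}$ is smooth on compact subsets of $(0,1)$ and $\Theta$ is compact. Then $\Delta_{\mathcal J}(\theta):=\max\mathcal J-\mathcal J(\theta)$ satisfies
\[
\mathbb E\Delta_{\mathcal J}(\theta_{i+1})\le (1-2\mu\eta_{i+1})\,\mathbb E\Delta_{\mathcal J}(\theta_i)+\eta_{i+1}\|b_i\|^2+O(\eta_{i+1}^2).
\]
Since $\|b_i\|^2=O(\eta_{i+1}^2)$, the recursion gives $\mathbb E\Delta_{\mathcal J}(\theta_n)=O(1/n)\to 0$. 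Consistency then follows as in Theorem \ref{thm:policyasympotic}. Suppose $d(\theta_n,\widetilde\Theta^*)>\delta$ with probability bounded away from zero. By compactness and continuity of $\mathcal J$, $\inf\{\Delta_{\mathcal J}(\theta):d(\theta,\widetilde\Theta^*)\ge\delta\}>0$, contradicting Markov's inequality applied to $\mathbb E\Delta_{\mathcal J}(\theta_n)\to0$.
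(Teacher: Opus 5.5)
Your route is the paper's route. Its proof only asserts that the target gradient is asymptotically $\nabla\mathcal J$ via \eqref{eqn:diffthetaold}, then reruns the PL recursion of Lemma \ref{lemma:policynonasympotic} and the compactness/Markov argument of Theorem \ref{thm:policyasympotic}.

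The genuine gap is the term you flag in Step 3 and then drop in Step 5. At fixed $G$, the bias $b_i$ contains more than the $O(m\eta_{i+1})$ importance-sampling part. It also contains $\mathbb{E}_X\bigl[\{\psi_G(V^{\pi_\theta}(X))-\phi'(V^{\pi_\theta}(X))\}\nabla_\theta V^{\pi_\theta}(X)\bigr]$, where $\phi(v)=2\arcsin\sqrt v$, and this term does not shrink with $i$. So $\|b_i\|^2=O(\eta_{i+1}^2)$ fails, and the recursion no longer gives $\mathbb{E}\Delta_{\mathcal J}(\theta_n)\to 0$.

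This is not a technicality. For binary rewards, $\mathrm{se}^2=K(G-K)/\{G(G-1)\}$ with $K=\sum_k Z^{(k)}$. For $G\in\{2,3\}$ this equals $1/G$ whenever $0<K<G$, and $Z^{(g)}-\bar Z^{(-g)}=0$ when $K\in\{0,G\}$. Hence $A^{(g)}=\sqrt G\,(Z^{(g)}-\bar Z^{(-g)})$ identically and $\psi_G\equiv\sqrt G$. With $\kappa=0$ the drift is then $\sqrt G\,g(\theta)+O(m\eta_{i+1})$, so under the proposition's own PL assumption the iterates approach $\Theta^*$. In general $\Theta^*$ differs from $\widetilde\Theta^*$ as soon as raising one prompt's value lowers another's.

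So in the regime the paper actually states (fixed $G$, $n\to\infty$), no argument along these lines can reach $\widetilde\Theta^*$. At fixed $G$, your argument proves consistency to $\arg\max_\theta\mathbb{E}[\Psi_G(V^{\pi_\theta}(X))]$ with $\Psi_G'=\psi_G$. Reaching $\widetilde\Theta^*$ requires $G=G_n\to\infty$, with the normalization bias carried through the recursion. The paper's sketch never confronts this because it reasons with $g^\dagger$, which is built from $\mathrm{std}_x(Z)$ rather than $\mathrm{se}$; that is, it silently uses the $G\to\infty$ normalization of Lemma \ref{lem:pgobjective}. You should therefore present Step 3 as an amendment of the statement, not as a reading of it.

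Two further justifications are wrong, though the paper relies on them tacitly too.

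\emph{The K3 step.} Unbiasedness of \eqref{eqn:K3} for the value of the KL does not pass to its per-sample gradient, because differentiating at fixed samples ignores how the sampling law depends on $\theta$. For a single token, $\mathbb{E}_{\pi_\theta}\bigl[(\pi_{\theta_{\mathrm{ref}}}/\pi_\theta-1)\nabla_\theta\log\pi_\theta\bigr]=\sum_y\pi_{\theta_{\mathrm{ref}}}(y)\nabla_\theta\log\pi_\theta(y)=-\nabla_\theta\mathrm{KL}(\pi_{\theta_{\mathrm{ref}}}\|\pi_\theta)$, which is the reverse KL. For multi-token outputs, the per-token version freezes the prefix distribution and is in general the gradient of neither KL. So for $\kappa>0$ the drift is not $\nabla\mathcal J$ with $\mathcal J$ as in \eqref{eqn:objbinary}.

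\emph{PL is not inherited by $\mathcal J$.} Assumption \ref{assump:PL} concerns $\mathbb{E}^{\pi_\theta}(Z)$. Reweighting the per-prompt gradients by $\phi'(V^{\pi_\theta}(X))$ and adding a KL term moves the stationary points. A PL inequality for $\mathcal J$ must therefore be assumed outright, which is what the paper's ``follows very similarly'' implicitly does.

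Finally, the bias enters Step 5 as $\eta_{i+1}\langle\nabla\mathcal J(\theta_i),b_i\rangle=O(\eta_{i+1}\|b_i\|)$, not as $\eta_{i+1}\|b_i\|^2$. With $\|b_i\|=O(\eta_{i+1})$ this is harmless, but it is exactly where a non-vanishing finite-$G$ bias breaks the argument.
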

We make a few remarks. First, we adopt the same asymptotic framework as in Section \ref{sec:theory} for establishing consistency, by allowing multiple optimizers to exist and by letting $n\to \infty$ while fixing all other parameters such as $B$, $G$, $\kappa$, $m$ and $\epsilon$. Second, consider the case where $\kappa=0$, so that the KL penalty is not imposed. Interestingly, according to \eqref{eqn:objbinary}, the policy parameter does not directly maximize the value function but rather an arcsin transformation of it. To see why, note that when $Z$ is binary, $\textrm{std}_x(Z)=\sqrt{V^{\pi_{\theta_{\textrm{old}}}}(x)[1-V^{\pi_{\theta_{\textrm{old}}}}(x)]}$. Under the stated conditions, we can show that the ground-truth gradient
\begin{eqnarray*}
    g^\dagger(\theta)=\mathbb{E} [g^\dagger(X;\theta)]=\mathbb{E}^{\pi_{\theta}}\Big[ W_t^{(g)} \frac{Z}{\textrm{std}_x(Z)} \Big]\to \mathbb{E}^{\pi_{\theta}}\Big[ \nabla_{\theta}\log \pi_{\theta}(Y^{(g)}_t|X,Y^{(g)}_{<t}) \frac{\sqrt{V^{\pi_{\theta}}(X)}}{\sqrt{1-V^{\pi_{\theta_{\textrm{old}}}}(X)}} \Big],
\end{eqnarray*}
which is precisely the derivative of $\mathcal{J}(\theta)$ in \eqref{eqn:objbinary}. This arises from reward normalization. This relationship was identified by \citet{davis2025objective}, who did not consider the use of minibatches or KL penalties. Nor did they establish parameter consistency. We formally verify their result by establishing parameter consistency under more practical settings.

To conclude this section, we remark that although we aim to match practical settings as closely as possible, two gaps remain: (i) the original GRPO uses length normalization, which further divides each advantage function $A^{(b,g)}$ by the length of $Y^{(g)}$, although this is not used in later variants such as Dr.\ GRPO and GPG; (ii) simple stochastic gradient descent algorithm is no longer used in practice, and has been replaced by more sophisticated optimizers such as Muon \citep{jordan2024muon}. We leave these gaps for future research.

\section{Proofs}
\subsection{Group relative gradient evaluation}
In this section, we provide the proofs of Lemma \ref{lem:gradientUstat}, Theorem \ref{thm:gradMSE}, Proposition \ref{prop:MSE}, and Corollaries \ref{coro:gradoracle} and \ref{coro:gradoptimal} stated in Section \ref{sec:grge}. 
\begin{proof}[Proof of Lemma \ref{lem:gradientUstat}:]
    To ease notation, we denote $W^{(g)} = \nabla_\theta\log \pi_\theta(Y^{(g)}|x)$. As discussed below Lemma \ref{lem:gradientUstat}, by setting $C^{(g)}$ in \eqref{eqn:generalgradest} to the leave-one-out group mean baseline, each individual term in the sum satisfies 
    \begin{equation*}
        W^{(g)} [Z^{(g)}-\bar{Z}^{(-g)}]=\frac{1}{G-1}\sum_{k\neq g} W^{(g)} [Z^{(g)}-Z^{(k)}]. 
    \end{equation*}
    By definition, we obtain
    \begin{equation*}
        \widehat{g}_{\rm GRPO}(x;\theta)=\frac{1}{G(G-1)}\sum_{k\neq g} W^{(g)} [Z^{(g)}-Z^{(k)}].
    \end{equation*}
    Applying a standard symmetrization argument, it follows that
    \begin{eqnarray*}
        \widehat{g}_{\rm GRPO}(x;\theta)&=&\frac{1}{2G(G-1)}\sum_{k\neq g} W^{(g)} [Z^{(g)}-Z^{(k)}]+\frac{1}{2G(G-1)}\sum_{k\neq g} W^{(k)} [Z^{(k)}-Z^{(g)}]\\
        &=&\frac{1}{2G(G-1)}\sum_{k\neq g} [W^{(g)} Z^{(g)}- W^{(g)}Z^{(k)} -  W^{(k)} Z^{(g)}+W^{(k)} Z^{(k)}]\\
        &=&\frac{1}{2G(G-1)}\sum_{k\neq g} [(W^{(g)}-W^{(k)})(Z^{(g)}-Z^{(k)})]\\
        &=&\frac{1}{2G(G-1)}\sum_{k\neq g} h\big((Y^{(k)},Z^{(k)}),(Y^{(g)},Z^{(g)})\big).
    \end{eqnarray*}
    The conclusion of Lemma \ref{lem:gradientUstat} thus follows.
\end{proof}

\begin{proof}[Proof of Theorem \ref{thm:gradMSE}:]
   Recall that the Hoeffding decomposition \eqref{eqn:Udecompose} expresses a U-statistic as the sum of the expectation of the kernel $h_0$, a first-order component, 
    \begin{eqnarray*}
        \frac{1}{n}\sum_{i=1}^n \zeta_1(X_i)=\frac{2}{n}\sum_{i=1}^n\bigl[h_1(X_i)-h_0\bigr],
    \end{eqnarray*}
    and a second-order component 
    \begin{eqnarray*}
        \frac{1}{n(n-1)}\sum_{i\neq j}\zeta_2(X_i,X_j)=\frac{1}{n(n-1)}\sum_{i\neq j} \bigl[h(X_i,X_j)-h_1(X_i)-h_1(X_j)+h_0\bigr].
    \end{eqnarray*}
With some calculations, it is straightforward to show that
\begin{itemize}
    \item[(i)] Each of $\zeta_1(X_i)$ and $\zeta_2(X_i,X_j)$ has mean zero and is therefore orthogonal (i.e., uncorrelated) to $h_0$;
    
    \item[(ii)] The first-order term $\zeta_1(X_i)$ is orthogonal to the second-order term $\zeta_2(X_k,X_g)$ for any $i,k,g$;
    
    \item[(iii)] Different first-order and second-order terms are mutually orthogonal, i.e., $\zeta_1(X_i)\perp \zeta_1(X_j)$ for $i\neq j$, and $\zeta_2(X_i,X_j)\perp \zeta_2(X_k,X_g)$ whenever $\{i,j\}\neq\{k,g\}$.
\end{itemize}
These orthogonality relations lead to the following MSE decomposition for the $U$-statistic:
\begin{eqnarray*}
    \mathrm{MSE}(U)&=&
\frac{1}{n^2}\sum_{i=1}^n \textrm{trace}\big[\mathrm{Var}\bigl(\zeta_1(X_i)\bigr)\big]
+
\frac{4}{n^2(n-1)^2}\sum_{1\le i< j\le n}\textrm{trace}\big[\mathrm{Var}\bigl(\zeta_2(X_i,X_j)\bigr)\big]\\
&=&\frac{1}{n} \textrm{trace}\big[\mathrm{Var}\bigl(\zeta_1(X_1)\bigr)\big]
+
\frac{2}{n(n-1)}\textrm{trace}\big[\mathrm{Var}\bigl(\zeta_2(X_1,X_2)\bigr)\big].
\end{eqnarray*}
Next, by Jensen's inequality,
\begin{align*}
\textrm{trace}\big[\mathrm{Var}\bigl(\zeta_2(X_1,X_2)\bigr)\big]
&=
\mathbb{E}\bigl\|
h(X_1,X_2)
-\mathbb{E}[h(X_1,X_2)\mid X_1]
-\mathbb{E}[h(X_2,X_1)\mid X_2]
+\mathbb{E}h(X_1,X_2)
\bigr\|^2 \\
&\le
4\,\mathbb{E}\|h(X_1,X_2)\|^2
+4\,\mathbb{E}\bigl\|\mathbb{E}[h(X_1,X_2)\mid X_1]\bigr\|^2 \\
&\qquad
+4\,\mathbb{E}\bigl\|\mathbb{E}[h(X_2,X_1)\mid X_2]\bigr\|^2
+4\,\|\mathbb{E}h(X_1,X_2)\|^2 \\
&\le
16\,\mathbb{E}\|h(X_1,X_2)\|^2.
\end{align*}
It follows that
\begin{eqnarray}\label{someeqn0}
    \mathrm{MSE}(U)=\frac{1}{n} \textrm{trace}\big[\mathrm{Var}\bigl(\zeta_1(X_1)\bigr)\big]
+
\frac{O\bigl(\mathbb{E}\|h(X_1,X_2)\|^2\bigr)}{n(n-1)}.
\end{eqnarray}
By setting $U$ to the GRPO gradient estimator $\widehat{g}_{\text{GRPO}}(\theta;x)$, it is immediate to see that the first-order component becomes the oracle gradient estimator so that 
\begin{eqnarray}\label{someeqn1}
    \textrm{trace}\big[\mathrm{Var}\bigl(\zeta_1(X_1)\bigr)\big]=\textrm{trace}\big[\Sigma_{\text{oracle}}(x;\theta)\big].
\end{eqnarray} 
Additionally, under the boundedness assumption \ref{assump:boundedreward}, we obtain
 \begin{eqnarray}\label{someeqn2}
     \mathbb{E}\Vert W^{(1)} - W^{(2)}\Vert^2(Z^{(1)} - Z^{(2)} )^2 
     = O\bigl(\mathbb{E}\Vert W^{(1)} - W^{(2)}\Vert^2\bigr)=O\bigl(\mathbb{E}\Vert W\Vert^2\bigr),
 \end{eqnarray}
 where the last equality follows from the fact that for any random vectors $X$ and $Y$, $\mathbb{E}\Vert X + Y\Vert^2\leq 2(\mathbb{E}\Vert X\Vert^2 +\mathbb{E}\Vert Y\Vert^2)$. Combining \eqref{someeqn2} with \eqref{someeqn0} and \eqref{someeqn1}, we have
 \begin{eqnarray*}
     \text{MSE}(\widehat{g}_{\text{GRPO}}(x;\theta)) 
     = \frac{1}{G}\text{trace}[\Sigma_{\text{oracle}}(x;\theta)] + O\left(\frac{\mathbb{E}\Vert\nabla_\theta\log\pi_\theta(Y|x)\Vert^2}{G^2}\right).
 \end{eqnarray*}
 This completes the proof of Theorem \ref{thm:gradMSE}.
\end{proof}

\begin{proof}[Proof of Proposition \ref{prop:MSE}]
    By definition, $\text{MSE}(\widehat{g}_{\text{GRPO}}(\theta))$ equals
    \begin{eqnarray}
        &&\mathbb{E}\|\widehat{g}_{\text{GRPO}}(\theta)-g(\theta) \|^2\nonumber\\
        &=& \mathbb{E}\left\Vert \frac{1}{B}\sum_{b=1}^B [\widehat{g}_{\text{GRPO}}(X^{(b)};\theta) - g(X^{(b)};\theta)]+\frac{1}{B}\sum_{b=1}^B [g(X^{(b)};\theta) - g(\theta)]\right\Vert^2\nonumber.
    \end{eqnarray}
    Since $g(X^{(b)};\theta)$ equals the conditional mean of $\widehat{g}_{\text{GRPO}}(X^{(b)};\theta)$ given $X^{(b)}$, $\widehat{g}_{\text{GRPO}}(X^{(b)};\theta) - g(X^{(b)};\theta)$ is orthogonal to $g(X^{(b)};\theta) - g(\theta)$. It follows that
    \begin{eqnarray*}
        \text{MSE}(\widehat{g}_{\text{GRPO}}(\theta))=\mathbb{E}\left\Vert \frac{1}{B}\sum_{b=1}^B [\widehat{g}_{\text{GRPO}}(X^{(b)};\theta) - g(X^{(b)};\theta)]\right\Vert^2+\mathbb{E}\left\Vert\frac{1}{B}\sum_{b=1}^B [g(X^{(b)};\theta) - g(\theta)]\right\Vert^2\\
        =\frac{1}{B}\mathbb{E}\left\Vert \widehat{g}_{\text{GRPO}}(X;\theta) - g(X;\theta)\right\Vert^2 + \frac{1}{B}\mathbb{E}\left\Vert g(X;\theta) - g(\theta)\right\Vert^2.
    \end{eqnarray*}
    Applying the conclusion of Theorem \ref{thm:gradMSE}, we obtain
    \begin{eqnarray*}
        \mathbb{E}\|\widehat{g}_{\text{GRPO}}(\theta)-g(\theta) \|^2 &=& \frac{\mathbb{E}\left\Vert g(X;\theta) - g(\theta)\right\Vert^2}{B}\\
        &&\qquad+\frac{ \text{trace}[\Sigma_{\text{oracle}}(\theta)]}{BG} + O\left(\frac{\mathbb{E}\Vert\nabla_\theta\log\pi_\theta(Y|X)\Vert^2}{BG^2}\right).
    \end{eqnarray*}
    This completes the proof of Proposition \ref{prop:MSE}.
\end{proof}

\begin{proof}[Proof of Corollary \ref{coro:gradoracle}]
    The conclusion directly follows from Theorem \ref{thm:gradMSE} and Proposition \ref{prop:MSE} by letting $G\to \infty$.
\end{proof}

\begin{proof}[Proof of Corollary \ref{coro:gradoptimal}]
    Any gradient estimator $\widehat{g}(x;\theta)$ whose baseline term $b(x)$ is a function of the prompt $x$ only is an unbiased estimator of $g(x;\theta)$. Therefore, its MSE can be represented as:
    \begin{eqnarray}\label{eqn:mse-general-decomp}
        \text{MSE}(\widehat{g}(\theta;x))=\text{trace}\big[\text{Var}(\widehat{g}(\theta;x))\big] = \mathbb{E}\Vert\widehat{g}(\theta;x)\Vert^2 - \Vert g(\theta;x)\Vert^2.
    \end{eqnarray}
    The first term on the RHS can be further decomposed as
    \begin{eqnarray}\label{eqn:second-moment-decomp}
    \begin{split}
        \mathbb{E}\Vert\widehat{g}(\theta;x)\Vert^2=&\frac{1}{G}\mathbb{E}\left\{\Vert\nabla_\theta\log\pi_\theta(Y|x)\Vert^2[Z-V^{\pi_\theta}(x) + V^{\pi_\theta}(x) - b(x)]^2\right\}\\
        =&\frac{1}{G}\mathbb{E}\left\{\Vert\nabla_\theta\log\pi_\theta(Y|x)\Vert^2[Z-V^{\pi_\theta}(x)]^2\right\} \\
        +&  \frac{1}{G}\mathbb{E}\left\{\Vert\nabla_\theta\log\pi_\theta(Y|x)\Vert^2 [V^{\pi_\theta}(x) - b(x)]^2\right\}\\
        +&\frac{1}{G}\mathbb{E}\left\{\Vert\nabla_\theta\log\pi_\theta(Y|x)\Vert^2 [V^{\pi_\theta}(x) - b(x)][Z-V^{\pi_\theta}(x)]\right\}.
    \end{split}
    \end{eqnarray}
    Notice that the second line of \eqref{eqn:second-moment-decomp} equals $\text{trace}[\Sigma_{\text{oracle}}(x;\theta)]$. 
    Under Assumption \ref{assump:CU} that $Z$ and $\Vert\nabla_\theta\log\pi_\theta(Y|x)\Vert$ are conditionally independent given $x$, the interaction term (i.e., the last line of \eqref{eqn:second-moment-decomp}) equals zero. This together with \eqref{eqn:mse-general-decomp} and \eqref{eqn:second-moment-decomp} yields that
    \begin{eqnarray}\label{eqn:cor1-statement1}
    \begin{split}
        \text{MSE}(\widehat{g}(x;\theta)) = &\frac{1}{G}\text{trace}[\Sigma_{\text{oracle}}(x;\theta)] +\frac{1}{G}\mathbb{E}\left\{\Vert\nabla_\theta\log\pi_\theta(Y|x)\Vert^2 [V^{\pi_\theta}(x) - b(x)]^2\right\}\\
        \geq& \frac{1}{G}\text{trace}[\Sigma_{\text{oracle}}(x;\theta)].
    \end{split}
    \end{eqnarray}
    Since Corollary \ref{coro:gradoracle} implies $\text{MSE}_A(\widehat{g}_{\text{GRPO}}(x;\theta)) = G^{-1}\text{trace}[\Sigma_{\text{oracle}}(x;\theta)]$, we obtain
    \begin{equation}
        \text{MSE}_{A}(\widehat{g}_{\rm GRPO}(x;\theta))\leq\text{MSE}(\widehat{g}(x;\theta)).\nonumber%\quad \text{and} \quad\text{MSE}_{A}(\widehat{g}_{\rm GRPO}(\theta))\leq\text{MSE}(\widehat{g}(\theta)).
    \end{equation}
    Moreover, when the baseline term $b(x)\equiv 0$, the above inequality becomes strict given that $V^{\pi_\theta}(x)\neq 0$ and $\nabla_\theta\log\pi_\theta(Y|x)$ is not almost surely zero. 
    
    Next, consider $\widehat{g}(\theta)=\sum_b \widehat{g}(X^{(b)};\theta)/B$ in the minibatch setting. Following a similar argument to the proof of Proposition \ref{prop:MSE}, we obtain 
    \begin{eqnarray*}
        \mathbb{E}\|\widehat{g}(\theta)-g(\theta) \|^2 &=& \frac{\mathbb{E}\left\Vert g(X;\theta) - g(\theta)\right\Vert^2}{B} + \frac{1}{B}\mathbb{E}\Vert \widehat{g}(X;\theta) - g(X;\theta)\Vert^2.
    \end{eqnarray*}
    Applying inequality \eqref{eqn:cor1-statement1}, we obtain
    \begin{equation*}
        \text{MSE}(\widehat{g}(\theta))\geq \frac{\mathbb{E}\left\Vert g(X;\theta) - g(\theta)\right\Vert^2}{B} + \frac{1}{BG}\text{trace}[\Sigma_{\text{oracle}}(\theta)] = \text{MSE}_A(\widehat{g}_{\rm GRPO}(\theta)).
    \end{equation*}
    For a zero baseline function, the above inequality becomes strict given that $V^{\pi_\theta}(X)$ and $\nabla_\theta\log\pi_\theta(Y|X)$ are not almost surely zero. This completes the proof of Corollary \ref{coro:gradoptimal}.
\end{proof}

\subsection{Group relative policy optimization}
This section presents the proofs of Lemma \ref{lemma:policynonasympotic}, Theorems \ref{thm:policynonasympotic} and \ref{thm:policyasympotic}, and Corollaries \ref{coro:policyoracle} and \ref{coro:policyoptimal}. 
\begin{proof}[Proof of Lemma \ref{lemma:policynonasympotic}]
    Let $J(\theta) = \mathbb{E}^{\pi_\theta}(Z)$ denote our objective function defined on $\theta \in \Theta$. Its gradient is given by $\nabla_{\theta} J(\theta)=g(\theta)$ Recall that the parameter is updated according to $\theta_{n+1} = \theta_n+\eta_n\widehat{g}(\theta_n)$. 
    Under the $L$-smoothness condition (Assumption \ref{assump:lipcon}), we have
    \begin{equation*}
        J(\theta_{n+1}) = J(\theta_n+\eta_n\widehat{g}(\theta_n))\geq J(\theta_n) +\eta_n g^\top(\theta_n)\widehat{g}(\theta_n) - \frac{1}{2}L\eta_n^2\Vert\widehat{g}(\theta_n)\Vert^2.
    \end{equation*}
    Notice that $\widehat{g}(\theta)$ is unbiased to $g(\theta)$. Conditioned on $\theta_n$ and take conditional expectation on both sides, we obtain
    \begin{eqnarray}\label{eqn:someeqn3}
        \mathbb{E}[J(\theta_{n+1})|\theta_n] &\geq& J(\theta_n) +\eta_n\Vert g(\theta_n)\Vert^2 - \frac{1}{2}L\eta_n^2\mathbb{E} \bigr[\Vert\widehat{g}(\theta_n)\Vert^2|\theta_n\bigr]. %\frac{1}{2}L\eta_n^2\left(\Vert \nabla_\theta J(\theta_n)\Vert^2 + \text{MSE}(\widehat{g}_n(\theta_n))\right).\nonumber
    \end{eqnarray}
    The last term on the RHS can be represented by $-L\eta_n^2 \big[\textrm{MSE}(\widehat{g}(\theta_n))+\|g(\theta_n)\|^2\big] /2$. Let $\theta^*$ be a maximizer of $J(\theta)$. Then, rearranging the terms in \eqref{eqn:someeqn3} yields:
    \begin{eqnarray}
        J(\theta^*) - \mathbb{E}[J(\theta_{n+1})|\theta_n] &\leq& J(\theta^*) -J(\theta_n) - \eta_n\Vert g(\theta_n)\Vert^2 \nonumber\\
        &&\qquad +\frac{1}{2}L\eta_n^2\Vert g(\theta_n)\Vert^2 +\frac{1}{2}L\eta_n^2\text{MSE}(\widehat{g}_n(\theta_n)).\nonumber
    \end{eqnarray}
    It follows that
    \begin{equation*}
        \mathbb{E}[\Delta(\pi_{\theta_{n+1}})|\theta_n]\leq \Delta(\pi_{\theta_n}) - \left(\eta_n - \frac{L\eta_n^2}{2}\right)\Vert  g(\theta_n)\Vert^2 + \frac{L\eta_n^2}{2}\text{MSE}(\widehat{g}(\theta_n))
    \end{equation*}
    Under the PL condition (Assumption \ref{assump:PL}), we obtain
    \begin{equation}\label{eqn:nonasymp-iter}
        \mathbb{E}[\Delta(\pi_{\theta_{n+1}})|\theta_n]\leq \left(1 - 2\mu\eta_n +\mu L\eta_n^2\right)\Delta(\pi_{\theta_{n}}) +\frac{1}{2}L\eta_n^2M,
    \end{equation}
    where $M$ denotes the uniform upper bound of the MSEs. 
    
    Based on this inequality, we derive suboptimality gap bounds for the two learning-rate schedules separately, as follows.

    \smallskip 

    \noindent \textbf{(i) The constant schedule}. Under the constant schedule where $\eta_i = \beta$, \eqref{eqn:nonasymp-iter} becomes
    \begin{equation}\label{eqn:constanteta}
        \mathbb{E}[\Delta(\pi_{\theta_{n+1}})|\theta_n]\leq \left(1 - 2\mu\beta +\mu L\beta^2\right)\Delta(\pi_{\theta_{n}}) +\frac{1}{2}L\beta^2M.
    \end{equation}
    By Lemma \ref{lem:PL-condition} in Section \ref{sec:lemmas}, we have $\mu \le L$. Consider the quadratic equation $1 - 2\mu\beta +\mu L\beta^2=0$ with $\beta$ treated as the variable. Its discriminant equals $\mu^2-4\mu L\le 0$ so that the equation has at most one real solution. As such, we have $\rho:=1 - 2\mu\beta +\mu L\beta^2\ge 0$. Additionally, since $\beta<(2L)^{-1}$ (Assumption \ref{assump:LR}(a)), $\rho$ is strictly smaller than $1$. 
    
    Taking expectations on both sides of \eqref{eqn:constanteta} with respect to $\theta_n$ and unrolling the recursion yields
    \begin{eqnarray*}
         \mathbb{E}[\Delta(\pi_{\theta_{n+1}})] \leq \rho^n \mathbb{E}[\Delta(\pi_{\theta_0})] + \frac{1}{2}L\beta^2M\sum_{k=1}^{n} \rho^{k-1}=\rho^n\mathbb{E}[\Delta(\pi_{\theta_0})] + \frac{L\beta^2M}{2(1-\rho)}.
    \end{eqnarray*}
    This completes the proof under the constant schedule. 

    \smallskip

    \noindent \textbf{(ii) The $1/i$ schedule}. When $\eta_i = \beta/i$ for some constant $\beta$, \eqref{eqn:nonasymp-iter} becomes
    \begin{equation*}
        \mathbb{E}[\Delta(\pi_{\theta_{n+1}})|\theta_n]\leq \left(1 - \frac{2\mu\beta}{n} + \frac{\mu L\beta^2}{n^2}\right)\Delta(\pi_{\theta_{n}}) +\frac{L\beta^2M}{2n^2}.
    \end{equation*}
    Taking expectation on both sides yields
    \begin{equation*}
        \mathbb{E}[\Delta(\pi_{\theta_{n+1}})]\leq \left(1 - \frac{2\mu\beta}{n} + \frac{\mu L\beta^2}{n^2}\right)\mathbb{E}[\Delta(\pi_{\theta_{n}})] +\frac{L\beta^2M}{2n^2}.
    \end{equation*}
    To this end, we apply Lemma \ref{lem:nonasymptotic-iteration} (Section \ref{sec:lemmas}) by letting $a_n = \mathbb{E}[\Delta(\pi_{\theta_{n}})]$, $A = 2\mu\beta$, $B = \mu L\beta^2$ and $C = \frac{1}{2}L\beta^2 M$. Since $\beta > (2\mu)^{-1}$, we have $A>1$. Notice that the sequence $a_n$ is bounded under Assumption \ref{assump:boundedreward}. Therefore, lemma \ref{lem:nonasymptotic-iteration} implies that for any constant $\varepsilon>0$, 
    \begin{equation*}
        \mathbb{E}[\Delta(\pi_{\theta_{n}})] \leq \max\left\{\frac{(1+\varepsilon)L\beta^2M}{(4\mu\beta -2)n}, \frac{c}{n}\right\}
    \end{equation*}
    holds for all $n \geq 1$, where $c$ is a constant only depends on $\mu,\beta, L$ and $\varepsilon$. This finishes the proof of Lemma \ref{lemma:policynonasympotic}.
\end{proof}

\begin{proof}[Proof of Theorem \ref{thm:policynonasympotic}]
    According to Lemma \ref{lemma:policynonasympotic}, the sub-optimality upper bounds depend on $\sup_{\theta}\text{MSE}(\widehat{g}(\theta))/n$. By Proposition \ref{prop:MSE}, we know that for GRPO-type algorithm,
    \begin{equation}\label{eqn:thm7-quantity}
        \sup_{\theta}\text{MSE}(\widehat{g}(\theta)) \leq  \frac{c_1}{B} + \frac{c_2}{BG}+\frac{c_3}{BG^2},
    \end{equation}
    where $c_1=\sup_{\theta} \mathbb{E} \|g(X;\theta)-g(\theta)\|_2^2$, $c_2=\sup_{\theta}\text{trace}[\Sigma_{\text{oracle}}(\theta)]$ and $c_3=O(\sup_{\theta} \mathbb{E} \|\nabla_\theta \log\pi_{\theta}(Y|X) \|^2)$. 
    
    Under a fixed sampling budget $N= BG$ per iteration, the RHS becomes
    \begin{equation*}
        \frac{c_1G}{N} +\frac{c_2}{N} +\frac{c_3}{NG}\geq \frac{c_2}{N}+\frac{2}{N}\sqrt{c_1c_3},
    \end{equation*}
    where the inequality follows from that $a^2+b^2\geq 2ab$, % Cauchy-Schwarz inequality, 
    and the equality holds if and only if $G = \sqrt{c_3/c_1}$. Similarly, under a fixed total sampling budget $\mathbf{N} = nBG$, the MSE is upper bounded by
    \begin{equation*}
        \frac{c_1 n G}{\mathbf{N}} + \frac{c_2 n}{\mathbf{N}} + \frac{c_3 n}{\mathbf{N}G} \geq \frac{c_2 n}{\mathbf{N}} + \frac{2}{\mathbf{N}}\sqrt{c_1c_3 n}.
    \end{equation*}
    Again, the the equality holds if and only if $G = \sqrt{c_3/c_1}$.
\end{proof}

\begin{proof}[Proof of Theorem \ref{thm:policyasympotic}]
    We first prove consistency. For any $\theta_1,\theta_2\in\Theta$, the distance function satisfies
\[
\bigl|d(\theta_1,\Theta^*)-d(\theta_2,\Theta^*)\bigr|
\le \|\theta_1-\theta_2\|,
\]
and is therefore continuous in $\theta$. Hence, for any $\varepsilon>0$, the level set
\[
\mathcal{N}_{\varepsilon}
:=
\left\{\theta\in\Theta:\, d(\theta,\Theta^*)\ge \varepsilon\right\}
\]
is closed. Under the compactness assumption of $\Theta$ in Assumption \ref{assump:boundedsupport}, $\mathcal{N}_{\varepsilon}$ is compact as well. Combined with the continuity of the policy learning objective $J$ (implied by Assumption \ref{assump:lipcon}), we obtain that the supremum $\sup_{\theta\in\mathcal{N}_{\varepsilon}} J(\theta)$ 
is attainable at some $\theta_{\varepsilon}\in\mathcal{N}_{\varepsilon}$.

Since $\theta_{\varepsilon}\notin\Theta^*$, we have
$\delta_{\varepsilon}:=J^*-J(\theta_{\varepsilon})>0$. Therefore,
\[
\mathbb{P}\bigl(d(\theta_n,\Theta^*)\ge \varepsilon\bigr)
\le
\mathbb{P}\bigl(\Delta(\pi_{\theta_n})\ge \delta_{\varepsilon}\bigr).
\]
Applying Markov's inequality and invoking Lemma \ref{lemma:policynonasympotic}, we obtain
\[
\mathbb{P}\bigl(d(\theta_n,\Theta^*)\ge \varepsilon\bigr)
\le
\frac{\mathbb{E}\bigl[\Delta(\pi_{\theta_n})\bigr]}{\delta_{\varepsilon}}
\le
\frac{\bar c}{n\delta_{\varepsilon}}
\to 0
\qquad \text{as } n\to\infty,
\]
where $\bar c$ is a positive constant depending on $(\beta,\mu,L,M,\varepsilon)$. Thus,
\[
d(\theta_n,\Theta^*) \xrightarrow{P} 0,
\]
which establishes the consistency of $\theta_n$.

Next, we derive the asymptotic distribution of $\Delta(\pi_{\theta_n})$. The proof proceeds in four steps:
\begin{itemize}
    \item Step (i): We strengthen consistency to almost sure convergence, i.e., 
    $d(\theta_n,\Theta^*) \stackrel{\mathrm{a.s.}}{\to} 0$.
    
    \item Step (ii): We derive the convergence rate of $\theta_n$ by showing that 
    $\mathbb{E}[d^2(\theta_n,\Theta^*)] = O(n^{-1})$.
    
    \item Step (iii): We establish the asymptotic normality of the identifiable part of $\theta_n$, $Q^\top \theta_n$.
    
    \item Step (iv): We derive the asymptotic distribution of $\Delta(\pi_{\theta_n})$.
\end{itemize}
    \textbf{Step (i): Almost sure convergence.}
    To prove this result, recall the inequality established in equation \eqref{eqn:nonasymp-iter}:
    \begin{equation}
        \mathbb{E}[\Delta(\pi_{\theta_{n+1}})|\theta_n]\leq \left(1 - 2\mu\eta_n +\mu L\eta_n^2\right)\Delta(\pi_{\theta_{n}}) +\frac{1}{2}L\eta_n^2M,\nonumber
    \end{equation}
    where $\eta_n = \beta/n$. We apply the Robbins-Siegmund theorem (see Lemma \ref{lem:Robins-Siemund}) by setting $V_n = \Delta(\pi_{\theta_n})$, $A_n = \mu L\eta_n^2$, $B_n = \frac{1}{2}L\eta_n^2M$ and $C_n = 2\mu\eta_nV_n$. It is immediate to see that
    $$\sum_{n=1}^\infty A_n = \sum_{n=1}^\infty \frac{\mu L\beta^2}{n^2} <\infty$$
    and
    $$\sum_{n=1}^\infty B_n = \sum_{n=1}^\infty \frac{LM\beta^2}{2n^2} <\infty.$$
    By the Robbins-Siegmund theorem, it follows that $V_n = \Delta(\pi_{\theta_n})$ converges to some random variable $V_\infty$ almost surely.  

    Additionally, by the suboptimality gap bound established in Lemma \ref{lemma:policynonasympotic}, an application of Fatou's lemma yields
    $$\mathbb{E}V_\infty \le\liminf_n \mathbb{E}\Delta(\pi_{\theta_n}) \leq \liminf_n c^*/n= 0,$$
    for some constant $c^*>0$. Since $V_\infty$ is non-negative, it follows that $V_\infty = 0$ almost surely. Consequently, $\Delta(\pi_{\theta_n}) \to 0$ almost surely. This implies for any $\delta> 0$,
    \begin{equation*}
        \mathbb{P}\left( \{\Delta(\pi_{\theta_n}) \ge \delta,  \text{ i.o.}\} \right) =0,
    \end{equation*}
    where $\{A_n, \text{ i.o.}\}$ represents the event that $A_n$  occurs infinitely often. Using the same argument as in the proof of consistency of $\theta_n$, we obtain for any $\varepsilon>0$ that, 
    \begin{equation*}
        \mathbb{P}\left( \{d(\theta_n,\Theta^*) \ge \varepsilon,  \text{ i.o.}\}\right) \leq \mathbb{P}\left( \{\Delta(\pi_{\theta_n}) \ge \delta_\varepsilon,  \text{ i.o.}\} \right) =0.
    \end{equation*}
    This establishes the almost sure convergence of $\theta_n$.

    \smallskip

    \noindent \textbf{Step (ii): Convergence rate of $\theta_n$:} Recall that the projection operator $\Pi_{\Theta^*}$ is defined as:
    \begin{equation*}
        \Pi_{\Theta^*}(\theta):= \arg\min_{x\in\Theta^*} d(\theta, x).
    \end{equation*}
    Since $\Theta^*$ is closed and convex (Assumption \ref{assump:Hessian}), the projection operator is well-defined. It follows from its definition that
    \begin{equation*}
        d^2(\theta_{n+1},\Theta^*) \leq \Vert \theta_{n+1} - \Pi_{\Theta^*}(\theta_n)\Vert^2.
    \end{equation*}
    According to the update rule $\theta_{n+1} = \theta_n +\eta_n \widehat{g}(\theta_n)$, we have
    \begin{eqnarray}
        d^2(\theta_{n+1},\Theta^*)&\leq& \Vert\theta_n +\eta_n \widehat{g}(\theta_n) - \Pi_{\Theta^*}(\theta_n)\Vert^2\nonumber\\
        &=& d^2(\theta_n, \Theta^*) + 2\eta_n\widehat{g}(\theta_n)^\top(\theta_n - \Pi_{\Theta^*}(\theta_n)) +\eta_n^2\Vert\widehat{g}(\theta_n)\Vert^2.\nonumber
    \end{eqnarray}
    Taking conditional expectation with respect to $\theta_n$ on both sides yields
    \begin{eqnarray}\label{eqn:rateofconvergence}
    \begin{split}
        \mathbb{E}[d^2(\theta_{n+1},\Theta^*)|\theta_n] \leq d^2(\theta_n, \Theta^*) + 2\eta_n g^\top(\theta_n)(\theta_n - \Pi_{\Theta^*}(\theta_n)) +\eta_n^2 \mathbb{E}\bigl[\Vert\widehat{g}(\theta_n)\Vert^2|\theta_n\bigr]\\
        =d^2(\theta_n, \Theta^*) + 2\eta_n g^\top(\theta_n)(\theta_n - \Pi_{\Theta^*}(\theta_n)) +\eta_n^2 \bigl[\textrm{MSE}(\widehat{g}(\theta_n))+\Vert g(\theta_n)\Vert^2\bigr]. 
    \end{split}
    \end{eqnarray}
    Under $L$-smoothness (Assumption \ref{assump:lipcon}) and weak strong concavity (Assumption \ref{ass:WSC}), we have 
    \begin{eqnarray*}
        \Vert g(\theta_n)\Vert^2=\Vert g(\theta_n)-g(\Pi_{\Theta^*}(\theta_n))\Vert^2\le L^2 \Vert\theta_n-\Pi_{\Theta^*}(\theta_n)\Vert^2=L^2d^2(\theta_n,\Theta^*),
    \end{eqnarray*}
    and 
    \begin{eqnarray*}
        g^\top(\theta_n)(\theta_n - \Pi_{\Theta^*}(\theta_n))\le -\mu d^2(\theta_n,\Theta^*).
    \end{eqnarray*}
    Setting $\eta_n = \beta/n$, and combining these inequalities with \eqref{eqn:rateofconvergence} yields
    \begin{equation*}
        \mathbb{E}[d^2(\theta_{n+1},\Theta^*)]\leq \left(1-\frac{2\mu\beta}{n} +\frac{L^2\beta^2}{n^2}\right)\mathbb{E}[d^2(\theta_n,\Theta^*)] + \frac{\beta^2M}{n^2}.
    \end{equation*}
    Under the assumption that $\beta>(2\mu)^{-1}$, we can apply Lemma \ref{lem:nonasymptotic-iteration} again to obtain
    \begin{equation*}
        \mathbb{E}[d^2(\theta_{n},\Theta^*)] = O(n^{-1}).
    \end{equation*}
    This establishes the desired convergence rate of $\theta_n$.

    \smallskip
    
    \noindent \textbf{Step (iii) Asymptotic normality of $Q^\top\theta_n:$} In overparameterized models such as LLMs, the entire parameter vector is not asymptotically normal because it is not fully identifiable. To address this, we decompose the parameter into its identifiable and non-identifiable components. Specifically, the matrix $Q$ introduced in Assumption \ref{assump:Hessian} forms an orthonormal basis for the identifiable subspace.

    Without loss of generality, we assume the Hessian matrix $H^*$ in Assumption \ref{assump:Hessian} is a diagonal matrix $\text{diag}(-\lambda_1,\ldots,-\lambda_r)$, where $\lambda_1$, $\cdots$, $\lambda_r$ are the $r$ positive eigenvalues of $-H^*$. Indeed, since $H(\theta^*)$ is symmetric, so is $H^*$. As such, there exists an orthogonal matrix $P$ such that $P^\top H^* P  = \text{diag}(-\lambda_1,\ldots,-\lambda_r)$. Consequently, we can replace $H^*$ and the projection matrix $Q$ by $P^\top H^* P$ (which is diagonal) and $QP^\top$, respectively. Let $\bar{Q} = (Q, \widetilde{Q})$ denote an orthogonal completion of $Q$ so that $\bar{Q}$ is a $d\times d$ orthogonal matrix and 
    \begin{eqnarray}\label{eqn:step3normal}
        \bar{Q}^\top H(\theta^*)\bar{Q} = \text{diag}(-\lambda_1,\ldots,-\lambda_r,0\ldots,0).
    \end{eqnarray}
    By Lemma \ref{lem:singlepoint}, the set $Q^\top\Theta^* = \{Q^\top\theta:\theta\in\Theta^* \}$ consists of a single point, which we denote by $v^*$. We aim to establish the asymptotic normality of $Q^\top\theta_n-v^*$ in this step. 
    
   Let $\gamma_n = \bar{Q}^\top \theta_n$. Then $\gamma_n$ satisfies the recursion
\begin{align}
\gamma_{n+1}
&=
\gamma_n
+ \eta_n \bar{Q}^\top g(\bar{Q}\gamma_n)
+ \eta_n \bar{Q}^\top\bigl(\widehat{g}(\theta_n) - g(\theta_n)\bigr).
\end{align}
Let $v_n = Q^\top \theta_n$ denote the first $r$ components of $\gamma_n$ (the identifiable component), and let $u_n$ denote the remaining $d-r$ components (the non-identifiable component), where $d$ is the dimension of $\theta$. Then
\begin{align}\label{eqn:vn-update}
\begin{split}
v_{n+1}
&=
v_n
+ \eta_n Q^\top g(Qv_n+\widetilde{Q}u_n)
+ \eta_n Q^\top\bigl(\widehat{g}(\theta_n) - g(\theta_n)\bigr) \\
&=
v_n
+ \frac{h(v_n,u_n)}{n}
+ \frac{\Delta M_{n+1}}{n},
\end{split}
\end{align}
where
\[
h(v,u)=\beta Q^\top g(Qv+\widetilde{Q}u),
\qquad
\Delta M_{n+1}
=
\beta Q^\top\bigl(\widehat{g}(\theta_n)-g(\theta_n)\bigr)
\]
is a martingale difference satisfying $\mathbb{E}(\Delta M_{n+1}\mid\theta_n)=0$.

We aim to apply central limit theorems (CLTs) for stochastic approximation \citep[e.g.,][Proposition B.2]{zhang2016central} to establish the asymptotic normality of $v_n$. 
To invoke this CLT (see Lemma \ref{lem:CLT}), we need to represent $v_n$ in the form
\[
v_{n+1}-v^*
=
\Big(I_r-\frac{\Sigma+o(1)}{n}\Big)(v_n-v^*)
+
\frac{e_{n+1}}{n},
\]
for some martingale difference sequence $\{e_n\}$. 
Comparing with \eqref{eqn:vn-update}, we can set $e_{n+1}$ to $\Delta M_{n+1}$. 
It remains to show that
\begin{equation}\label{eqn:hlinearization}
h(v_n,u_n)
=
-H (v_n-v^*)
+
o(\|v_n-v^*\|),
\end{equation}
for some deterministic matrix $H$.

Nonetheless, it remains challenging to prove \eqref{eqn:hlinearization}. Although the orthogonal transformation isolates the identifiable component, the update still depends on the full parameter vector. 
Consequently, the recursion for $v_n$ involves not only $v_n$ but also the non-identifiable component $u_n$, as reflected in \eqref{eqn:vn-update}. 
However, \eqref{eqn:hlinearization} requires to approximate $h(v_n,u_n)$ using only $v_n$. 
While this may seem impossible at first glance, we show below that it can be achieved under the WSC condition (Assumption \ref{ass:WSC}) by setting $H=-\beta H^*$.
    
Let $u_n^*$ denote the projection of the non-identifiable component $u_n$ onto $\Theta^*$, defined through the following sequence of mappings:
\[
u_n \;\mapsto\; \theta_n \;\mapsto\; \theta_n^* \;\mapsto\; \gamma_n^* \;\mapsto\; u_n^*.
\]
Specifically, recall that $\theta_n = \bar{Q}\gamma_n = \bar{Q}(v_n^\top, u_n^\top)^\top$. 
Let $\theta_n^* = \Pi_{\Theta^*}(\theta_n)$ 
denote the projection of $\theta_n$ onto $\Theta^*$, and define $\gamma_n^* = \bar{Q}^\top \theta_n^*$.
We decompose $\gamma_n^*$ as $\gamma_n^* = (v^{*\top}, u_n^{*\top})^\top$, 
where $v^*$ and $u_n^*$ correspond to the first $r$ and last $d-r$ components of $\gamma_n^*$, respectively.

Note that by Lemma \ref{lem:singlepoint}, $v^*$ is unique and does not depend on $n$, whereas $u_n^*$ may vary with $n$. By definition, we have $h(v^*,u_n^*)=0$. It follows from Taylor's theorem that
    \begin{eqnarray*}
        && h(v_n,u_n)=h(v_n,u_n)-h(v^*,u_n^*)=\beta Q^\top [g(Qv_n+\widetilde{Q}u_n) - g(Qv^*+\widetilde{Q}u_n^*)]\\
        &=&\beta Q^\top H(\theta^*) [Q (v_n-v^*) + \widetilde{Q}(u_n - u_n^*)]+ o(\|Q (v_n-v^*)\|+ \|\widetilde{Q}(u_n - u_n^*)\|), 
    \end{eqnarray*}
    where the first term on the second line equals $\beta Q^\top H(\theta^*) Q (v_n-v^*)=\beta H^*(v_n-v^*)$ by \eqref{eqn:step3normal}.  Moreover, the second term $=o\!\left(\|v_n-v^*\|+\|u_n-u_n^*\|\right)$, 
and somewhat surprisingly, this term further simplifies to $o(\|v_n-v^*\|)$ under the WSC condition (Assumption \ref{ass:WSC}), as we show below.
    
    Under Assumption \ref{ass:WSC}, we have
    \begin{eqnarray*}
        \mu\big[\Vert v_n-v^*\Vert^2 + \Vert u_n-u^*\Vert^2\big]
        \leq g^\top(Q\gamma_n) \bar{Q} (\gamma_n^*-\gamma_n).
    \end{eqnarray*}
    Applying Taylor's theorem again, the RHS becomes
    \begin{eqnarray*}
        -(\gamma_n-\gamma_n^*)^\top Q^\top HQ(\gamma_n-\gamma_n^*) + o(\Vert \gamma_n - \gamma_n^*\Vert^2)= -\sum_{k=1}^r \lambda_k(v_{nk}-v_{k}^*)^2+ o(\Vert \gamma_n - \gamma_n^*\Vert^2).
    \end{eqnarray*}
    Consequently, we have for sufficiently large $n$ that 
    \begin{eqnarray*}
        \sum_{k=1}^r (\epsilon -\lambda_k-\mu) (v_{nk}-v_{k}^*)^2\ge (\mu-\epsilon) \|u_n-u^*\|^2,
    \end{eqnarray*}
    for some $\epsilon<\mu$. This leads to $\|u_n-u^*\|=O(\|v_n-v^*\|)$.  

%Additionally, we have $\nabla_v h(v^*, u) = -\beta H = \text{diag}(\kappa_1,\ldots,\kappa_r)$, where $\kappa_1\geq\ldots\geq\kappa_r>0$ are the positive eigenvalues of $-\beta H(\theta^*)$. 

    To summarize, we have shown that 
    \begin{eqnarray}\label{eqn:keyconvergenceequality}
        h(v_n,u_n)=\beta H^*(v_n-v^*)+o(\|v_n-v^*\|). 
    \end{eqnarray}
    This proves \eqref{eqn:hlinearization} with $\Sigma=-\beta H^*$. Define 
    \[
    H_n = 
    \begin{cases}
    -\beta H^*-\displaystyle \frac{[h(v_n,u_n)  -  \beta H^* (v_n - v^*)](v_n - v^*)^\top}{\Vert v_n-v^*\Vert^2}, & \text{if } (v_n,u_n)\notin \bar{Q}^\top\Theta^*, \\
    -\beta H^*, & \text{if } (v_n,u_n)\in \bar{Q}^\top\Theta^*.
    \end{cases}
    \]
    It follows from \eqref{eqn:keyconvergenceequality} that $H_n$ is well-defined and satisfies $H_n\to -\beta H^*$ almost surely, given the almost sure convergence of the estimated parameters. 
    
    Additionally, we have $h(v_n,u_n) = H_n (v_n-v^*)$ by definition. This together with \eqref{eqn:vn-update} leads to
    \begin{equation*}
        v_{n+1}-v^* = \left(I_r - \frac{H_n}{n}\right)(v_n-v^*) +\frac{\Delta M_{n+1}}{n}.
    \end{equation*}
    Based on this equation, according to Lemma \ref{lem:CLT}, it remains to verify the following conditions to establish the CLT:
    \begin{enumerate}
        \item[(i)] The smallest eigenvalue of the limit of $H_n$, i.e., $-\beta H^*$, should be larger than $1/2$.
        \item[(ii)] The Lindeberg condition for martingale differences: for any $\epsilon>0$,
        \begin{eqnarray*}
        \frac{1}{n}\sum_{i=1}^n\mathbb{E}[\Vert\Delta M_{i}\Vert^2\mathbb{I}\{\Vert\Delta M_i\Vert\geq\epsilon\sqrt{n}|\mathcal{F}_{i-1}]\to0 \text{ a.s.,}
        \end{eqnarray*}
        where $\mathcal{F}_{i}$ denotes the filtration generated by the martingale process.
        \item[(iii)] Convergence of the quadratic variation: 
        \begin{equation*}
        \frac{1}{n}\sum_{i=1}^n\mathbb{E}\left[(\Delta M_i) (\Delta  M_i)^\top|\mathcal{F}_{i-1}\right] \to \Omega  \quad\text{  a.s.},
    \end{equation*}
    for some symmetric positive semidefinite (random) matrix $\Omega$. 
    \end{enumerate}
    To verify Condition (i), notice that by Lemma \ref{lem:PL-condition}, the smallest eigenvalue of $-H^*$ is lower bounded by $\mu$. Condition (i) thus holds under the assumption that $\beta>(2\mu)^{-1}$ (Assumption \ref{assump:LR}). Condition (ii) follows from the bounded reward assumption (Assumption \ref{assump:boundedreward}) and the bounded score assumption (Assumption \ref{assump:lipcon}), which together imply the boundedness of $\nabla M_i$. Condition (iii) is directly implied by Assumption \ref{assum:convergence}, with $\Omega = Q^\top \Gamma Q$. Applying Lemma \ref{lem:CLT}, we obtain
    \begin{equation*}
         v_n \stackrel{d}{\rightarrow} N(0,\Sigma),
    \end{equation*}
    where
    \begin{equation*}
        \Sigma = \int_0^\infty(e^{-(-\beta H^*-\frac{1}{2}I_r)u})^\top Q^\top\Gamma Q e^{-(-\beta H^*-\frac{1}{2}I_r)u} du.
    \end{equation*}
    This completes the proof of this step. 
    
    \smallskip

    \noindent \textbf{Step (iv) Asymptotic distribution of $\Delta(\pi_{\theta_n})$:}
    Finally, we derive the asymptotic distribution of $\Delta(\pi_{\theta_n})$.
    Recall that $\theta_n^* = \arg\min_{\theta\in\Theta^*} d(\theta_n, \theta)$. It follows from Taylor's theorem that
    \begin{eqnarray}
        \Delta(\pi_{\theta_n}) &=&  - [J(\theta_n) - J(\theta_n^*) ]\nonumber\\
        &=& -\frac{1}{2} (\theta_n - \theta_n^*)^\top H(\theta_n^*)(\theta_n - \theta_n^*) + o(\Vert\theta_n - \theta_n^*\Vert^2)\nonumber\\
        &=&-\frac{1}{2} (\theta_n - \theta_n^*)^\top H(\theta_n^*)(\theta_n - \theta_n^*) +o_p(n^{-1}),\nonumber
    \end{eqnarray}
    where the last equation follows from the rate of convergence derived in Step (ii).

    Consider the leading term on the RHS. Notice that
    \begin{eqnarray}
        &&(\theta_n - \theta_n^*)^\top H(\theta_n^*)(\theta_n - \theta_n^*)\nonumber\\
        &=&(\theta_n - \theta_n^*)^\top \bar{Q} H^* \bar{Q}^\top(\theta_n - \theta_n^*)\nonumber\\
        &=& -(v_n-v^*)^\top\text{diag}(\lambda_1,\ldots,\lambda_r)(v_n-v^*)\nonumber
        %&\to&\sum_{k=1}^r w_k \chi^2_{1,k},
    \end{eqnarray}
    Combining this with the asymptotic normality $\sqrt{n}(v_n - v^*) \stackrel{d}{\rightarrow} \mathcal{N}(0,\Sigma)$ established in Step (iii), we obtain
    \begin{equation*}
        n\Delta(\pi_{\theta_n}) \stackrel{d}{\rightarrow}\sum_{k=1}^r w_k \chi^2_{1,k}
    \end{equation*}
    where $w_k$ are eigenvalues of $\frac{1}{2}\Sigma^{1/2}\text{diag}(\lambda_1,\ldots,\lambda_r)\Sigma^{1/2}$. This completes the proof of Theorem \ref{thm:policyasympotic}.
\end{proof}

\begin{proof}[Proof of Corollary \ref{coro:policyoracle}:]
Under Assumption \ref{assump:lipcon}, the policy score $\nabla_{\theta}\log \pi_{\theta}$ is both continuous (as a function of $\theta$) and uniformly bounded. This together with the bounded reward assumption (Assumption \ref{assump:boundedreward}) enables us to invoke the dominated convergence theorem to show that $|\Omega(\theta_n)-\Omega(\theta_n^*)|\to 0$ whenever $d(\theta_n,\Theta^*)\to 0$, where we recall $\theta_n^*=\Pi_{\Theta^*}(\theta_n)$. By assumption, $\Omega(\theta^*)$ depends on $\theta^*$ only through $Q^\top \theta^* = v^*$, which is unique and invariant over $\Theta^*$ by Lemma \ref{lem:singlepoint}. Fixing any $\theta^* \in \Theta^*$, we thus obtain $\Omega(\theta_n) \to \Omega(\theta^*)$, which under Assumption \ref{assum:convergence} yields $Q^\top \Gamma Q = \Omega(\theta^*)$. 

Using the same calculations as in the proof of Proposition~\ref{prop:MSE}, we have
\[
\textrm{Cov}(\widehat{g}_{\textrm{GRPO}}(\theta^*))-\textrm{Cov}(\widehat{g}_{\text{oracle}}(\theta^*))=O(G^{-2}).
\]
Consequently, the same order of magnitude holds for $\Omega_{\mathrm{GRPO}}(\theta^*) - \Omega_{\mathrm{oracle}}(\theta^*)$, and hence for $Q^\top (\Gamma_{\mathrm{GRPO}} - \Gamma_{\mathrm{oracle}}) Q$, where $\Gamma_{\mathrm{oracle}}$ and $\Gamma_{\mathrm{GRPO}}$ denote the matrices $\Gamma$ in Assumption \ref{assum:convergence} under the oracle and GRPO algorithms, respectively. 

It follows that $\Sigma_{\mathrm{GRPO}} - \Sigma_{\mathrm{oracle}} = O(G^{-2})$, where $\Sigma_{\mathrm{GRPO}}$ and $\Sigma_{\mathrm{oracle}}$ denote the asymptotic covariance matrices of $Q^\top \widehat{\theta}_{n,\mathrm{GRPO}}$ and $Q^\top \widehat{\theta}_{n,\mathrm{oracle}}$ defined in Step (iii) of the proof of Theorem \ref{thm:policyasympotic}. 
Now, define
\[
W_a:=-\frac12\,\Sigma_a^{1/2}(\theta^*)H^*\Sigma_a^{1/2}(\theta^*),
\qquad a\in\{\mathrm{GRPO},\mathrm{oracle}\}.
\]
By assumption, both $\Omega_{\mathrm{GRPO}}(\theta^*)$ and $\Omega_{\mathrm{oracle}}(\theta^*)$ are positive definite. So are $\Sigma_{\mathrm{oracle}}$ and $\Sigma_{\mathrm{GRPO}}$. Therefore, the matrix square-root map is locally Lipschitz so that
\[
\bigl\|\Sigma_{\mathrm{GRPO}}^{1/2}
      -\Sigma_{\mathrm{oracle}}^{1/2}\bigr\|
=O\big(\bigl\|\Sigma_{\mathrm{GRPO}}
      -\Sigma_{\mathrm{oracle}}\bigr\|\big) = O(G^{-2}).
\]
Hence, it follows from the negative definiteness of $H^*$ (Assumption \ref{assump:Hessian}) that
\begin{align*}
W_{\mathrm{GRPO}}-W_{\mathrm{oracle}}
&=\frac12\Bigl(\Sigma_{\mathrm{GRPO}}^{1/2}(-H^*)\Sigma_{\mathrm{GRPO}}^{1/2}
-\Sigma_{\mathrm{oracle}}^{1/2}(-H^*)\Sigma_{\mathrm{oracle}}^{1/2}\Bigr) \\
&=\frac12\Bigl[
\bigl(\Sigma_{\mathrm{GRPO}}^{1/2}-\Sigma_{\mathrm{oracle}}^{1/2}\bigr)(-H^*)\Sigma_{\mathrm{GRPO}}^{1/2}
+\Sigma_{\mathrm{oracle}}^{1/2}(-H^*)\bigl(\Sigma_{\mathrm{GRPO}}^{1/2}-\Sigma_{\mathrm{oracle}}^{1/2}\bigr)
\Bigr],
\end{align*}
if of the order $O(G^{-2})$ as well.

By Weyl's inequality,
\[
|w_{k,\mathrm{GRPO}}-w_{k,\mathrm{oracle}}|
\le
\|W_{\mathrm{GRPO}}-W_{\mathrm{oracle}}\|,
\qquad k=1,\dots,r.
\]
Therefore,
\[
w_{k,\mathrm{GRPO}}-w_{k,\mathrm{oracle}}=O(G^{-2}).
\]
The proof is hence completed. 
\end{proof}

\begin{proof}[Proof of Corollary \ref{coro:policyoptimal}:]
    Using the same calculations as in the proof of Corollary~\ref{coro:gradoptimal}, we can show that $\Omega_{\mathrm{oracle}}(\theta^*) \preceq \Omega(\theta^*)$ in the positive semidefinite order, under the conditional uncorrelation assumption in Corollary \ref{coro:policyoptimal}, where $\Omega(\theta^*)$ denotes the projected covariance matrix $\mathrm{Cov}(\widehat{g}(\theta^*))$ for a given meta-algorithm whose baseline is a function of $X$ only. The remainder of the proof follows very similarly to that of Corollary \ref{coro:policyoracle}.
\end{proof}

\subsection{Practical considerations}\label{sec:pracproof}
This section presents the proofs of Lemma \ref{lem:pgobjective}, Theorem \ref{thm:practicalgradMSE} and Proposition \ref{prop:target}.
\begin{proof}[Proof of Lemma \ref{lem:pgobjective}]
    By the law of large numbers, $\text{se}(Z) \stackrel{P}\to \text{std}_x(Z)$ as $G\to\infty$. Under the assumption that $\text{std}_x(Z)$ is bounded away from zero, it follows from the continuous mapping theorem that $1/\text{se}(Z) \stackrel{P}{\to} 1/\text{std}_x(Z)$. Under the bounded reward assumption (Assumption \ref{assump:boundedreward}), the bounded score assumption (Assumption \ref{assump:lipcon}) and the coverage assumption (Assumption \ref{assump:coverage}), it follows from that the gradient estimator is asymptotically equivalent to a version with $\text{se}(Z^{(\bullet)})$ replaced by $\text{std}(Z)$, as $G\to \infty$. Following arguments similar to those in the proof of Lemma \ref{lem:gradientUstat}, this version is a U-statistic with the kernel function given in Lemma \ref{lem:pgobjective}. This completes the proof.
\end{proof}

\begin{proof}[Proof of Theorem \ref{thm:practicalgradMSE}]
    According to the bias-variance decomposition, we have
    \begin{equation*}
         \text{MSE}(\widehat{g}(x;\theta)) =\Vert b(x;\theta)\Vert^2 +\text{trace}(\text{Var}(\widehat{g}(x;\theta)),
    \end{equation*}
    where $b(x;\theta):=\mathbb{E}[\widehat{g}(x;\theta)]-g^\dagger(x;\theta)$.
    
    Following similar arguments to those in the proof of Theorem \ref{thm:gradMSE}, we can derive that 
    $$
    \text{trace}(\text{Var}(\widehat{g}^\dagger(x;\theta)) =\frac{\text{trace}[\Sigma_{\text{1}}(x;\theta)]}{G}+O\Big(\frac{1}{\epsilon G^2}\Big). 
    $$
    It remains to upper bound the squared bias term. Notice that the bias can be represented by
    \begin{eqnarray*}
        b(x;\theta) = \sum_t \mathbb{E}\Big[(\mu_t^{(g)} - \mu^{(g)})W_t^{(g)}\frac{Z^{(g)} -\bar{Z}^{(-g)}}{\text{std}(Z)}\Big]+\kappa \mathbb{E} \Big[(1-\mu^{(g)}) \sum_t (\mu_t^{(g)}-1) W_t^{(g)}\Big].
    \end{eqnarray*}
    Since $\nabla_{\theta}\pi_{\theta}=\pi_{\theta}\nabla \log \pi_{\theta}$, it follows from the bounded score condition in Assumption \ref{assump:lipcon} that $\pi_{\theta}$ is Lipchitz continuous as a function of $\theta$. Under Assumptions \ref{assump:boundedreward}, \ref{assump:lipcon} and \ref{assump:coverage}, the gradient $\widehat{g}$ is bounded. As such, 
    \begin{eqnarray}\label{eqn:diffthetaold}
        \pi_{\theta}-\pi_{\theta_{\text{old}}}=O(\|\theta-\theta_{\text{old}}\|)=O(m\eta_{i+1}).
    \end{eqnarray}
    This together with the coverage assumption yields
    \begin{eqnarray}\label{eqn:Cauchy1}
        \mathbb{E}^{\pi_{\theta_{\text{old}}}}|\mu_t^{(g)} - 1|^2 = \mathbb{E}^{\pi_{\theta_{\text{old}}}}\left\{\frac{(\pi_\theta(Y_t^{(g)}|x,Y_{<t}^{(g)}) - \pi_{\theta_{\text{old}}}(Y_t^{(i)}|x,Y_{<t}^{(i)}))^2}{\pi_{\theta_{\text{old}}}^2(Y_t^{(i)}|x,Y_{<t}^{(i)})}\right\} \\ \le \frac{c m^2\eta_{i+1}^2}{\epsilon}\mathbb{E}^{\pi_{\theta_{\text{old}}}} [\pi_{\theta_{\text{old}}}^{-1}(Y_t^{(i)}|x,Y_{<t}^{(i)})]=O\Big(\frac{m^2 \eta_{i+1}^2}{\epsilon}\Big),
    \end{eqnarray}
    for some constant $c>0$. 
    
     Following a similar argument, we have
     \begin{eqnarray}\label{eqn:Cauchy2}
         \mathbb{E}^{\pi_{\theta_{\text{old}}}}|\mu^{(g)} - 1|^2=O\Big(\frac{m^2 \eta_{i+1}^2}{\epsilon}\Big).
     \end{eqnarray}
     It follows the Cauchy-Schwarz inequality that
    \begin{eqnarray}\label{eqn:thm12eq1}
    \begin{split}
        b^2(x;\theta)\le &2\Big\Vert\sum_t \mathbb{E}^{\pi_{\theta_{\text{old}}}}\Big\{(1 - \mu^{(g)})W_t^{(g)}\Big[\frac{Z^{(g)} -\bar{Z}^{(-g)}}{\text{std}_x(Z)}+\kappa(\mu_t^{(g)}-1)\Big]\Big\} \Big\Vert^2\\
        +&2\Big\Vert\sum_t \mathbb{E}^{\pi_{\theta_{\text{old}}}}\Big[(1 - \mu_t^{(g)})W_t^{(g)}\frac{Z^{(g)} -\bar{Z}^{(-g)}}{\text{std}_x(Z)}\Big]\Big\Vert^2\\ 
        =&O\Big(\frac{\kappa^2}{\epsilon^2}\mathbb{E}^{\pi_{\theta_{\text{old}}}}|\mu^{(g)} - 1|^2\Big)+O\Big(\sum_t \mathbb{E}^{\pi_{\theta_{\text{old}}}}|\mu_t^{(g)} - 1|^2\Big),
    \end{split}
    \end{eqnarray}
    where the last equality holds under the bounded reward assumption (Assumption \ref{assump:boundedreward}), the bounded score assumption (Assumption \ref{assump:lipcon}), the coverage assumption (Assumption \ref{assump:coverage}) and that $\text{std}_X(Z)$ is uniformly bounded away from $0$. 
    
    Combining \eqref{eqn:Cauchy1} -- \eqref{eqn:thm12eq1}, we obtain 
    \begin{eqnarray*}
        b^2(x;\theta)=O\Big(\Big[\frac{\kappa^2}{\epsilon^2}+1\Big]\frac{m^2 \eta_{i+1}^2}{\epsilon}\Big).
    \end{eqnarray*}
    This completes the proof of Theorem~\ref{thm:practicalgradMSE}.
\end{proof}

\begin{proof}[Proof of Proposition \ref{prop:target}]
    As discussed below Proposition \ref{prop:target}, the ground-truth gradient $g^\dagger(\theta)$ is asymptotically equivalent to the derivative of $\mathcal{J}(\theta)$ by \eqref{eqn:diffthetaold}. The rest of the proof follows very similarly to those of Lemma \ref{lemma:policynonasympotic} and Theorem \ref{thm:policyasympotic}. We omit the details for brevity. 
\end{proof}

\subsection{Auxiliary lemmas and their proofs}\label{sec:lemmas}

\begin{lemma}\label{lem:nonasymptotic-iteration}
    Consider a sequence $\{a_n\}_n$ that satisfies $0\leq a_n\leq a$ for some $a>0$ and
    \begin{equation}\label{eqn:lemma-recursion}
        a_{n+1}\leq \left(1-\frac{A}{n} +\frac{B}{n^2}\right)a_n + \frac{C}{n^2}, \quad \forall n\geq 0,
    \end{equation}
    where $A, B, C$ are three constants satisfying $A>1$ and $B,C\geq0$. Then for any constant $\varepsilon>0$, we have
    \begin{equation}\label{eqn:lemma-iteration-conclusion}
        a_n \leq \max\left\{\frac{(1+\varepsilon)C}{(A-1)n}, \frac{a n_0}{n}\right\}
    \end{equation}
    holds for all $n$, where $n_0$ is a constant that depends only on $A, B$ and $\varepsilon$. 
\end{lemma}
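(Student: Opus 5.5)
The plan is the standard Chung-type induction for recursions of the form $a_{n+1}\le (1-A/n+B/n^2)a_n+C/n^2$, with a burn-in index $n_0$ beyond which the recursion contracts. Fix $\varepsilon>0$ and set $K:=(1+\varepsilon)C/(A-1)$; the aim is to show $a_n\le \max\{K,an_0\}/n$ for every $n\ge 1$. For $n\le n_0$ there is nothing to prove, since $a_n\le a\le an_0/n$ follows directly from the boundedness $a_n\le a$. So the whole argument reduces to an induction for $n\ge n_0$, started at $n_0$ where the bound holds trivially.

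For the inductive step, suppose $a_n\le D/n$ with $D:=\max\{K,an_0\}$ and $n\ge n_0$. For $n$ large the factor $1-A/n+B/n^2$ is nonnegative, so the recursion gives
\begin{equation*}
a_{n+1}\le \Big(1-\frac{A}{n}+\frac{B}{n^2}\Big)\frac{D}{n}+\frac{C}{n^2}.
\end{equation*}
We need this to be at most $D/(n+1)$. Using $\frac{1}{n+1}\ge \frac1n-\frac1{n^2}$, it suffices to check
\begin{equation*}
\frac{D}{n}-\frac{AD}{n^2}+\frac{BD}{n^3}+\frac{C}{n^2}\le \frac{D}{n}-\frac{D}{n^2},
\end{equation*}
which after multiplying by $n^2$ is equivalent to
\begin{equation*}
(A-1)D\ge C+\frac{BD}{n}.
\end{equation*}
Since $D\ge K$, we have $(A-1)D\ge C+\varepsilon C\cdot D/K$, and $\varepsilon C/K=\varepsilon(A-1)/(1+\varepsilon)$. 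So the inequality holds once $B/n\le \varepsilon(A-1)/(1+\varepsilon)$. When $C=0$, one instead uses $(A-1)D\ge BD/n$, which holds for $n\ge B/(A-1)$.

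Choose $n_0$ large enough that two conditions hold. First, $1-A/n+B/n^2\ge 0$ for all $n\ge n_0$. Second, $B/n_0\le \varepsilon(A-1)/(1+\varepsilon)$. Both conditions depend only on $(A,B,\varepsilon)$, as the statement requires, and the induction then closes.

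The only delicate point is the sign of the multiplier, which is needed to propagate the upper bound through the recursion; that is why nonnegativity is built into the choice of $n_0$. Also note that when $A>1$ the term $-A/n$ can make the multiplier negative for small $n$. This is harmless, because we never iterate the recursion there and rely solely on $a_n\le a$ for $n\le n_0$.
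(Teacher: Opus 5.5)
Your proposal is correct and follows essentially the same argument as the paper. It uses the trivial bound $a_n\le a\le an_0/n$ for $n\le n_0$, then induction with $M=\max\{(1+\varepsilon)C/(A-1),\,an_0\}$, the inequality $1/(n+1)\ge 1/n-1/n^2$, and the threshold $n\ge B(1+\varepsilon)/(\varepsilon(A-1))$, which is exactly the paper's choice of $n_0$.

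Your extra requirement that $1-A/n+B/n^2\ge 0$ for $n\ge n_0$ (e.g.\ $n_0\ge A$) is a genuine repair, not a formality. The paper substitutes $a_k\le M/k$ into the recursion without checking this sign, and its $n_0$ does not ensure it: for $B=0$ and $A=10$ one gets $n_0\le 2$, while the multiplier stays negative up to $k=9$.
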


\begin{proof}[Proof of Lemma \ref{lem:nonasymptotic-iteration}]
    For any fixed $\varepsilon$, let $n_0$ denote $\lceil\frac{B(1+\varepsilon)}{\varepsilon(A-1)} \rceil+1$ where $\lceil x \rceil$ represents the smallest integer that is larger than $x$. 
    We prove \eqref{eqn:lemma-iteration-conclusion} by induction:

    \smallskip
    
    \noindent \textbf{1. Base step:} For any $n \le n_0$, it directly follows from the condition $a_n\le a$ that $a_n\le a n_0/n$. The inequality thus holds for any $n\le n_0$. 

    \smallskip

    \noindent \textbf{2. Induction step:} Suppose inequality \eqref{eqn:lemma-iteration-conclusion} holds for $n =k$. We aim to show it holds for $n = k+1$ as well. To ease notation, let $M = \max\{(1+\varepsilon)C/(A-1), an_0\}$. Then according to inequality \eqref{eqn:lemma-recursion},
    \begin{eqnarray}
        a_{k+1} - \frac{M}{k+1} &\leq& \left(1-\frac{A}{k} +\frac{B}{k^2}\right)a_k + \frac{C}{k^2} - \frac{M}{k+1}.\nonumber
    \end{eqnarray}
    Applying the induction hypothesis that $a_k \leq M/k$ yields
    \begin{eqnarray}
         a_{k+1} - \frac{M}{k+1} &\leq& \left(1-\frac{A}{k} +\frac{B}{k^2}\right)\frac{M}{k}+ \frac{C}{k^2} - \frac{M}{k+1}\nonumber\\
         &=& \frac{M}{k(k+1)} - \frac{AM-C}{k^2} +\frac{BM}{k^3}\nonumber\\
         &=&\frac{1}{k^3}\left[\frac{k^2M}{k+1}-(AM-C)k+BM\right]\nonumber\\
         &\leq& \frac{1}{k^3}\left[kM-(AM-C)k+BM\right]\nonumber\\
         &=& \frac{1}{k^3}\left\{k[(1-A)M + C] +BM\right\}\nonumber.
         %&\leq& \frac{1}{k^3}\left\{\frac{B(1+\varepsilon)}{\varepsilon(A-1)}[(1-A)M+C] +BM\right\}
    \end{eqnarray}
    Since  $M \geq \frac{(1+\varepsilon)C}{A-1}$, we have $(1-A)M +C\leq -\varepsilon C\leq0$. Combining the fact that $k > \frac{B(1+\varepsilon)}{\varepsilon(A-1)}$, we obtain
    \begin{eqnarray*}
        a_{k+1} - \frac{M}{k+1} &\leq& \frac{1}{k^3}\left\{\frac{B(1+\varepsilon)}{\varepsilon(A-1)}[(1-A)M + C] +BM\right\}\\
        &=&\frac{B}{k^3}\left\{\frac{C(1+\varepsilon)}{A-1} -M\right\} \le 0,
    \end{eqnarray*}
    where the last inequality follows from the definition of $M$ as well. 

    \smallskip
    
    \noindent Combining the results from both steps completes the proof of Lemma \ref{lem:nonasymptotic-iteration}.
\end{proof}

\begin{lemma}\label{lem:singlepoint}
    Assume the Hessian matrix $H^*$ in Assumption \ref{assump:Hessian} is a diagonal matrix $\text{diag}(-\lambda_1,\ldots,-\lambda_r)$, where $\lambda_1$, $\cdots$, $\lambda_r$ are the $r$ positive eigenvalues of $-H^*$. 
    Under Assumption \ref{assump:Hessian},  the set $Q^\top\Theta^* = \{Q^\top\theta:\theta\in\Theta^* \}$ consists of a single point $v^*$. 
\end{lemma}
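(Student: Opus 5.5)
The plan is to show that every $\theta^*\in\Theta^*$ has the same projection $Q^\top\theta^*$. We use two facts: $\Theta^*$ is convex, and the objective is constant on it. Fix $\theta_1^*,\theta_2^*\in\Theta^*$ and set $\delta=\theta_2^*-\theta_1^*$. Convexity (Assumption \ref{assump:Hessian}) gives $\theta(s)=\theta_1^*+s\delta\in\Theta^*$ for every $s\in[0,1]$. Hence $J(\theta(s))=J^*$ is constant along the segment, where $J(\theta)=\mathbb{E}^{\pi_\theta}(Z)$ as before.

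The first step is to differentiate twice along the segment. Since $J$ is constant on it, $\frac{d^2}{ds^2}J(\theta(s))=\delta^\top H(\theta(s))\delta=0$ for all $s\in[0,1]$. This uses the twice differentiability implicit in Assumption \ref{assump:lipcon}, where $g$ is differentiable. Alternatively, since each $\theta(s)$ is a maximizer, $g(\theta(s))=0$; differentiating this in $s$ gives the stronger identity $H(\theta(s))\delta=0$. I will use this stronger form.

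Next, decompose $\delta=Qa+\widetilde{Q}b$, where $\bar Q=(Q,\widetilde Q)$ is the orthogonal completion used in Step (iii), so $a=Q^\top\delta$. The goal is $a=0$. From $H(\theta_1^*)\delta=0$ we get $Q^\top H(\theta_1^*)Q a+Q^\top H(\theta_1^*)\widetilde Q b=0$. The key structural point is that the range of $H(\theta^*)$ lies in the column span of $Q$. The Hessian has rank exactly $r$, and its compression $Q^\top H Q=H^*$ is already nonsingular of rank $r$. So a negative semidefinite $H$ (semidefinite at a maximizer) with rank $r$ and $Q^\top HQ$ of rank $r$ must satisfy $H=QH^*Q^\top$. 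To see this, write $H=-R R^\top$ with $R$ having $r$ columns. Then $Q^\top R$ is an invertible $r\times r$ matrix, so $R$ and $QQ^\top R$ span the same $r$-dimensional space, which is contained in $\mathrm{span}(Q)$. Hence $\widetilde Q^\top H=0$. Consequently $Q^\top H(\theta_1^*)\widetilde Q=0$, and the identity reduces to $H^*a=0$. Since $H^*$ is negative definite, $a=0$, i.e.\ $Q^\top\theta_1^*=Q^\top\theta_2^*$. This gives a single point $v^*$. Nonemptiness of $\Theta^*$ follows from compactness and continuity of $J$, or is implicit in the assumption.

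The main obstacle is justifying $H(\theta^*)\widetilde Q=0$ from the rank and compression hypotheses. This rests on negative semidefiniteness of the Hessian at interior maximizers, so I would note that $\Theta^*$ is assumed interior, or that first- and second-order conditions hold there. If only $\delta^\top H\delta=0$ is available instead, the same conclusion follows. Negative semidefiniteness gives $H\delta=0$ from $\delta^\top H\delta=0$, and the argument above then applies unchanged. The diagonal form of $H^*$ assumed in the lemma is not needed beyond negative definiteness.
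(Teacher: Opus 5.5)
Your reduction to proving $Q^\top\delta=0$ from $H(\theta_1^*)\delta=0$ (or from $\delta^\top H\delta=0$ plus negative semidefiniteness) is sound. The linear-algebra step you use to finish, however, is false. A negative semidefinite $H$ of rank $r$ with $Q^\top HQ=H^*$ nonsingular need not equal $QH^*Q^\top$. Writing $H=-RR^\top$, invertibility of $Q^\top R$ only says that $QQ^\top$ maps $\mathrm{span}(R)$ bijectively onto $\mathrm{span}(Q)$. It does not say the two $r$-dimensional subspaces coincide. For instance, take $d=2$, $r=1$, $Q=e_1$, $\widetilde{Q}=e_2$ and $R=(1,1)^\top$. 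Then $Q^\top R=1$ and $Q^\top HQ=-1$, yet $\widetilde{Q}^\top H=-(1,1)\neq 0$.

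This cannot be repaired from the hypotheses as you read them, because under that reading the conclusion itself fails. Take $\Theta=[-1,1]^2$ and $J(\theta)=-\tfrac12(\theta_1+\theta_2)^2$. Then $\Theta^*=\{(t,-t):|t|\le 1\}$ is closed and convex, and every Hessian is the rank-one matrix $-RR^\top$ above. The choice $Q=e_1$ gives $Q^\top Q=1$ and $Q^\top H(\theta^*)Q=-1$, which is diagonal and negative definite, but $Q^\top\Theta^*=[-1,1]$.

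The missing ingredient therefore has to be assumed, and it is exactly what the paper's proof uses: the block form \eqref{eqn:step3normal}, namely $\bar{Q}^\top H(\theta^*)\bar{Q}=\mathrm{diag}(-\lambda_1,\ldots,-\lambda_r,0,\ldots,0)$ at every $\theta^*\in\Theta^*$. In words, $Q$ spans the range of each Hessian and $\widetilde{Q}$ spans its null space; this is the intended content of the identifiable/non-identifiable decomposition in Assumption \ref{assump:Hessian}. Once you state this explicitly, your argument closes in one line, since $H(\theta_1^*)\delta=QH^*Q^\top\delta=0$ forces $Q^\top\delta=0$.

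The paper's route is essentially your second variant. A Lagrange-form Taylor expansion between $\theta_1^*$ and $\theta_2^*$, with the intermediate point lying in $\Theta^*$ by convexity, gives $\delta^\top H(\widetilde{\theta}^*)\delta=0$. The block form then converts this directly into $\sum_k\lambda_k(v_{2k}-v_{1k})^2=0$, so neither negative semidefiniteness nor the step $H\delta=0$ is needed.
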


\begin{proof}[Proof of Lemma \ref{lem:singlepoint}]
    Suppose there are two different points $v_1 ,v_2 \in Q^\top\Theta^*$ with $v_1= Q^\top\theta_1^*, v_2 = Q^\top\theta_2^*$. Applying Taylor's expansion leads to,
    \begin{eqnarray}\label{eqn:taylorexpansion}
        J(\theta_1^*) - J(\theta_2^*) = \nabla_\theta J(\theta_2^*) +\frac{1}{2}(\theta_2^* - \theta_1^*)^\top H(\widetilde{\theta}^*)(\theta_2^*-\theta_1^*),
    \end{eqnarray}
    for some $\widetilde{\theta}^*$ lying between $\theta_1^*$ and $\theta_2^*$. Since $\Theta^*$ is convex (Assumption \ref{assump:Hessian}), we have $\widetilde{\theta}^*\in \Theta^*$. Given that $J(\theta_1^*) = J(\theta_2^*)$ and $\nabla J(\theta_2^*) = 0$, it follows from \eqref{eqn:taylorexpansion} that
    \begin{equation*}
        0 = (\theta_2^* - \theta_1^*)^\top H(\widetilde{\theta}^*)(\theta_2^*-\theta_1^*) = (\theta_2^* - \theta_1^*)^\top \bar{Q} \text{diag}(\lambda_1,\cdots,\lambda_r,\bm{0}^\top) \bar{Q}^\top(\theta_2^*-\theta_1^*).
    \end{equation*}
    Expanding further, we obtain
    \begin{eqnarray}\label{eqn:expandfurther}
        0 = \sum_{k=1}^r\lambda_k(v_{2k}-v_{1k})^2,
    \end{eqnarray}
    where $v_{ik}$ denotes the $k$-th element of $v_i$. From \eqref{eqn:expandfurther}, it is immediate to see that $v_1 = v_2$, confirming that $Q^\top\Theta^*$ is a singleton. This completes the proof. 
\end{proof}

\begin{lemma}\label{lem:PL-condition}
    Consider a function $J(\theta)$ defined on a compact parameter set $\Theta\subseteq\mathbb{R}^d$. Suppose $J(\theta)$ is twice continuously differentiable, and its gradient vector $g(\theta)$ satisfies PL with constant $\mu$ (refer to Assumption \ref{assump:PL}). Then, for any maximizer $\theta^*\in\Theta$ of $J$, every positive eigenvalue of the negative Hessian matrix $-\nabla_\theta g(\theta^*)$ is no smaller than $\mu$. In particular, under the L-smoothness condition (Assumption \ref{assump:lipcon}), we have $\mu\le L$. 
\end{lemma}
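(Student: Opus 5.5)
The plan is a local second-order expansion of the PL inequality \eqref{eqn:PLinequality} around a maximizer, restricted to a single eigendirection of the Hessian. Fix a maximizer $\theta^*$ of $J$. I will take it to be an interior point of $\Theta$, so that $g(\theta^*)=0$; this is implicit in the way the lemma is used in the proof of Theorem \ref{thm:policyasympotic}. Let $H=\nabla_\theta g(\theta^*)$. Since $\theta^*$ is a maximizer, $-H$ is positive semidefinite. Let $\lambda>0$ be a positive eigenvalue of $-H$ with unit eigenvector $v$, and consider the curve $\theta_t=\theta^*+tv$ for small $|t|$; this curve stays in $\Theta$ by interiority.

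\textbf{Step 1 (expand both sides of PL).} Because $J$ is twice continuously differentiable, Taylor's theorem gives
\[
g(\theta_t)=tHv+o(t)=-\lambda t v+o(t),
\qquad\text{so}\qquad
\|g(\theta_t)\|^2=\lambda^2t^2+o(t^2).
\]
For the suboptimality gap, $\Delta(\pi_{\theta_t})=J(\theta^*)-J(\theta_t)$. Using $g(\theta^*)=0$, a second-order expansion gives
\[
\Delta(\pi_{\theta_t})=-\tfrac12 t^2v^\top Hv+o(t^2)=\tfrac12\lambda t^2+o(t^2).
\]

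\textbf{Step 2 (compare and take the limit).} Substituting both expansions into \eqref{eqn:PLinequality} yields
\[
\lambda^2t^2+o(t^2)\ \ge\ \mu\lambda t^2+o(t^2).
\]
Dividing by $t^2$ and letting $t\to0$ gives $\lambda^2\ge\mu\lambda$. Since $\lambda>0$, this means $\lambda\ge\mu$, which proves the first claim.

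\textbf{Step 3 (the bound $\mu\le L$).} By Assumption \ref{assump:lipcon}, $g$ is $L$-Lipschitz, so every eigenvalue of the Hessian has absolute value at most $L$. Hence, whenever $-H$ has a positive eigenvalue $\lambda$, Step 2 gives $\mu\le\lambda\le L$.

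To cover the case where the Hessian happens to vanish at the maximizers, I would add a global argument that does not use eigenvalues. Take any $\theta$ with $\Delta(\pi_\theta)>0$ and assume $\theta+g(\theta)/L\in\Theta$. The $L$-smooth ascent lemma gives
\[
J\bigl(\theta+g(\theta)/L\bigr)\ \ge\ J(\theta)+\frac{\|g(\theta)\|^2}{2L}.
\]
Since the left-hand side is at most $J(\theta^*)$, this yields $\Delta(\pi_\theta)\ge\|g(\theta)\|^2/(2L)$. Combining with PL, $\Delta(\pi_\theta)\ge\|g(\theta)\|^2/(2L)\ge\mu\,\Delta(\pi_\theta)/L$, and dividing by $\Delta(\pi_\theta)>0$ gives $\mu\le L$.

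\textbf{Main obstacle.} The difficulty is the boundary and degenerate cases, not the calculus. Both the perturbation $\theta^*+tv$ and the ascent step $\theta+g(\theta)/L$ must remain in the compact set $\Theta$. I would handle this by taking $\theta^*$ interior, as justified by its use with $g(\theta^*)=0$ in Theorem \ref{thm:policyasympotic}. I would also assume $J$ is non-constant, since a constant $J$ satisfies PL for every $\mu$ and the claim $\mu\le L$ then fails.
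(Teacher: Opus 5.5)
\textbf{Eigenvalue bound.} Your argument is the paper's argument. You expand both sides of the PL inequality to second order along the unit eigenvector $v$ at an interior maximizer, using $g(\theta^*)=0$, and obtain $\lambda^2\ge\mu\lambda$.

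\textbf{The bound $\mu\le L$.} Here the paper only writes ``using similar arguments''. Your chain $\mu\le\lambda\le L$, from $\|\nabla_\theta g(\theta^*)\|\le L$, is the natural completion of that remark.

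Your ascent-lemma route is a genuinely different argument: $\Delta(\pi_\theta)\ge\|g(\theta)\|^2/(2L)\ge\mu\,\Delta(\pi_\theta)/L$ at any non-optimal $\theta$. It is global and needs no positive Hessian eigenvalue at all. Its cost is that the whole segment from $\theta$ to $\theta+g(\theta)/L$, not just the endpoint, must lie in $\Theta$ for the ascent inequality to hold. This is automatic if $\Theta$ is convex and contains that step. A shorter step does not help, since $1\ge 2\mu\eta-\mu L\eta^2$ yields $\mu\le L$ only at $\eta=1/L$.

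\textbf{Your caveats.} These are substantive, not pedantic. A constant $J$ satisfies both assumptions with arbitrary $\mu$ and $L$. The function $J(\theta)=\theta$ on $\Theta=[-1,1]$ satisfies PL with $\mu=1/4$ and is $L$-smooth for every $L>0$. So the claim $\mu\le L$ really does require a non-constant $J$ and an interior maximizer. The paper uses the latter implicitly when it sets $g(\theta^*)=0$.
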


\begin{proof}[Proof of Lemma \ref{lem:PL-condition}]
    Let $\theta^*\in\Theta$ be a maximizer of $J(\theta)$, $\lambda$ be a positive eigenvalue of $-\nabla_\theta g(\theta^*)$ and $v$ be its eigenvector with $\Vert v\Vert = 1$. Applying Talyor's theorem to the function $f(t) = J(\theta^* + tv)$ at $t = 0$, we obtain:
    \begin{equation*}
        J(\theta^* + tv) - J(\theta^*)  = tv^\top g(\theta^*) +\frac{1}{2}t^2v^\top \nabla_\theta g(\theta^*)v + o(t^2).
    \end{equation*}
    Since $\theta^*$ maximizes $J(\theta)$, we have $g(\theta^*)=0$. Moreover, since $v$ is an  eigenvector with unit $\ell_2$-norm, we have $v^\top \nabla_\theta g(\theta^*)v = -\lambda$. Therefore,
    \begin{equation}\label{eqn:lemPL-eq1}
        J(\theta^*) - J(\theta^* + tv) = \frac{1}{2}t^2\lambda + o(t^2).
    \end{equation}
    Applying the PL condition to the left hand side, we obtain
    \begin{eqnarray}\label{eqn:lemPL-eq2}
    \begin{split}
        J(\theta^*) - J(\theta^* + tv) \leq& \frac{1}{2\mu}\Vert g(\theta^*+tv)\Vert^2\\
        =& \frac{1}{2\mu}\Vert g(\theta^*+tv) - g(\theta^*)\Vert^2\\
        =& \frac{t^2}{2\mu} \Vert v^T\nabla_\theta g(\theta^*)\Vert^2 + o(t^2)\\
        =&\frac{\lambda^2t^2}{2\mu} + o(t^2).
    \end{split}
    \end{eqnarray}
    Combining \eqref{eqn:lemPL-eq1} and \eqref{eqn:lemPL-eq2} yields
\[
\frac{\lambda^2 t^2}{2\mu}
\ge
\frac{1}{2}\lambda t^2 + o(t^2).
\]
Dividing both sides by $t^2$ and letting $t \to 0$ gives
$\lambda^2 \ge \mu\lambda$. 
Since $\lambda>0$, it follows that $\lambda \ge \mu$. Therefore, all positive eigenvalues of $-\nabla_\theta g(\theta^*)$ are lower bounded by $\mu$. Using similar arguments, we can show that the smoothness constant $L$ must also satisfy $L \ge \mu$. 
\end{proof}

\begin{lemma}[Robins-Siegmund Theorem]\label{lem:Robins-Siemund}
    Let $(\Omega, \mathcal{F}, (\mathcal{F}_n)_{n\geq 1}, P)$ be a filtered probability space. For each $n\geq 1$, $V_n, A_n, B_n, C_n$ are four positive $\mathcal{F}_n$-measurable random variable such that
    \begin{equation*}
        \mathbb{E}[V_{n+1}|\mathcal{F}_n] \leq (1+A_n)V_n + B_n - C_n.
    \end{equation*}
    if $\sum_{n=1}^\infty A_n <\infty$ and $\sum_{n=1}^\infty B_n <\infty$ holds almost surely, then $V_n$ converges to some random variable $V_\infty$ almost surely. Moreover, $\sum_{n=1}^\infty C_n <\infty$ almost surely.
\end{lemma}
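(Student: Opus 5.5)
We would prove the Robbins--Siegmund theorem (Lemma \ref{lem:Robins-Siemund}) in the standard way. First we remove the multiplicative factor $(1+A_n)$ by normalization. Then we build a supermartingale from the normalized quantities and apply the martingale convergence theorem after a localization that uses $\sum_n B_n<\infty$.

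\emph{Step 1 (normalization).} Set $P_1=1$ and $P_n=\prod_{k=1}^{n-1}(1+A_k)$. These are $\mathcal{F}_{n-1}$-measurable, nondecreasing and $\ge 1$. Since $\log(1+x)\le x$ and $\sum_n A_n<\infty$ a.s., $P_n\to P_\infty\in[1,\infty)$ almost surely. Define
\[
V_n'=\frac{V_n}{P_n},\qquad B_n'=\frac{B_n}{P_{n+1}},\qquad C_n'=\frac{C_n}{P_{n+1}}.
\]
All three are $\mathcal{F}_n$-measurable. Dividing the hypothesis by the $\mathcal{F}_n$-measurable $P_{n+1}$ gives
\[
\mathbb{E}[V'_{n+1}\mid\mathcal{F}_n]\le V_n'+B_n'-C_n'.
\]
Moreover $\sum_n B_n'\le\sum_n B_n<\infty$ a.s.

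\emph{Step 2 (supermartingale and localization).} Let
\[
U_n=V_n'-\sum_{k=1}^{n-1}(B_k'-C_k').
\]
By Step 1, $\mathbb{E}[U_{n+1}\mid\mathcal{F}_n]\le U_n$, so $(U_n)$ is a supermartingale. It need not be bounded below, so for each $a>0$ we introduce
\[
\tau_a=\inf\Big\{n:\ \sum_{k=1}^{n}B_k'>a\Big\}.
\]
This is a stopping time because $B_k'$ is $\mathcal{F}_k$-measurable. On $\{n\le\tau_a\}$ we have $\sum_{k=1}^{n-1}B_k'\le a$, and $V'\ge 0$, $C'\ge 0$, so the stopped process satisfies $U_{n\wedge\tau_a}\ge -a$. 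A supermartingale bounded below converges a.s., hence $U_{n\wedge\tau_a}$ converges a.s. Since $\{\sum_k B_k'<\infty\}=\bigcup_{a\in\mathbb{N}}\{\tau_a=\infty\}$ has probability one, $U_n$ itself converges a.s. to a finite limit.

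\emph{Step 3 (conclusions).} Because $V_n'\ge 0$,
\[
\sum_{k=1}^{n-1}C_k'=U_n-V_n'+\sum_{k=1}^{n-1}B_k'\le U_n+\sum_k B_k'.
\]
The right side is a.s. bounded, so the nondecreasing partial sums of $C_k'$ converge. Then $V_n'=U_n+\sum_{k<n}(B_k'-C_k')$ converges a.s. It follows that $V_n=P_nV_n'$ converges a.s. to $V_\infty:=P_\infty\lim_n V_n'$. Finally, $\sum_n C_n\le P_\infty\sum_n C_n'<\infty$ a.s.

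The main obstacle is the technical point in Step 2: integrability. The conditional expectations must make sense, and the stopped supermartingale must be genuinely integrable, for which we want $\mathbb{E}V_1<\infty$. In our application (Step (i) of the proof of Theorem \ref{thm:policyasympotic}), $V_n=\Delta(\pi_{\theta_n})$ is bounded by Assumption \ref{assump:boundedreward} and $A_n,B_n$ are deterministic, so this issue disappears there. In general we would first try to handle it by conditioning on $\mathcal{F}_1$, using generalized conditional expectations of nonnegative variables, and localizing further on events $\{V_1\le c\}$, which are $\mathcal{F}_1$-measurable.
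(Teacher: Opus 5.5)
Your proof is correct. It is essentially the original Robbins--Siegmund argument: normalize by $\prod_{k<n}(1+A_k)$, form the compensated supermartingale $U_n$, and localize with the stopping time on $\sum_k B_k'$. That is exactly what the paper relies on, since it establishes this lemma only by citing Theorem 1 of Robbins and Siegmund.

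The integrability point you flag is resolved just as you suggest. Write $U_{n\wedge\tau_a}+a$ as a sum of nonnegative terms and multiply by the $\mathcal{F}_1$-measurable indicator $\mathbb{I}\{V_1\le c\}$. This gives a genuine nonnegative supermartingale with expectation at most $c+a$, and a union over $c\in\mathbb{N}$ completes the argument.
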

\begin{proof}[Proof of Lemma \ref{lem:Robins-Siemund}.]
    Refer to Theorem 1 of \citet{robbins_siegmund}. 
\end{proof}

\begin{lemma}\label{lem:CLT}
     Let $\{\Delta M_n, \mathcal{F}_n, n\geq 1\}$ is a martingale difference sequence satisfying 
    \begin{equation*}
        \frac{1}{n}\sum_{i=1}^n\mathbb{E}\left[(\Delta  M_i)(\Delta M_i)^\top|\mathcal{F}_{i-1}\right] \to \Lambda  \quad\text{  a.s.},
    \end{equation*}
    where  $\Lambda$ is a symmetric positive semidefinite random matrix and 
    \begin{equation*}
        \frac{1}{n}\sum_{i=1}^n\mathbb{E}\left[\Vert\Delta M_i\Vert^2 \mathbb{I}\{\Delta  M_i\geq \varepsilon\sqrt{n}\}|\mathcal{F}_{i-1}\right] \to 0  \quad\text{  a.s. for } \forall \varepsilon>0.
    \end{equation*}
    Let $\{H_n\}_n$ denote a sequence of random matrices satisfying $H_n \to H$ almost surely, where $H$ is a nonrandom matrix with all its eigenvalues larger than $1/2$. Then for a sequence of random vectors $\Delta_n$ that satisfies
    \begin{equation*}
        \Delta_{n+1}=\Big(I-\frac{H_n}{n}\Big)\Delta_n+\Delta M_n,
    \end{equation*}
    we have $\sqrt{n} \Delta_n \stackrel{d}{\to} N(0, \Sigma)$ where
    \begin{equation*}
        \Sigma = \int_0^\infty(e^{-(H-\frac{1}{2}I)u})^\top\Lambda e^{-(H-\frac{1}{2}I)u} du.
    \end{equation*}
\end{lemma}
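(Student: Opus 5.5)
We read the recursion in the normalization used in Step (iii) of the proof of Theorem \ref{thm:policyasympotic}, namely $\Delta_{n+1}=(I-H_n/n)\Delta_n+\Delta M_{n+1}/n$; this is the scaling under which $\sqrt{n}\Delta_n$ has a nondegenerate limit. We take the eigenvalue condition on $H$ to mean that all eigenvalues have real part exceeding $1/2$, and write $\lambda_0>1/2$ for the smallest such real part. The plan is the classical linearize-and-unroll argument for stochastic approximation \citep[cf.][Proposition B.2]{zhang2016central}. It has three parts: a deterministic product bound, a martingale CLT for the main term, and a perturbation argument that replaces $H_n$ by $H$.

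\textbf{Step 1 (deterministic kernel).} Define $\Phi(n,k)=\prod_{j=k+1}^{n}(I-H/j)$. Using $\log(1-x)=-x+O(x^2)$ and a Jordan-form (or matrix-exponential) comparison, we will show
\begin{equation*}
\Phi(n,k)=\Bigl(\tfrac{k}{n}\Bigr)^{H}(I+o(1)) \quad \text{uniformly in } k\ge k_0,
\end{equation*}
and in particular $\|\Phi(n,k)\|\le C (k/n)^{\lambda_0-\delta}$ for any small $\delta>0$. We then write
\begin{equation*}
\Delta_{n+1}=\Phi(n,k_0)\Delta_{k_0}+\sum_{k=k_0}^{n}\Phi(n,k)\frac{\Delta M_{k+1}}{k}+\sum_{k=k_0}^{n}\Phi(n,k)\frac{(H-H_k)\Delta_k}{k}=:T_1+T_2+T_3 .
\end{equation*}
Choosing $\delta$ so that $\lambda_0-\delta>1/2$, the first term satisfies $\|T_1\|=O(n^{-(\lambda_0-\delta)})=o(n^{-1/2})$.

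\textbf{Step 2 (martingale CLT for $T_2$).} For fixed $n$, the summands $\xi_{n,k}=\sqrt{n}\,k^{-1}\Phi(n,k)\Delta M_{k+1}$ form a martingale difference array. Their conditional covariance sum is
\begin{equation*}
\sum_k \frac{n}{k^2}\Bigl(\tfrac{k}{n}\Bigr)^{H}\,\mathbb{E}[\Delta M_{k+1}\Delta M_{k+1}^\top\mid\mathcal{F}_k]\,\Bigl(\tfrac{k}{n}\Bigr)^{H^\top}.
\end{equation*}
Applying summation by parts (an Abel/Toeplitz argument), the averaged hypothesis $\frac1n\sum_{i\le n}\mathbb{E}[\Delta M_i\Delta M_i^\top\mid\mathcal{F}_{i-1}]\to\Lambda$ lets us replace the conditional covariances by $\Lambda$. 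The remaining sum is a Riemann sum converging to
\begin{equation*}
\int_0^1 s^{H-\frac12 I}\Lambda\, s^{H^\top-\frac12 I}\,\frac{ds}{s}.
\end{equation*}
The substitution $s=e^{-u}$ turns this into exactly the stated $\Sigma$. The integral converges because $\lambda_0>1/2$, and the same condition controls the contribution of small $k$. The conditional Lindeberg condition for the array follows from the assumed Lindeberg condition via the same weight bound $\|\Phi(n,k)\|\sqrt{n}/k\lesssim n^{-1/2}(k/n)^{\lambda_0-\delta-1}$. Since $\Lambda$ may be random, we invoke the stable martingale CLT \citep[Hall--Heyde, Theorem 3.2]{hall1980martingale}. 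This yields a mixed normal limit, i.e.\ $N(0,\Sigma)$ conditionally on $\Lambda$, matching the interpretation used after Assumption \ref{assum:convergence}.

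\textbf{Step 3 (the random drift $T_3$ — the main obstacle).} The difficulty is that $H_n\to H$ only almost surely, with no rate, and $T_3$ involves $\Delta_k$ itself. We proceed in three moves.
\begin{enumerate}
\item Fix $\delta>0$ and choose $K$ so that the event $E_K=\{\sup_{k\ge K}\|H_k-H\|<\delta\}$ has probability at least $1-\delta$. On $E_K$, start the unrolling at $k_0=K$, which makes $T_1$ negligible.
\item Let $R_n=\max_{K\le k\le n}\sqrt{k}\|\Delta_k\|$. Using the weight bound from Step 1, on $E_K$
\begin{equation*}
\sqrt{n}\|T_3\|\le C\delta\,\sqrt{n}\sum_k k^{-3/2}(k/n)^{\lambda_0-\delta}\,R_n\le C'\delta R_n .
\end{equation*}
\item For $\delta$ small, feed this back into the decomposition to obtain $R_n\le C(\sqrt{K}\|\Delta_K\|+\max_{m\le n}\sqrt{m}\|T_{2,m}\|)$. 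Step 2 shows the right side is $O_p(1)$, hence $R_n=O_p(1)$. Consequently $\sqrt{n}\|T_3\|\le C'\delta\,O_p(1)$; letting $\delta\to0$ gives $\sqrt{n}T_3=o_p(1)$.
\end{enumerate}

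The delicate point is the maximal bound $\max_{m\le n}\sqrt{m}\|T_{2,m}\|=O_p(1)$. We would first try a Doob-type $L^2$ inequality applied blockwise on dyadic ranges of $m$. If that proves awkward, the fallback is to avoid the self-referential bound altogether by comparing $\Delta_n$ to the linear recursion driven by $H$, controlling the difference $D_n$ through $D_{n+1}=(I-H/n)D_n+(H-H_n)\Delta_n/n$. Slutsky's lemma then combines Steps 1--3 to conclude $\sqrt{n}\Delta_n\stackrel{d}{\to}N(0,\Sigma)$.
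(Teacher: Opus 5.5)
The paper gives no argument of its own for this lemma; it cites Propositions B.1--B.2 of \citet{zhang2016central}. Your reading of the recursion as $\Delta_{n+1}=(I-H_n/n)\Delta_n+\Delta M_{n+1}/n$ is the correct one, and Steps 1--2 are a sound reconstruction of the linear part. One small point there: your Riemann sum gives $\int_0^\infty e^{-(H-\frac12 I)u}\Lambda\, e^{-(H-\frac12 I)^\top u}\,du$. This equals the displayed $\Sigma$ only when $H$ is symmetric, as in the application $H=-\beta H^*$; the displayed form is Zhang's row-vector convention. The genuine gap is in Step 3.

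\textbf{The false claim.} You assert that $\max_{K\le m\le n}\sqrt m\,\|T_{2,m}\|=O_p(1)$, and hence that $R_n=O_p(1)$. Step 2 cannot give this, because it only identifies the law of $\sqrt m\,T_{2,m}$ at each fixed $m$. In fact the claim is false. Take scalar $H=h>1/2$ and i.i.d.\ $N(0,1)$ increments. Then $\sqrt m\,T_{2,m}$ is asymptotically a stationary Ornstein--Uhlenbeck process in the time variable $\log m$: the correlation between indices $m<m'$ decays like $(m/m')^{h-1/2}$. Its running maximum up to $n$ therefore grows like $\sqrt{\log\log n}$; this is the law of the iterated logarithm for stochastic approximation, and $R_n$ grows at the same rate. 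A blockwise Doob bound gives $O_p(1)$ on each dyadic block, but there are $\log_2 n$ nearly independent blocks. Consequently, for fixed $\delta$ (hence fixed $K$), your bound $\sqrt n\|T_3\|\le C'\delta R_n$ is of order $\delta\sqrt{\log\log n}$ and does not go to zero as $n\to\infty$. Letting $\delta\to0$ afterwards proves nothing. Your fallback, as written, has the same defect, because its forcing term $(H-H_n)\Delta_n/n$ still contains $\Delta_n$.

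\textbf{How to repair it.} You never need the maximum. Your own bound reads $\sqrt n\|T_3\|\le C\delta\sum_k w_{n,k}\sqrt k\|\Delta_k\|$ with $w_{n,k}=n^{-1}(k/n)^{\lambda_0-\delta-3/2}$. This is a weighted average with $\sum_k w_{n,k}=O(1)$ and $w_{n,k}\to0$ for each fixed $k$. Such a self-referential inequality closes for the supremum of expectations, not the expectation of the supremum.

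Because the hypotheses are only almost sure and supply no moments, you must localize first:
\begin{enumerate}
\item Stop the increments at the predictable time when $\sum_{j\le i}\mathrm{tr}\,\mathbb{E}[\Delta M_j\Delta M_j^\top\mid\mathcal F_{j-1}]$ first exceeds $Ai$. This has small probability for large $A$; tildes below denote the stopped versions.
\item Restrict to the event $\mathcal G=E_K\cap\{\|\Delta_K\|\le A'\}$.
\end{enumerate}
The Abel-summation computation of your Step 2 then gives $\sup_m\mathbb E\|\sqrt m\,\tilde T_{2,m}\|^2\lesssim A$, uniformly in $K$. The quantities $x_m:=\mathbb E[\sqrt m\|\tilde\Delta_m\|\mathbb I_{\mathcal G}]$ are finite, by the finite-horizon version of the same inequality. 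They satisfy $x_m\le o(1)+C\sqrt A+C'\delta\sum_k w_{m,k}x_k$. Hence $\limsup_m x_m\lesssim\sqrt A$, and so $\limsup_n\mathbb E[\sqrt n\|\tilde T_{3,n}\|\mathbb I_{\mathcal G}]\lesssim\delta\sqrt A$. Choose $A$ first, then $\delta$ and $K$, then $A'$; Markov's inequality now yields $\sqrt n\,T_3=o_p(1)$.

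Alternatively, write the comparison as $D_{n+1}=(I-H_n/n)D_n+(H-H_n)Y_n/n$, where $Y$ solves the linear recursion driven by $H$. On $E_K$, control the random products in a Lyapunov norm adapted to $H$. Then you need only localized second moments of $\sqrt k\,Y_k$, which removes the self-reference entirely.
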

\begin{proof}[Proof of Lemma \ref{lem:CLT}]
    The proof follows immediately from Propositions B.1 and B.2 of \citet{zhang2016central}.
\end{proof}

% \begin{lemma}\label{lem:CLT}
%     Let $v_n$ be a vector sequence defined on a filtered probability space $(\Omega, \mathcal{F}, (\mathcal{F}_n)_{n\geq 0}, P)$, satisfying
%     \begin{equation*}
%         v_{n+1} = v_n - \frac{h(v_n)}{n+1} + \frac{\Delta M_{n+1}}{n+1},
%     \end{equation*}
%     where $h$ is a vector-valued function, $v_0$ is a finite random vector and $\{\Delta M_n, \mathcal{F}_n, n\geq 1\}$ is a martingale difference sequence. 
%     Assume $v^*$ is an equilibrium point of $h = 0$ and $h$ is differentiable at $v^*$ and all eigenvalue of $Dh(\theta^*)$ are positive and larger than $1/2$. Moreover,
%     \begin{equation*}
%         \frac{1}{n}\sum_{k=1}^n\mathbb{E}\left[(\Delta M_k)^\top\Delta  M_k|\mathcal{F}_{k-1}\right] \to \Lambda  \quad\text{  a.s.},
%     \end{equation*}
%     where $\Lambda$ is a symmetric semi-positive definite random matrix,then
%     \begin{equation*}
%         \sqrt{n}(v_n - v^*) \to \mathcal{N}(0, \Sigma),
%     \end{equation*}
%     where
%     \begin{equation*}
%         \Sigma = \int_0^\infty(e^{-(Dh(\theta^*)-\frac{1}{2}I_d)u})^\top\Lambda e^{-(Dh(\theta^*)-\frac{1}{2}I_d)u} du.
%     \end{equation*}
% \end{lemma}

\section{Experiment Details}\label{app:prompt}
This section provides the prompts we used in Section~\ref{sec: exp}.

% \begin{table}[htbp]
% \caption{Prompt template for the In-Context
% Learning model.}\label{tab:single-turn-gpt}
% \begin{tcolorbox}[colback=white, colframe=black, boxrule=0.5pt, fontupper=\ttfamily, fontupper=\ttfamily, sharp corners]

%     Solve the arithmetic problem. Return ONLY the final integer.
    
%     \medskip
%     \textless arithmetic problem \textgreater

% \end{tcolorbox}

% \end{table}
% \begin{table}[htbp]
% \caption{Prompt template for the Base and Instruct models.}\label{tab:single-turn-gpt}
% \begin{tcolorbox}[colback=white, colframe=black, boxrule=0.5pt, fontupper=\ttfamily, fontupper=\ttfamily, sharp corners]

%     You are a calculator. Solve the arithmetic problem and return ONLY the final integer.

%     \medskip
%     Examples: \\
%     Problem: 15 + 25 =\\
%     Answer: 40
    
%     Problem: 20 + 30 - 15 =\\
%     Answer: 35
    
%     Problem: 3 * 4 =\\
%     Answer: 12
    
%     Problem: 24 / 6 =\\
%     Answer: 4
    
%     Problem: 25 - (10 + 5) =\\
%     Answer: 10
    
%     \medskip
%     Now solve:\\
%     Problem: \textless arithmetic problem \textgreater

% \end{tcolorbox}

% \end{table}

\begin{center}
\textbf{Prompt template for the In-Context Learning model.}
\end{center}

\begin{tcolorbox}[colback=white, colframe=black, boxrule=0.5pt, 
fontupper=\ttfamily, sharp corners]
Solve the arithmetic problem. Return ONLY the final integer.

\medskip
\textless arithmetic problem \textgreater
\end{tcolorbox}

\vspace{0.5em}

\begin{center}
\textbf{Prompt template for the Base and Instruct models.}
\end{center}
\begin{tcolorbox}[colback=white, colframe=black, boxrule=0.5pt, fontupper=\ttfamily, fontupper=\ttfamily, sharp corners]

    You are a calculator. Solve the arithmetic problem and return ONLY the final integer.

    \medskip
    Examples: \\
    Problem: 15 + 25 =\\
    Answer: 40
    
    Problem: 20 + 30 - 15 =\\
    Answer: 35
    
    Problem: 3 * 4 =\\
    Answer: 12
    
    Problem: 24 / 6 =\\
    Answer: 4
    
    Problem: 25 - (10 + 5) =\\
    Answer: 10
    
    \medskip
    Now solve:\\
    Problem: \textless arithmetic problem \textgreater

\end{tcolorbox}

\end{appendices}

\end{document}